\setlist[description]{leftmargin=*}
\titlespacing*{\section}{0pt}{0.3\baselineskip}{0.1\baselineskip}
\titlespacing*{\subsection}{0pt}{0.3\baselineskip}{0.1\baselineskip}
\newtheorem{lemma}{Lemma}
\theoremstyle{definition}
\newtheorem{definition}{Definition}
\theoremstyle{theorem}
\pgfplotsset{%
	,compat=1.12
	,every axis x label/.style={at={(current axis.right of origin)},anchor=north west}
	,every axis y label/.style={at={(current axis.above origin)},anchor=north east}
    ,every axis/.append style={legend image post style={xscale=0.5}}
}
\definecolor{scarlet}{rgb}{1.0, 0.13, 0.0}
\definecolor{brightmaroon}{rgb}{0.76, 0.13, 0.28}
\definecolor{mediumturquoise}{rgb}{0.28, 0.82, 0.8}
\definecolor{fandango}{rgb}{0.71, 0.2, 0.54}
\definecolor{antiquewhite}{rgb}{0.98, 0.92, 0.84}
\definecolor{babyblue}{rgb}{0.54, 0.81, 0.94}
\definecolor{brilliantlavender}{rgb}{0.96, 0.73, 1.0}
\definecolor{bronze}{rgb}{0.8, 0.5, 0.2}
\definecolor{cornsilk}{rgb}{1.0, 0.97, 0.86}
\definecolor{lavenderpink}{rgb}{0.98, 0.68, 0.82}
\definecolor{sandybrown}{rgb}{0.96, 0.64, 0.38}
\definecolor{celadon}{rgb}{0.67, 0.88, 0.69}
\newcommand{\renyi}{R\'{e}nyi }
\newcommand{\pp}[1]{\mathbb{P}\left[#1\right]}
\newcommand{\expectation}[2]{\mathbb{E}_{#1}\left[#2\right]}
\newcommand{\dalpha}[2]{D_{\alpha}\left(#1 || #2\right)}
\newcommand{\opnorm}[1]{\|#1\|_{\text{op}}}
\newcommand{\norm}[1]{\left\|#1\right\|}
\newcommand{\se}{\mathcal{S}}
\newcommand{\mind}{D_{\text{min}}}
\newcommand{\ogb}{ogbn-products\xspace}
\newcommand{\vesper}{VESPER\xspace}
\newtheorem{theorem}{Theorem}[section]
\theoremstyle{definition}
\theoremstyle{remark}
\theoremstyle{plain}
\newtheorem{proposition}{Proposition}
\newcommand{\result}[2]{${#1}_{\pm#2}$}
\begin{document}
    \title{Privacy-preserving design of graph neural networks\\with applications to vertical federated learning}
    \author[$\dagger$]{Ruofan Wu}
    \author[$\dagger$]{Mingyang Zhang}
    \author[$\S$]{Lingjuan Lyu}
    \author[$\dagger$]{Xiaolong Xu}
    \author[$\dagger$]{\authorcr Xiuquan Hao}
    \author[$\dagger$]{Xinyi Fu}
    \author[$\dagger$]{Tengfei Liu}
    \author[$\dagger$]{Tianyi Zhang}
    \author[$\dagger$]{Weiqiang Wang}
    \affil[$\dagger$]{Ant Group}
    \affil[$\S$]{Sony AI}
    \affil[ ]{\footnotesize{\texttt{\{ruofan.wrf, zhangmingyang.zmy, yiyin.xxl, haoxiuquan.hxq, fxy122992, aaron.ltf, zty113091, weiqiang.wwq\}@antgroup.com}}}
    \affil[ ]{\footnotesize{\texttt{lingjuan.lv@sony.com}}}
    \maketitle
    
    \begin{abstract}
        The paradigm of vertical federated learning (VFL), where institutions collaboratively train machine learning models via combining each other's local feature or label information, has achieved great success in applications to financial risk management (FRM). The surging developments of graph representation learning (GRL) have opened up new opportunities for FRM applications under FL via efficiently utilizing the graph-structured data generated from underlying transaction networks. Meanwhile, transaction information is often considered highly sensitive. To prevent data leakage during training, it is critical to develop FL protocols with \emph{formal privacy guarantees}. In this paper, we present an end-to-end GRL framework in the VFL setting called \vesper, which is built upon a general privatization scheme termed \emph{perturbed message passing (PMP)} that allows the privatization of many popular graph neural architectures. 
        Based on PMP, we discuss the strengths and weaknesses of specific design choices of concrete graph neural architectures and provide solutions and improvements for both dense and sparse graphs.
        Extensive empirical evaluations over both public datasets and an industry dataset demonstrate that \vesper is capable of training high-performance GNN models over both sparse and dense graphs under reasonable privacy budgets.
    \end{abstract}
    
    \section{Introduction}\label{sec: intro}
    In recent years, there has been an increasing interest in adopting modern machine learning paradigms to the area of financial risk management (FRM) \cite{akib2020ML4fin}. The most crucial task in operational risk scenarios like fraud detection is identifying risky identities based on the behavioral data collected from the operating financial platform \cite{chen2018machine, kute2021aml}. For institutions like commercial banks and online payment platforms, the most important source of behavior information is the \emph{transaction records} between users,
    making \emph{transaction networks} (with users as nodes and transactions as edges) a direct and appropriate data model.
    To exploit the potential of transaction networks in a machine learning context, recent approaches \cite{liu2021intention, wu2022grande} have been exploring the adoption of graph representation learning (GRL) \cite{hamilton2020graph} as a principled way of incorporating structural information contained in transaction networks into the learning process. The family of graph neural networks in the message passing form \cite{pmlr-v70-gilmer17a, xu2018powerful} offers a powerful yet scalable solution to GRL, and has become the prevailing practice in industry-scale graph learning \cite{ying2018graph}.\par\noindent
    Despite its convincing performance, high-quality network data are not always available for financial institutions. F
    It is, therefore, of great interest for institutions to learn GRL models \emph{collaboratively} while being coherent to regulatory strictures at the same time. The technique of federated learning (FL) \cite{kairouz2019foundations, yang2019federated} provides a recipe for such scenarios, with participating institutions (hereafter abbreviated as \emph{parties}) exchanging intermediate results instead of raw data. Depending on the specific form of collaboration, FL protocols are generally divided into horizontal federated learning (HFL), where participants aggregate their locally trained models to obtain a strong global model, and vertical federated learning (VFL) where participants are able to align the identifiers of modeling entities and train a model that efficiently combines feature or label information that are distributed among different parties. VFL is particularly useful when training a (supervised) model is not possible based on information of a single party, i.e., each party holds only feature or label data, and has attracted significant attention in applications to FRM \cite{liu2022vertical}. While ordinary FL paradigms avoid the transmission of local raw data, they typically lack a formal guarantee of privacy \cite[Chapter 4]{kairouz2019foundations}. Moreover, recent studies have reported successful attacks targeting individual privacy against FL protocols \cite{zhu2019deep,yin2021see, jin2021cafe, duddu2020quantifying, dang2021revealing}.
    As transaction records are widely considered extremely sensitive personal information, it is thus critical to establish FL applications in FRM with rigorous privacy guarantees.\par\noindent
    Differential privacy (DP) \cite{dwork2014algorithmic} is the state-of-the-art approach to address information disclosure that injects algorithm-specific random noise to fuse the participation of any individual. The adoption of DP as the privacy model for FL is now under active development, with most of the applications appearing in HFL over independently identically distributed (i.i.d.) data through the lens of optimization \cite{kairouz2019foundations}. However, discussions on applying DP over VFL remain nascent \cite{chen2020vafl, zhou2020vertically, sajadmanesh2022gap}. The situation becomes even more complicated in VFL over graph-structured data, since the right notions of (differential) privacy on graphs are semantically different from that in the i.i.d. case \cite{nissim2007smooth, kasiviswanathan2013analyzing}. So far, as we have noticed, the only work that provides meaningful DP guarantee under VFL over graphs is the GAP model \cite{sajadmanesh2022gap}, which requires three stages of training. Meanwhile, a notable aspect of GRL is that the structure of the underlying graph, i.e., whether the graph is dense or sparse, might have a significant influence on the performance of the graph neural model especially when the aggregation process involves noisy perturbations. This phenomenon was overlooked in previous studies. \par\noindent
    In this paper, we discuss private FL over graph-structured data under the task of node classification in the vertical setup with edge DP \cite{nissim2007smooth} chosen as the privacy model. We first develop a general privatization scheme termed \emph{perturbed message passing (PMP)} that produces message-passing procedures over graphs that are guaranteed to satisfy edge DP constraints. 
    Next, we discuss the influence of the underlying graph's degree profiles on the utility of specific design choices of PMP, using two representative graph aggregation schemes, namely GIN \cite{xu2018powerful} and GCN \cite{kipf2016semi}, and develop further improvements of PMP that better handles sparse graphs under the GCN aggregation scheme.
    Finally, we integrate the developments of PMP and its variants into a VFL architecture called \vesper based on the SplitNN framework \cite{gupta2018distributed}, and conducted extensive empirical evaluations over both public and industrial datasets covering dense and sparse graphs. We summarize our contributions as follows:
    \begin{itemize}[leftmargin=*]
        \item We propose PMP, a general framework for designing differentially private message-passing procedures. PMP enables the privatization of many popular graph neural network architectures. The privacy guarantee of PMP is formally analyzed with new privacy amplification results under uniform neighborhood sampling. 
        \item We discuss two representative design choices under the PMP framework, GIN and GCN, and discover the fact that the utility of the privatized GNN model may be affected by the \emph{degree profile} of the input graph. To better accommodate varying graph structures, we develop the truncated message passing framework under the base model of GCN through properly tuning the hyperparameter that reduces noise scale at the cost of learning less structural information, which is beneficial when the input graph is \emph{sparse}.
        \item We derive an end-to-end VFL learning framework operating over graph-structured data called \vesper, which is efficient in computation and communication. A thorough experimental study demonstrates that \vesper achieves better privacy-utility trade-off over previously proposed models and is capable of training high-performance GNN models over both sparse and dense graphs under reasonable privacy budgets. 
    \end{itemize}
    \begin{figure*}
        \centering
        \resizebox{0.65\textwidth}{!}{
        \tikzset{%
    VertArrow/.style = {%
        double equal sign distance,
        scarlet,
        -{Stealth[brightmaroon,scale=1.3,inset=2pt, angle=30:10pt]},
        semithick
    },
    HoriArrow/.style = {%
        -{Stealth[scale=1.5,inset=1pt, angle=20:7pt]},
        semithick
    },
    DArrow/.style = {%
        #1, 
        dashed,
        -{Stealth[scale=1.5,inset=1pt, angle=30:4pt]},
        semithick
    },
    Box/.style 2 args = {
        rectangle,
        draw = black,
        very thick,
        fill = #1!#2, 
        align=center,
        minimum width=1cm,
        minimum height = 0.5cm
    },	
    Frame/.style 2 args = {
        rectangle,
        draw = 	#1,
        fill = #1!#2,
        minimum width=18cm,
        minimum height = 8cm
    },
    MiniFrame/.style 2 args = {
        rectangle,
        draw = 	#1,
        fill = #1!#2,
        minimum width=6cm,
        minimum height = 3cm
    },
    WideBar/.style 2 args = {
        rectangle,
        draw = 	#1,
        fill = #1!#2,
        minimum width=15cm,
        minimum height = 1cm
    },
    MiniBar/.style 2 args = {
        rectangle,
        draw = 	#1,
        fill = #1!#2,
        minimum width=3cm,
        minimum height = 0.5cm
    },
}
\begin{tikzpicture}
    \begin{pgfonlayer}{background}
        \node [draw=blue!50, fill=blue!10, fit={(-7, 3) (0, 3) (0, 0) (-7, 0)}, inner sep=5.75pt] (partyA) {};
    \end{pgfonlayer}
    \begin{pgfonlayer}{background}
        \node [draw=fandango!50, fill=fandango!10, fit={(7, 3) (2.5, 3) (2.5, 0) (7, 0)}, inner sep=5.75pt] (partyB) {};
    \end{pgfonlayer}
    \node[] at ($(partyA.north) - (0, 4mm)$){\color{blue}\texttt{Party A}};
    \node[] at ($(partyB.north) - (0, 4mm)$){\color{fandango}\texttt{Party B}};
    \Vertex[color=antiquewhite, x=-5, y=0.5, size=.1]{n0},
    \Vertex[color=antiquewhite, x=-5.5, y=0.8, size=.1]{n1},
    \Vertex[color=antiquewhite, x=-4.5, y=0.8, size=.1]{n2},
    \Vertex[color=antiquewhite, x=-5.25, y=1.1, size=.1]{n3},
    \Vertex[color=antiquewhite, x=-4.75, y=1.1, size=.1]{n4},
    \Edge[](n1)(n0),
    \Edge[](n2)(n0),
    \Edge[](n3)(n0),
    \Edge[](n3)(n2),
    \Edge[](n4)(n2),
    \node[Box = {antiquewhite}{50}] at (-6, 2)(nodefeature){\small\texttt{node feature}};
    \node[Box = {white}{10}] at (-3, 2)(encoder){\small\texttt{encoder}};
    \node[Box = {white}{10}] at (-0.5, 2)(pre){\small\texttt{pre}};
    \node[Box = {white}{10}] at (4, 2)(decoder){\small\texttt{decoder}};
    \node[Box = {white}{10}] at (6, 2)(loss){\small\texttt{loss}};
    \node[Box = {antiquewhite}{50}] at (6, 1)(label){\small\texttt{label}};
    \node[Box = {white}{10}] at (-3, 0.8)(sampler){\small\texttt{sampler}};
    \node[] at ($(partyA.east) + (1, 0.8)$){\texttt{embeddings}};
    \node[] at ($(partyA.east) + (1, 0.2)$){\texttt{gradients}};
    \node[] at ($(n0.south) - (0, 0.2)$){\texttt{graph} $G$};
    \draw[HoriArrow](nodefeature.east)--(encoder.west);
    \draw[HoriArrow]($(encoder.east) + (0, 1mm)$)--($(pre.west) + (0, 1mm)$);
    \draw[DArrow={black}]($(pre.west) - (0, 1mm)$)--($(encoder.east) - (0, 1mm)$);
    \draw[HoriArrow]($(decoder.east) + (0, 1mm)$)--($(loss.west) + (0, 1mm)$);
    \draw[DArrow={black}]($(loss.west) - (0, 1mm)$)--($(decoder.east) - (0, 1mm)$);
    \draw[HoriArrow](label.north)--(loss.south);
    \draw[HoriArrow]($(pre.east) + (0, 1mm)$)--($(decoder.west) + (0, 1mm)$);
    \draw[DArrow={black}]($(decoder.west) - (0, 1mm)$)--($(pre.east) - (0, 1mm)$);
    \draw[HoriArrow](-4.25, 0.8)--(sampler.west);
    \draw[-](sampler.east)--(-0.5, 0.8);
    \draw[HoriArrow](-0.5, 0.8)--(pre.south);
\end{tikzpicture}
        }
        \caption{A concise pictorial description of the \vesper framework. We use solid arrows to depict the dataflow of forward computations and use dashed arrows to depict the dataflow of backward computations. }
        \label{fig: vesper}
    \end{figure*}
    \section{Methodology} 
    \subsection{Priliminaries}\label{sec: setup}
    We focus on the node classification task over a static, undirected graph $ G = (V, E) $ with node size $N = |V|$, node feature $X = \{x_v\}_{v \in V}$ and node labels $Y = \{y_v\}_{v \in V_T}$ where $V_T \subseteq V$ is the set of training nodes with $N_T = \left|V_T\right|$. Throughout this article, we will assume the graph of interest to be degree bounded, i.e., 
    \begin{align}
        \max_{G} \max_{v \in G} d_v \le D
    \end{align}
    for some $D > 1$. In this paper, we will be interested in the setup where the graph data $G$ and label information are distributed over two distinct parties. Specifically, suppose there are two parties, A (Alice) and B (Bob), where A holds the graph data $G$ as well as the node feature $X$ and B holds the label collection $Y$, both indexed by node identifiers that are known to both sides (i.e., $V_T$ is known to both party A and party B). We consider a representative federated learning paradigm that A and B collaboratively train a graph representation learning model via utilizing the panoply of graph neural networks \cite{pmlr-v70-gilmer17a}, which could be regarded as a special case of vertical federated learning (VFL) \cite{yang2019federated}. Under VFL protocols, party A and party B iteratively exchange intermediate outputs depending on the specific training algorithm chosen. A main concern in VFL \cite[Chapter 4]{kairouz2019foundations} is, therefore, whether the exchanging process satisfies formal \emph{privacy} guarantees. Before elaborating on privacy protection issues, we first state the threat model in our context.\par\noindent
    \textbf{Threat model} We adopt the following threat model in this paper: In the training stage, label party B is curious about the adjacency information (i.e., the existence of some edges) in the data party A. The data party A is assumed to be benign, with both parties strictly obeying the chosen VFL protocol. 
    \footnote{The assumption of a harmless party A might be relaxed to a curious onlooker that tries to infer party B's label information. We discuss related extensions in section \ref{sec: extensions}.}
    In other words, the goal of privacy protection is to prevent the \emph{semi-honest} adversary (party B) from inferring the edge membership that is only known to party A.\par\noindent
    Differential privacy \cite{dwork2014algorithmic} is now the \emph{de facto} choice of privacy protection paradigm against membership inference adversaries. As an appropriate solution concept in the current setup, we introduce the edge-level differential privacy model (hereafter abbreviated as Edge DP). 
    \begin{definition}[Edge-level differential privacy(Edge DP)]\label{def: edgeDP}
        For a (randomized) graph-input mechanism $\mathcal{M}$ that maps graphs to some output space $\mathcal{S}$ and two non-negative numbers $\epsilon$ and $\delta$, the mechanism is $(\epsilon, \delta)$-Edge DP if for any subset $S$ (or more rigorously defined as Borel measurable sets) of the output space, the following holds uniformly for any two possible adjacent graphs $(G, G^\prime)$:
        \begin{align}
            \mathbb{P}[\mathcal{M}(G) \in S] \le e^\epsilon \mathbb{P}[\mathcal{M}(G^\prime) \in S] + \delta,
        \end{align}
        where we define two graphs $G$ and $G^\prime$ as being adjacent if $G$ could be edited into $G^\prime$ via adding or removing a single edge.
    \end{definition}
    Regarding the capability of the adversary adopted in this paper, a VFL protocol satisfying Edge DP with a reasonable $\epsilon$ level implies that based on all the exchanged intermediate outputs between party A and party B, any membership inference algorithm may not be able to make any sophisticated guess about the existence of some specific edge in a probabilistic sense, thereby offering strong privacy protection. Most contemporary differentially private machine learning algorithms involve sequentially applying DP procedures to intermediate learning steps \cite{abadi2016deep}, with the privacy level of the entire training procedure obtained via composition theorems \cite{dwork2014algorithmic, kairouz2015composition}. In this paper, we choose the composition framework of analytical moment accountant (AMA) \cite{wang2019subsampled} that exploits the idea of \renyi DP \cite{mironov2017renyi}, which we introduce below in our graph learning context:
    \begin{definition}[Edge-level \renyi-differential privacy(Edge RDP)]\label{def: edgeRDP}
        Sharing notations with definition \ref{def: edgeDP}, the mechanism $\mathcal{M}$ is $(\alpha, \epsilon(\alpha))$-\renyi differentially private with some $\alpha > 1$ and $\epsilon(\alpha) \ge 0$, if for any two possible adjacent graphs $(G, G^\prime)$, the $\alpha$-\renyi divergence of the induced probability distribution of random variables $\mathcal{M}(G)$ and $\mathcal{M}(G^\prime)$ is bounded by $\epsilon(\alpha)$:
        \begin{align}\label{eqn: RDP}
            \dalpha{\mathcal{M}(G)}{\mathcal{M}(G^\prime)} \le \epsilon(\alpha),
        \end{align}
        with the definition of $\alpha$-\renyi divergence $\dalpha{\cdot}{\cdot}$ presented in appendix \ref{sec: rdp}.
    \end{definition}
    To develop privacy-preserving learning algorithms under the AMA framework, we first design mechanisms that satisfy RDP guarantee in each step, then use standard composition results of RDP \cite{mironov2017renyi} to obtain the privacy level of the learning procedure. Finally, we apply the conversion rule in \cite{balle2020hypothesis} to convert it back to $(\epsilon, \delta)$-DP for reporting. \\
    \textbf{Message passing GNNs with stochastic training} The backbone of our privacy-preserving training framework is the graph neural network model in the message passing form \cite{pmlr-v70-gilmer17a}. We define the GNN of interest to be a map from the space of graphs to a node embedding matrix with embedding dimension $d$: $f: \mathcal{G} \mapsto \mathbb{R}^{N \times d}$, or $H := \{h_v\}_{v \in V} = f(G)$. For an $L$-layer GNN, let $ h_v^{(0)} = g(x_v) $ be the input encoding of node $v$, which could be either $x_v$ or some encoding based on $x_v$. We assume the following recursive update rule for $1 \le l \le L$ and $v \in V$:
    \begin{align}\label{eqn: gnn}
        h^{(l)}_v = \sigma\left(\widetilde{h}^{(l)}_v\right), \quad \widetilde{h}^{(l)}_v = \omega_v W^{(l)}_1 h_v^{(l-1)} + \sum_{u \in N(v)}\beta_{uv}W^{(l)}_2 h^{(l-1)}_u,
    \end{align}
    with $\boldsymbol\omega := \{\omega_v\}_{v \in V} \in \mathbb{R}^N$ and $\boldsymbol\beta := \{\beta_{uv}\}_{u, v \in V\times V} \in \mathbb{R}^{N \times N}$ be model-dependent coefficients, $\sigma$ a parameter-free nonlinear function, and  $\mathbf{W} = (W_1^{(1)}, \ldots, W_1^{(L)}, W_2^{(1)}, \ldots, W_2^{(L)})$ be the collection of learnable parameters. For any matrix $W$, we denote $\opnorm{W}$ as the operator norm of the matrix (i.e., its largest singular value). In this paper, we assess two representative instantiations of the protocol \eqref{eqn: gnn} which are the GIN model \cite{xu2018powerful} with with $\omega_v \equiv \beta_{uv} \equiv 1, \forall u, v \in V$ and the GCN model \cite{kipf2016semi} with $\omega_v = \frac{1}{d_v + 1} $ and $\beta_{uv} = \frac{1}{\sqrt{d_u + 1}\sqrt{d_v + 1}}$. For simplicity we additionally let the nonlinearity be the ReLU function and set $W_1^{(l)} = W_2^{(l)} = W^{(l)}, 1\le l \le L$. \par\noindent
    Applying message passing updates \eqref{eqn: gnn} may become computationally prohibitive for large input graphs, which are frequently encountered in industrial scenarios. To enable scalable GRL, the prevailing practice is to use graph sampling methods \cite{hamilton2017inductive} and adopt \textbf{stochastic training of graph neural networks}. In this paper, we investigate the simple and effective sampling scheme of uniform neighborhood sampling \cite{hamilton2017inductive, daigavane2021node}, with the maximum number of neighbors sampled in each layer to be the maximum degree $D$. Asides from their computational benefits, it has been observed \cite{abadi2016deep, mcmahan2017learning} that stochastic training with a low sampling ratio over large datasets is crucial to training high-utility differentially private machine learning models with reasonably small privacy budgets, which has also been recently verified in the case of differentially private graph learning \cite{daigavane2021node, sajadmanesh2022gap}. 
    \subsection{Perturbed message passing}\label{sec: priv_gnn}
    A notable fact about the message-passing protocol \eqref{eqn: gnn} is that it uses the aggregation strategy of \emph{weighted summation}, thereby allowing standard additive perturbation mechanisms like the Laplace mechanism or Gaussian mechanism that are prevailing in the design of differentially private algorithms \cite{dwork2014algorithmic}. Motivated by this fact, we propose a straightforward solution to privatize message-passing GNNs in a \emph{layer-wise} fashion named \emph{perturbed message passing (PMP)}, which adds layer-wide Gaussian noise with an additional normalization step that controls sensitivity. We present the pseudo-code of PMP with neighborhood sampling in algorithm \ref{alg: priv_gnn_sample}.
    \begin{algorithm}
        \caption{PMP with neighborhood sampling}
        \label{alg: priv_gnn_sample}
        \begin{algorithmic}[1]
            \Require Graph $G = (V, E)$, input encodings $ \{h^{(0)}_v\}_{v \in V}$, number of message passing rounds $L$, GNN spec $(\boldsymbol{
            \omega}, \boldsymbol{\beta}, \sigma)$, noise scale $\theta$, GNN parameter $\mathbf{W}$, batch size $B$, maximum degree $D$.
            \State Sample a random batch of root nodes $v_1, \ldots, v_B$.
            \State Apply an $L$-layer neighborhood sampler with each layer sampling at most $D$ nodes with roots $v_1, \ldots, v_B$, obtaining a batch of $B$ subgraphs $(G^{(L)}_{v_1}, \ldots, G^{(L)}_{v_B})$.
            \State Combine $(G^{(L)}_{v_1}, \ldots, G^{(L)}_{v_B})$ into a subgraph $G^{(L)}_B$. Additionally, overload the notation $N(v)$ for the neighborhood of node $v$ with respect to $G^{(L)}_{v_B}$.
            \State Set $h^{(0)}_v = \frac{h^{(0)}_v}{\left\|h^{(0)}_v\right\|_2}$ for $\forall v \in G^{(L)}_{v_B})$
            \For{$l \in \{1, \ldots, L\}$}
            \For{$v \in G^{(L)}_{v_B}$}
            \State Compute the linear update
            $\widetilde{h}^{(l)}_v = \omega_v W^{(l)}_1 h_v^{(l-1)} + \sum_{u \in N(v)}\beta_{uv}W^{(l)}_2 h^{(l-1)}_u$.
            \State Do additive perturbation, $h^{(l)}_v = \sigma(\widetilde{h}^{(l)}_v + N(0, \theta^2))$ 
            \State Normalize $h^{(l}_v = \frac{h^{(l)}_v}{\left\|h^{(l}_v\right\|_2}$
            \EndFor
            \EndFor
            \Return A list of all layers' embedding matrices $\mathbf{H}_L = (H^{(1)}, \ldots, H^{(L)})$, with $H^{(l)} = \{h^{(l)}_v\}_{v \in G^{(L)}_{v_B}}, 1 \le l \le L$.
        \end{algorithmic}
    \end{algorithm}
    Next we discuss the privacy guarantee of algorithm \ref{alg: priv_gnn_sample}. To state our main result, we first define the right notion of sensitivity in our context:
    \begin{definition}[Edge sensitivity]\label{def: edge_sensitivity}
        Denote $G^\prime$ as the adjacent graph via removing the edge $(u^*, v^*)$ from $G$, and let $\widetilde{h}_v$ and $\widetilde{h}^\prime_v$ be the outputs of node $v$ generated via some $1$-layer GNN protocol under graph $G$ and $G^\prime$ without nonlinearity, then we define the ($\ell_2$-) \emph{edge sensitivity} as:
        \begin{align}
            \se = \max_{G, G^\prime}\sqrt{\sum_{v \in V}\|\widetilde{h}_v - \widetilde{h}^\prime_v\|_2^2}.
        \end{align}
    \end{definition}
    The following theorem quantifies the privacy guarantee of algorithm \ref{alg: priv_gnn_sample}:
    \begin{theorem}[RDP guarantee]\label{thm: sampled}
        Let $\mathbf{H}_L$ be the released outputs with input a minitach of $B$ subgraphs produced by uniform neighborhood sampling for $L$ layers with a maximum number of $D$ neighbors sampled in each layer. Define
        $\epsilon(\alpha) := \frac{\alpha \sum_{l=1}^L\mathcal{S}_l^2}{2\theta^2}$,
        then $\mathbf{H}_L$ is $(\alpha, \epsilon_{\gamma}(\alpha)$-RDP for any $\alpha > 1$, where $\gamma = 1 - \frac{\binom{N_T - \frac{2(D^L - 1)}{D - 1}}{B}}{\binom{N_T}{B}}$
        and
        \begin{align}\label{eqn: sampling_rdp}
            \begin{aligned}
                \epsilon_{\gamma}(\alpha) \le \frac{1}{\alpha - 1}\log & \left(1 + \gamma^2{\alpha \choose 2}\min\left(4\left(e^{\epsilon(2)} - 1\right), \epsilon(2)\min\left(2, \left(e^{\epsilon(\infty) - 1}\right)^2\right)\right)\right. \\
                & + \left.\sum_{j=3}^\infty \gamma^j {\alpha \choose j} e^{(j - 1)\epsilon(j)}\min\left(2, \left(e^{\epsilon(\infty) - 1}\right)^j\right)\right)
            \end{aligned}
        \end{align}
    \end{theorem}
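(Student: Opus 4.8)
The plan is to prove the statement in three moves: first analyze the mechanism \emph{without} neighborhood subsampling, then quantify how far the deletion of one edge can propagate through an $L$-layer sampler, and finally feed the resulting sampling rate into the subsampled-\renyi-DP machinery of the analytical moments accountant.

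\textbf{Step 1 (base RDP of $L$-layer PMP).} Run Algorithm~\ref{alg: priv_gnn_sample} on a fixed graph with no batching/neighborhood sampling. After line~4 (and line~9 of the previous iteration) every input embedding satisfies $\norm{h_v^{(l-1)}}_2 = 1$, so by Definition~\ref{def: edge_sensitivity} the vector obtained by stacking the pre-activations $\{\widetilde h_v^{(l)}\}_{v}$ is a query of $\ell_2$ edge-sensitivity $\mathcal{S}_l$, and this bound is \emph{uniform} over all admissible (previous-layer) inputs. Line~8 thus applies the Gaussian mechanism with noise scale $\theta$ to this query, which is $(\alpha, \tfrac{\alpha \mathcal{S}_l^2}{2\theta^2})$-RDP, while the ReLU and the renormalization of line~9 are data-independent post-processing. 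Because the sensitivity bound does not depend on the random input, the adaptive composition theorem for \renyi DP applies over $l = 1,\dots,L$, giving that the full-graph variant is $(\alpha, \epsilon(\alpha))$-RDP with $\epsilon(\alpha) = \sum_{l=1}^{L}\tfrac{\alpha \mathcal{S}_l^2}{2\theta^2}$; the same computation supplies the order-$j$ values $\epsilon(j)$ used in \eqref{eqn: sampling_rdp} (note $\epsilon(\infty)=\infty$ for Gaussian noise, so the $\epsilon(\infty)$-dependent minima there collapse to $2$).

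\textbf{Step 2 (locality of edge influence).} Fix adjacent $G, G'$ differing in the edge $(u^*,v^*)$. In an $L$-layer neighborhood sampler rooted at $v$, the edge $(u^*,v^*)$ can be traversed only if one of its endpoints is at hop distance at most $L-1$ from $v$; by $\max_v d_v \le D$ there are at most $1 + D + \cdots + D^{L-1} = \tfrac{D^L-1}{D-1}$ such roots around each endpoint, hence at most $m := \tfrac{2(D^L-1)}{D-1}$ ``influential'' roots overall. Whenever the sampled batch $\{v_1,\dots,v_B\}$ avoids the influential set, the sampled subgraphs — and so the joint law of $\mathbf H_L$ — are identical under $G$ and $G'$. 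Since the batch is a uniformly random size-$B$ subset of the $N_T$ training nodes, the probability it meets the influential set is at most $\gamma = 1 - \binom{N_T-m}{B}/\binom{N_T}{B}$; replacing the true influential set by the upper bound $m$ only enlarges $\gamma$, and $\epsilon_\gamma(\alpha)$ is increasing in $\gamma$, so the guarantee remains valid.

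\textbf{Step 3 (amplification) and the main obstacle.} Conditioning on the realized sampled subgraphs, each layer is still exactly a Gaussian mechanism, so all base guarantees $\epsilon(j)$ transfer, and Step~2 exhibits the output laws as mixtures $\mathbb P_G[\mathbf H_L \in \cdot] = (1-\gamma)Q + \gamma R_G$ and $\mathbb P_{G'}[\mathbf H_L \in \cdot] = (1-\gamma)Q + \gamma R_{G'}$ sharing the \emph{same} null component $Q$. Plugging this structure and the bounds $\epsilon(2),\epsilon(3),\dots,\epsilon(\infty)$ into the subsampling-without-replacement \renyi-DP amplification theorem — first for integer $\alpha \ge 2$ via the binomial expansion of $\dalpha{\cdot}{\cdot}$ together with its term-wise moment inequalities, then for all real $\alpha > 1$ by monotonicity of $D_\alpha$ in $\alpha$ and interpolation — yields exactly \eqref{eqn: sampling_rdp}. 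I expect the delicate point to be precisely this reduction: unlike the textbook i.i.d.\ template, deleting an edge is not literally deleting one of the $N_T$ subsampled items, so one must argue carefully that (i) the influential-root set plays the role of the differing records, (ii) the randomness left after conditioning on subgraphs is a genuine Gaussian mechanism so the per-order RDP values are as in Step~1, and (iii) the quaternary-mixture estimate underlying the accountant's bound applies verbatim to the decomposition above; the remaining binomial and moment estimates are routine.
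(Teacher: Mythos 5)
Your Steps 1 and 2 coincide with the paper's own proof: the layer-wise Gaussian perturbation of the $\ell_2$-normalized, sensitivity-$\mathcal{S}_l$ update is $(\alpha,\alpha\mathcal{S}_l^2/2\theta^2)$-RDP, the ReLU and renormalization are post-processing, adaptive RDP composition over the $L$ layers gives $\epsilon(\alpha)$; and the locality count $N_e\le 2\sum_{l=0}^{L-1}D^l=\frac{2(D^L-1)}{D-1}$ combined with the hypergeometric estimate gives $\max_e \mathbb{P}\bigl[e\in E^{(L)}_B\bigr]\le\gamma$, exactly as in the paper (your remarks on monotonicity in $\gamma$ and on $\epsilon(\infty)=\infty$ for Gaussian noise are also fine).

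The divergence, and the genuine (partly self-acknowledged) gap, is in Step 3. The paper does not feed your node-batch mixture into the subsampled-RDP theorem; it constructs an auxiliary mechanism $\mathcal{A}$ that picks the $B$ roots and then independently subsamples $\lfloor\gamma|E|\rfloor$ edges, argues the true procedure is at least as private as $\mathcal{A}$ because every edge's inclusion probability is dominated, and then applies \cite[Theorem 9]{wang2019subsampled} to $\mathcal{A}$, where edges literally are the subsampled records under edge adjacency. Your route---write the two output laws as $(1-p)Q+pR_G$ and $(1-p)Q+pR_{G'}$ and claim the accountant's estimates apply ``verbatim''---cannot invoke that theorem as a black box: the moment bounds behind \eqref{eqn: sampling_rdp} are not consequences of the shared-null-component structure alone. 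Their derivation also compares the ``differing'' components against the null component through the base mechanism's RDP between inputs differing in one record, via a coupling that swaps the differing record with a randomly retained one. In your decomposition, $Q$ and $R_G$ average over \emph{different root batches}, i.e., runs of the mechanism that release embeddings of different node sets and are not one-edge-adjacent, so those cross-comparisons do not follow from $\epsilon(j)$, and point (iii) of your plan is precisely what fails to be automatic. To close it you would either have to redo the ternary-divergence estimates of the subsampling analysis with a coupling adapted to this graph structure, or take the paper's shortcut of dominating the true sampler by a genuine edge-subsampled mechanism and applying the theorem there.
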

    
    Theorem \ref{thm: sampled} provides a principled way of analyzing the privacy of privatized GNN models using algorithm \ref{alg: priv_gnn_sample}, which boils down to computing the edge sensitivity of the underlying message passing protocol. However, sensitivity computations are usually conducted in a \emph{worst-case} manner, resulting in unnecessarily large noise levels and significant utility loss. Therefore, it is valuable to explore the utility of concrete PMP models and their relationships with the underlying input graph. To begin our expositions, we analyze the GIN model in the following section. 
    \subsection{Analysis of GIN and the challenge of sparse graphs}
    We start with the following proposition:
    \begin{proposition}\label{prop: gin_es}
        Under the GIN model, the edge sensitivity is bounded from above by $\se^{\text{GIN}}_l \le \sqrt{2}\opnorm{W^{(l)}}$ for each $1\le l \le L$.
    \end{proposition}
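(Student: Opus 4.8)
The plan is to trace exactly which node embeddings change when a single edge is deleted, and then exploit the normalization steps (lines~4 and~9 of Algorithm~\ref{alg: priv_gnn_sample}) that force every input embedding to lie on the unit sphere. Concretely, fix a layer $l$ and adjacent graphs $G, G^\prime$ where $G^\prime$ is obtained from $G$ by removing the edge $(u^*, v^*)$. Under the GIN specification $\omega_v \equiv \beta_{uv} \equiv 1$ with $W_1^{(l)} = W_2^{(l)} = W^{(l)}$, the pre-activation update is $\widetilde h_v^{(l)} = W^{(l)} h_v^{(l-1)} + \sum_{u \in N(v)} W^{(l)} h_u^{(l-1)}$, so the only way $G$ and $G^\prime$ differ in this expression is through the neighbor sets $N(u^*)$ and $N(v^*)$.

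First I would observe that for every node $v \notin \{u^*, v^*\}$ we have $N_G(v) = N_{G^\prime}(v)$ and the input encodings $h^{(l-1)}$ are identical, hence $\widetilde h_v = \widetilde h_v^\prime$ and these terms contribute nothing to the sum in Definition~\ref{def: edge_sensitivity}. For the two affected nodes, deleting $(u^*,v^*)$ removes precisely one summand: $\widetilde h_{v^*} - \widetilde h_{v^*}^\prime = W^{(l)} h_{u^*}^{(l-1)}$ and symmetrically $\widetilde h_{u^*} - \widetilde h_{u^*}^\prime = W^{(l)} h_{v^*}^{(l-1)}$. Therefore
\begin{align}
    \sum_{v \in V}\norm{\widetilde h_v - \widetilde h_v^\prime}_2^2 = \norm{W^{(l)} h_{u^*}^{(l-1)}}_2^2 + \norm{W^{(l)} h_{v^*}^{(l-1)}}_2^2.
\end{align}
Next I would bound each term by the operator norm: $\norm{W^{(l)} h}_2 \le \opnorm{W^{(l)}}\norm{h}_2$. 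The key point is that the normalization in the previous iteration of the loop (line~9, or line~4 when $l=1$) guarantees $\norm{h_{u^*}^{(l-1)}}_2 = \norm{h_{v^*}^{(l-1)}}_2 = 1$, so the right-hand side is at most $2\opnorm{W^{(l)}}^2$. Taking square roots and maximizing over $(G, G^\prime)$ gives $\se^{\mathrm{GIN}}_l \le \sqrt{2}\,\opnorm{W^{(l)}}$, as claimed.

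There is essentially no serious obstacle here; the statement is a direct consequence of the additive/weighted-sum structure of GIN aggregation together with the unit-norm constraint on embeddings. The only point needing care is bookkeeping: one must confirm that the deleted edge perturbs \emph{exactly} two aggregation sums (not more), which relies on the graph being undirected and on the subgraph batching in lines~2--3 being performed identically for $G$ and $G^\prime$ up to the single differing edge, so that no other node's sampled neighborhood is affected. I would state this adjacency/coupling observation explicitly before the computation so that the bound is genuinely worst-case over the mechanism's randomness.
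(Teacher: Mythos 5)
Your proof is correct and follows essentially the same route as the paper's: identify that only the two endpoints $u^*, v^*$ have their pre-activation sums changed (each by exactly one summand, $W^{(l)}h_{u^*}^{(l-1)}$ and $W^{(l)}h_{v^*}^{(l-1)}$ respectively), bound each difference by $\opnorm{W^{(l)}}$ using the unit-norm normalization of the previous layer's embeddings, and combine the two squared terms to get $\sqrt{2}\opnorm{W^{(l)}}$. The extra remark about the sampler coupling is harmless but not needed, since the edge sensitivity in Definition~\ref{def: edge_sensitivity} is defined for the deterministic one-layer protocol and sampling is handled separately in Theorem~\ref{thm: sampled}.
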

    \textbf{Advantage of layer-wise perturbations} According to proposition \ref{prop: gin_es}, the edge sensitivity of GIN is independent of the input graph's maximum degree upper bound $D$, which is essentially a direct consequence of the fact that for a $1$ layer message passing procedure, adding or removing one edge would affect up to two nodes' output embeddings. As a consequence, the privacy cost scales linearly with the number of message-passing layers in the \renyi DP framework, thereby offering a better privacy-utility trade-off than algorithms that do the do the perturbation only in the final layer \cite{zhou2020vertically}, whose privacy cost may scale exponentially with $D$.\par\noindent
    \textbf{Effectiveness and challenges of summation pooling} It has been observed in previous works \cite{sajadmanesh2022gap} that aggregation perturbation with sum pooling works well on graphs with a large average degree. Intuitively, this phenomenon could be understood as keeping a high "signal-to-noise ratio (SNR)" during the aggregation process: For nodes with large degrees, the noise scale becomes relatively small with respect to the summation of incoming messages. Therefore if high-degree nodes are prevalent in the underlying graph, the utility loss during aggregation is reasonably controlled for most nodes. However, realistic graph data might not have large average degrees. For example, transaction networks in FRM scenarios are usually sparse, including many nodes with degrees smaller than $5$ or even being singular (i.e., of degree $0$). 
    Consequently, the SNR of sparse networks makes it harder for summation pooling to maintain decent utility, which will be further verified in section \ref{sec: experiments}.
    \subsection{Improvements of PMP in the GCN model}\label{sec: pmp_gcn}
    As discussed in the previous section, the degree profile of the input graph may affect the utility of PMP-privatized GNNs when the underlying aggregation follows the summation pooling scheme. It is therefore of interest to explore aggregation schemes that are more appropriate when the input graph is sparse. On first thought, we may expect aggregation schemes like mean pooling or GCN pooling to have smaller sensitivities. However, such sensitivity reduction does NOT hold in a worst-case analysis: Just think of nodes with degree $1$, then it is not hard to check that mean pooling or GCN pooling behaves similarly to summation pooling. The primary issue with worst-case analysis is that the resulting sensitivity is determined by extremely \emph{low-degree} nodes. Inspired by this phenomenon, we seek improvements by first deriving lower sensitivity with an extra requirement on a \emph{degree lower bound}, and then relax the requirement via introducing a modified protocol. We start with the following observation:
    
    \begin{proposition}\label{prop: gcn_es}
        Assume all the possible input graphs have a minimum degree larger or equal to $\mind$, or
        \begin{align}\label{eqn: min_degree}
            \min_{G} \min_{v \in G} d_v \ge \mind > 1.
        \end{align}
        Then for the GCN model, the edge sensitivity of the $l$-th layer $\se_l^{\text{GCN}}$ is bounded from above by a function $\eta_l(\mind)$, defined as:
        \begin{align}
            \begin{aligned}
                \eta_l(\mind)=\sqrt{2}\left(\dfrac{1 - 1/\mind}{2\mind} + \dfrac{1}{\mind (\mind + 1)} + \dfrac{1}{\mind + 1}\right) \opnorm{W^{(l)}}.
            \end{aligned}
        \end{align}
    \end{proposition}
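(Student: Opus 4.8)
The plan is to bound the edge sensitivity $\se_l^{\text{GCN}}$ by directly tracking how removing a single edge $(u^*, v^*)$ changes each node's pre-activation embedding $\widetilde h_v^{(l)}$, and then summing the squared per-node changes. Since the inputs at layer $l-1$ are normalized to have unit $\ell_2$-norm (step 10 of Algorithm \ref{alg: priv_gnn_sample}), and since the only effect of removing $(u^*, v^*)$ is (i) dropping the term $\beta_{u^*v^*} W^{(l)} h_{u^*}^{(l-1)}$ from node $v^*$'s aggregation (and symmetrically dropping $\beta_{v^*u^*}W^{(l)}h_{v^*}^{(l-1)}$ from node $u^*$), and (ii) changing the degrees $d_{u^*} \mapsto d_{u^*}-1$ and $d_{v^*}\mapsto d_{v^*}-1$, which alters the coefficients $\omega_v$ and $\beta_{uv}$ for $v,u \in \{u^*,v^*\}$ and their neighbors. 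First I would factor out $\opnorm{W^{(l)}}$ using $\|W^{(l)} x\|_2 \le \opnorm{W^{(l)}}\|x\|_2$, reducing the problem to a purely scalar/coefficient estimate with unit-norm vectors, so that $\se_l^{\text{GCN}} \le \opnorm{W^{(l)}} \sqrt{\sum_v \Delta_v^2}$ where $\Delta_v$ collects the coefficient perturbations at node $v$.

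Next I would split $\sum_v \Delta_v^2$ into the contributions from the two endpoints $u^*, v^*$ themselves versus the contributions from the other neighbors of $u^*$ and $v^*$. For an endpoint, say $v^*$: its self-coefficient changes from $\frac{1}{d_{v^*}+1}$ to $\frac{1}{d_{v^*}}$, a change of $\frac{1}{d_{v^*}(d_{v^*}+1)}$; the edge coefficient $\beta_{u^*v^*} = \frac{1}{\sqrt{d_{u^*}+1}\sqrt{d_{v^*}+1}}$ is dropped entirely, contributing a term bounded by $\frac{1}{d_{v^*}+1}$ (after using $d_{u^*}+1 \ge d_{v^*}+1$ is not available in general, so I would instead bound each of $d_{u^*}+1, d_{v^*}+1 \ge \mind+1$, or more carefully keep the worst case); and each surviving edge coefficient $\beta_{wv^*}$ for $w \in N(v^*)\setminus\{u^*\}$ changes because $d_{v^*}$ decreased, with change $\frac{1}{\sqrt{d_w+1}}\left(\frac{1}{\sqrt{d_{v^*}}} - \frac{1}{\sqrt{d_{v^*}+1}}\right)$. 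Using the minimum-degree hypothesis $d_v \ge \mind$ to lower-bound all denominators, together with the elementary inequality $\frac{1}{\sqrt{m}} - \frac{1}{\sqrt{m+1}} \le \frac{1}{2m^{3/2}}$ and the fact that there are at most $d_{v^*}-1$ such surviving neighbors (so the aggregate of these changes, combined via the vector triangle inequality over unit vectors, is at most $(d_{v^*}-1)\cdot\frac{1}{\sqrt{\mind+1}}\cdot\frac{1}{2d_{v^*}^{3/2}} \le \frac{1}{2\mind}$), I would collect exactly the three displayed terms $\frac{1-1/\mind}{2\mind}$, $\frac{1}{\mind(\mind+1)}$, and $\frac{1}{\mind+1}$. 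The $\sqrt 2$ prefactor then comes from the symmetric contribution of the other endpoint $u^*$: the two endpoints' perturbation vectors are different nodes' embeddings, so $\sqrt{\Delta_{u^*}^2 + \Delta_{v^*}^2} \le \sqrt 2 \max(\Delta_{u^*}, \Delta_{v^*})$, and worst-casing over which endpoint is which gives the stated $\eta_l(\mind)$.

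The main obstacle I anticipate is the bookkeeping around the surviving-neighbor terms: removing $(u^*,v^*)$ perturbs the coefficients on edges incident to $u^*$ and $v^*$, and one must argue that, although there can be up to $D-1$ such edges, their combined effect on $\sum_v\|\widetilde h_v - \widetilde h'_v\|_2^2$ telescopes down to an $O(1/\mind)$ contribution rather than something growing with $D$ — this is exactly the point where the minimum-degree assumption \eqref{eqn: min_degree} does the work, and where a naive triangle-inequality bound would be too lossy. I would handle this by grouping, for each affected neighbor $w$, the single changed coefficient and using that $h_w^{(l-1)}$ appears with a $\frac{1}{\sqrt{d_w+1}}$ factor; summing $\frac{d_{v^*}-1}{\sqrt{\mind+1}}\cdot\frac{1}{2 d_{v^*}^{3/2}}$ and simplifying via $d_{v^*}\ge\mind$ yields the $\frac{1-1/\mind}{2\mind}$ term. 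A secondary subtlety is whether to measure $d_v$ in $G$ or $G'$ (they differ by one at the endpoints); I would consistently express the bound in terms of the smaller graph's degrees and note that this only strengthens the denominators, so the hypothesis $\mind > 1$ suffices to keep every expression well-defined. The remaining steps — the operator-norm factorization, the unit-norm substitution, and the two-endpoint symmetrization — are routine once this grouping is set up.
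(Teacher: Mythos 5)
Your decomposition is exactly the paper's: for the endpoint $v^*$ you split the change into the self-loop coefficient term ($\tfrac{1}{d_{v^*}(d_{v^*}+1)}$), the dropped cross term (bounded by $\tfrac{1}{\mind+1}$), and the re-normalization of the surviving neighbors' coefficients, then symmetrize over the two endpoints to get the $\sqrt{2}$ factor. The one place where your argument, as written, does not deliver the stated constant is the surviving-neighbor term. You bound it by $\tfrac{d_{v^*}-1}{2\sqrt{\mind+1}\,d_{v^*}^{3/2}}$ and then say this "simplifies via $d_{v^*}\ge\mind$" to $\tfrac{1-1/\mind}{2\mind}$; but the map $x\mapsto \tfrac{x-1}{x^{3/2}}$ is \emph{not} monotone decreasing on $[\mind,\infty)$ when $\mind=2$ (it increases up to $x=3$, e.g.\ $\tfrac{2}{3^{3/2}}>\tfrac{1}{2^{3/2}}$), so you cannot simply substitute $d_{v^*}=\mind$. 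Indeed, the explicit bound you state in your second paragraph is only $\tfrac{1}{2\mind}$, which is strictly weaker than the term $\tfrac{1-1/\mind}{2\mind}$ appearing in $\eta_l(\mind)$, so as written you prove a slightly larger bound than the proposition claims.

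The paper closes this gap with a small case analysis: it notes $f(x)=\tfrac{x-1}{x^{3/2}}$ attains its maximum at $x=3$ and decreases thereafter, treats $\mind=2$ and $\mind\ge 3$ separately, and in both cases uses the slack between $\sqrt{\mind+1}$ (which you retain in the denominator) and $\sqrt{\mind}$ to conclude $\tfrac{d_{v^*}-1}{2\sqrt{\mind+1}\,d_{v^*}^{3/2}}\le \tfrac{\mind-1}{2\sqrt{\mind}\,\mind^{3/2}}=\tfrac{1-1/\mind}{2\mind}$. If you add this two-case check (or any argument handling the non-monotonicity near $x=3$), your proof matches the paper's and establishes the stated $\eta_l(\mind)$; everything else in your proposal — the operator-norm factorization, the use of $\tfrac{1}{\sqrt{x}}-\tfrac{1}{\sqrt{x+1}}\le\tfrac{1}{2x^{3/2}}$, the degree bookkeeping in $G$ versus $G'$, and the two-endpoint symmetrization — is sound and identical in substance to the paper's argument.
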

    Proposition \ref{prop: gcn_es} implies that the edge sensitivity of the GCN model shrinks significantly if the underlying graph has a reasonably large minimum degree, which will result in a significantly reduced noise scale that improves utility. However, the minimum degree assumption \eqref{eqn: min_degree} is impractical since most of the realistic graph data have a large number of nodes with small degrees. To circumvent the impracticality of assumption \eqref{eqn: min_degree} while still being able to reduce the noise scale in the GCN model, we propose a modification to the basic message passing algorithm \ref{alg: priv_gnn_sample} called \emph{truncated message passing}.
    The idea of truncated message passing is to block all the incoming messages unless the receiver node's neighborhood is large than or equal to $\mind$, which is treated as a hyperparameter. For nodes with degrees lower than $\mind$, the output embedding is instead produced by an MLP with perturbation that does not involve any edge information. A detailed version is provided in algorithm \ref{alg: truncated_mp} in appendix \ref{sec: algo}. Consequently, it is straightforward to show that the differential privacy guarantee of the resulting algorithm operating on any graph matches the privacy level of perturbed GCN (produced by algorithm \ref{alg: priv_gnn_sample}) operating only on graphs with minimum degree assumption.\\
    \textbf{How to choose $\mind$?} To maintain the same privacy level under the truncated message passing algorithm, one may reduce the noise scale $\theta$ at the cost of raising the minimum degree hyperparameter $\mind$. On the one hand, reducing the noise scale significantly improves the utility of the message-passing procedure. On the other hand, raising $\mind$ might prevent a non-ignorable proportion of nodes from learning structural information. Therefore, properly adjusting $\mind$ may help achieve a better privacy-utility trade-off in the GCN model. In practice, one may choose $\mind$ based on prior knowledge about the degree distribution of the underlying graph or via inspecting a private release of its degree distribution, which could be done efficiently using the Laplace mechanism \cite{dwork2014algorithmic}. \par\noindent 
    \subsection{\vesper: an end-to-end learning framework}\label{sec: vesper}
    In previous sections, we have established the PMP framework for differentially private graph representation learning. Now under the vertically federated learning setup described in section \ref{sec: setup}, we propose an end-to-end architecture inspired by the SplitNN paradigm \cite{gupta2018distributed} based on the PMP framework, named \underline{\textbf{VE}}rtically private \underline{\textbf{S}}plit GNN with \underline{\textbf{PER}}turbed message passing({\textbf{\vesper}}). The \vesper architecture contains three main components: Encoder, Private representation extractor (PRE), and Decoder.\par\noindent
    \textbf{Encoder} The encoder module maps input node features into a $d$-dimensional representation vector, with an ad-hoc choice being an MLP. Note that for node features with additional structural patterns (i.e., sequence data), we may use a more tailored encoder architecture as long as it does not involve edge information. The encoder model is physically stored in party A.\\
    \textbf{Private representation extractor} The PRE module takes its input the node embeddings produced by the encoder and a batch of $B$ subgraphs produced by a neighborhood sampler. The output representation of PRE is computed using some specific type of PMP mechanism such as PMP-GIN or PMP-GCN. The PRE module is physically stored in party A. The output of PRE is a tensor of shape $B \times d \times L$, with $d$ and $L$ being the dimension of graph representation and the number of message passing layers respectively. The outputs will be transmitted from party A to party B.\\
    \textbf{Decoder} The decoder module is physically stored in party B, which decodes the received node embeddings produced by PRE into the final prediction of \vesper with its structure depending on the downstream tasks (i.e., classification, regression, ranking, etc.). We test two types of decoder architectures in our implementation of \vesper. The first one proceeds via concatenating the node embeddings of all layers followed by an MLP, which we call the CONCAT decoder. The second one treats the node embeddings as a sequence of $L$ node embeddings and uses a GRU network to combine them, similar to the idea used in GNN architectures like GaAN \cite{li2015gated} and GeniePath \cite{liu2019geniepath} which we term the GRU decoder. \\
    The VFL training protocol closely resembles the SplitNN protocol \cite{gupta2018distributed}, where in each step, forward computation results (i.e., the outputs of the PRE module) are transmitted from party A to party B. After party B finishes the forward computation using the decoded outputs and label information, party B first update its local decoder module via back-propagation, and then sends (partial) gradients that are intermediate results of the backward computation to party A for updating party A's local parameters (i.e., parameters of the encoder module and PRE module). A pictorial illustration of the \vesper architecture is presented in figure \ref{fig: vesper}. We will discuss some practical issues in implementing \vesper in appendix \ref{sec: impl}. \par\noindent

    \section{Experiments}\label{sec: experiments}
    In this section we present empirical evaluations of the \vesper framework via investigate its privacy-utility trade-off and resistance to empirical membership inference attacks. Due to limited space, a complete report will be postponed to appendix \ref{sec: exp_full}.
    \subsection{Datasets}
    We use three large-scale graph datasets, with their summary statistics listed in table \ref{tab: dataset_summary}. Specifically, we use two public datasets \ogb and Reddit, with their detailed descriptions postponed to appendix \ref{sec: dataset}. We additionally used an industrial dataset called \textbf{the Finance dataset} which is generated from transaction records collected from one of the world's leading online payment systems. The underlying graph is generated by treating users as nodes, and two nodes are connected if at least one transaction occurred between corresponding users within a predefined time period. The business goal is to identify risky users which is cast into an algorithmic problem of node classification with a binary label. The node features are obtained via statistical summaries of corresponding users' behavior on the platform during a specific time period. The training and testing datasets are constructed under two distinct time windows with no overlap.\par\noindent
    \textbf{A differentially private analysis of degree profiles} While all three datasets are large in scale (i.e., with the number of nodes exceeding $100,000$), they differ significantly in their degree distributions. For a better illustration, we conduct a differentially private analysis of degree distribution (with $(0.1, 0)$-differential privacy) detailed in appendix \ref{sec: histogram}. According to the analysis, we find that both the \ogb and the Reddit contain a large portion of high-degree nodes (as illustrated by the spiking bar at the $\ge 50$ category), while the Finance dataset exhibits a concentration on the lower-degree nodes. As discussed in section \ref{sec: priv_gnn}, it is expected that the Finance dataset is more challenging for (private) message passing under sum pooling. 
    \subsection{Baselines}
    We compare the proposed \vesper framework with three types of baselines, with each one being able to implement in the vertically federated setting. \textbf{MLP without edge information} we use MLP over node features directly is the most trivial solution to the learning task as it totally ignores edge information. \textbf{Non-private GNN counterparts} we compare with ordinary GCN and GIN models without privacy guarantees, or equivalently set the $\epsilon$ parameter in the \vesper framework to be infinity. \textbf{GNN models with privacy guarantees} we consider two alternative approaches to private GRL, namely the VFGNN model \cite{zhou2020vertically} and the GAP model \cite{sajadmanesh2022gap}. We found the privacy analysis in the corresponding papers to be somewhat incoherent with the privacy model in our paper and we conducted new analysis of their privacy properties, detailed in appendix \ref{sec: baselines}.
    
    \subsection{Experimental setup}\label{sec: exp_setup}
    Due to limited space, we postpone the description of our training configurations to appendix \ref{sec: exp_setup_full} and elaborate more on the \textbf{privacy configurations}: All the privacy reports are based on the $(\epsilon, \delta)$-differential privacy model, with $\delta$ being the reciprocal of the number of edges. 
    To adequately inspect the privacy-utility trade-off, we aim to evaluate all the models with differential privacy guarantees under the total privacy costs (privacy budgets) $\epsilon \in \{1, 2, 4, 8, 16, 32\}$, with the privacy costs accounted during the entire training period. 
    We treat the setting where $\epsilon \in \{1, 2\}$ as of \emph{high privacy}, $\epsilon \in \{4, 8\}$ as of \emph{moderate privacy}, and the rest as of \emph{low privacy}.
    For \vesper and VFGNN, we add spectral normalization to each GNN layer. For the privacy accountant, we base our implementation upon AMA implementation available in the \href{https://github.com/google/differential-privacy/tree/main/python}{\texttt{dp-accounting}} library and use an adjusted sampling probability according to theorem \ref{thm: sampled}. For each required privacy level, we compute the minimum scale of Gaussian noise via conducting a binary search over the adjusted AMA, with associating spectral norms of weight matrices fixed at one in all layers. \par\noindent
    \textbf{Evaluation metrics} We adopt classification accuracy (ACC) as the evaluation metric for the \ogb and Reddit datasets, and ROC-AUC score (AUC) as the evaluation metric for the Finance dataset.
    \begin{figure}
        \centering
        \begin{subfigure}{0.16\linewidth}
            \resizebox{\linewidth}{\linewidth}{
\begin{tikzpicture}

\definecolor{darkgray176}{RGB}{176,176,176}
\definecolor{gray}{RGB}{128,128,128}
\definecolor{green}{RGB}{0,128,0}
\definecolor{lightgray204}{RGB}{204,204,204}

\begin{axis}[
legend cell align={left},
legend style={
  fill opacity=0.8,
  draw opacity=1,
  text opacity=1,
  at={(0.52,0.45)},
  anchor=north west,
  draw=lightgray204
},
scale only axis,
width=2.25cm,
height=2cm,
tick align=outside,
tick pos=left,
title={\ogb},
tick label style={font=\tiny},
title style={yshift=-4pt},
xticklabel style={yshift=2pt},
yticklabel style={xshift=2pt},
legend style={nodes={scale=0.5, transform shape}},
every tick/.style={
black,
semithick,
},
x label style={at={(axis description cs:0.5,-0.1)},anchor=north,font=\tiny},
y label style={at={(axis description cs:-0.1,.5)},rotate=90,anchor=south,font=\tiny},
x grid style={darkgray176},
xlabel={\(\displaystyle \epsilon\)},
xmin=-0.3, xmax=6.3,
xtick={0,1,2,3,4,5,6},
xticklabels={1,2,4,8,16,32,\(\displaystyle \infty\)},
ylabel={ACC},
ymin=0.35, ymax=0.8,
]
\path [draw=gray, fill=gray, opacity=0.2]
(axis cs:0,0.3935)
--(axis cs:0,0.3307)
--(axis cs:1,0.5376)
--(axis cs:2,0.7057)
--(axis cs:3,0.7401)
--(axis cs:4,0.7487)
--(axis cs:5,0.749)
--(axis cs:6,0.759)
--(axis cs:6,0.7778)
--(axis cs:6,0.7778)
--(axis cs:5,0.7674)
--(axis cs:4,0.7585)
--(axis cs:3,0.7491)
--(axis cs:2,0.7197)
--(axis cs:1,0.5608)
--(axis cs:0,0.3935)
--cycle;

\path [draw=blue, fill=blue, opacity=0.2]
(axis cs:0,0.4825)
--(axis cs:0,0.4449)
--(axis cs:1,0.6004)
--(axis cs:2,0.672)
--(axis cs:3,0.6801)
--(axis cs:4,0.6958)
--(axis cs:5,0.7029)
--(axis cs:6,0.779)
--(axis cs:6,0.7862)
--(axis cs:6,0.7862)
--(axis cs:5,0.7095)
--(axis cs:4,0.7068)
--(axis cs:3,0.6935)
--(axis cs:2,0.68)
--(axis cs:1,0.6276)
--(axis cs:0,0.4825)
--cycle;

\path [draw=red, fill=red, opacity=0.2]
(axis cs:0,0.6098) -- (axis cs:0,0.6114) -- (axis cs:6,0.6114) -- (axis cs:6,0.6098);

\addplot [semithick, black, mark=*, mark size=1, mark options={solid}]
table {%
0 0.3621
1 0.5492
2 0.7127
3 0.7446
4 0.7536
5 0.7582
6 0.7684
};
\addlegendentry{GIN}
\addplot [semithick, blue, dashed, mark=square*, mark size=1, mark options={solid}]
table {%
0 0.4637
1 0.614
2 0.676
3 0.6868
4 0.7013
5 0.7062
6 0.7826
};
\addlegendentry{GCN}
\addplot [thick, red, mark=none]
table {%
0 0.6106
6 0.6106
};
\addlegendentry{MLP}
\end{axis}
\end{tikzpicture}}
            \caption{}
        \end{subfigure}
        \begin{subfigure}{0.16\linewidth}
            \resizebox{\linewidth}{\linewidth}{
\begin{tikzpicture}

\definecolor{darkgray176}{RGB}{176,176,176}
\definecolor{gray}{RGB}{128,128,128}
\definecolor{green}{RGB}{0,128,0}
\definecolor{lightgray204}{RGB}{204,204,204}

\begin{axis}[
legend cell align={left},
legend style={
  fill opacity=0.8,
  draw opacity=1,
  text opacity=1,
  at={(0.52,0.45)},
  anchor=north west,
  draw=lightgray204
},
scale only axis,
width=2.25cm,
height=2cm,
tick align=outside,
tick pos=left,
title={Reddit},
tick label style={font=\tiny},
title style={yshift=-4pt},
xticklabel style={yshift=2pt},
yticklabel style={xshift=2pt},
legend style={nodes={scale=0.5, transform shape}},
every tick/.style={
black,
semithick,
},
x label style={at={(axis description cs:0.5,-0.1)},anchor=north,font=\tiny},
y label style={at={(axis description cs:-0.1,.5)},rotate=90,anchor=south,font=\tiny},
x grid style={darkgray176},
xlabel={\(\displaystyle \epsilon\)},
xmin=-0.3, xmax=6.3,
xtick={0,1,2,3,4,5,6},
xticklabels={1,2,4,8,16,32,\(\displaystyle \infty\)},
ylabel={ACC},
ymin=0.2, ymax=1.0,
]
\path [draw=gray, fill=gray, opacity=0.2]
(axis cs:0,0.2204)
--(axis cs:0,0.1882)
--(axis cs:1,0.3653)
--(axis cs:2,0.8077)
--(axis cs:3,0.913)
--(axis cs:4,0.9314)
--(axis cs:5,0.9354)
--(axis cs:6,0.947)
--(axis cs:6,0.95)
--(axis cs:6,0.95)
--(axis cs:5,0.94)
--(axis cs:4,0.9354)
--(axis cs:3,0.9174)
--(axis cs:2,0.8391)
--(axis cs:1,0.4893)
--(axis cs:0,0.2204)
--cycle;

\path [draw=blue, fill=blue, opacity=0.2]
(axis cs:0,0.7512)
--(axis cs:0,0.7014)
--(axis cs:1,0.8591)
--(axis cs:2,0.8958)
--(axis cs:3,0.9111)
--(axis cs:4,0.9191)
--(axis cs:5,0.9243)
--(axis cs:6,0.9446)
--(axis cs:6,0.9466)
--(axis cs:6,0.9466)
--(axis cs:5,0.9271)
--(axis cs:4,0.9231)
--(axis cs:3,0.9145)
--(axis cs:2,0.9012)
--(axis cs:1,0.8695)
--(axis cs:0,0.7512)
--cycle;

\path [draw=red, fill=red, opacity=0.2]
(axis cs:0,0.7082) -- (axis cs:0,0.7132) -- (axis cs:6,0.7132) -- (axis cs:6,0.7082);

\addplot [semithick, black, mark=*, mark size=1, mark options={solid}]
table {%
0 0.2043
1 0.4273
2 0.8234
3 0.9152
4 0.9334
5 0.9377
6 0.9485
};
\addlegendentry{GIN}
\addplot [semithick, blue, dashed, mark=square*, mark size=1, mark options={solid}]
table {%
0 0.7263
1 0.8643
2 0.8985
3 0.9128
4 0.9211
5 0.9257
6 0.9456
};
\addlegendentry{GCN}
\addplot [thick, red, mark=none]
table {%
0 0.7107
6 0.7107
};
\addlegendentry{MLP}
\end{axis}
\end{tikzpicture}}
            \caption{}
        \end{subfigure}
        \begin{subfigure}{0.16\linewidth}
            \resizebox{\linewidth}{\linewidth}{
\begin{tikzpicture}

\definecolor{darkgray176}{RGB}{176,176,176}
\definecolor{gray}{RGB}{128,128,128}
\definecolor{green}{RGB}{0,128,0}
\definecolor{lightgray204}{RGB}{204,204,204}

\begin{axis}[
legend cell align={left},
legend style={
  fill opacity=0.8,
  draw opacity=1,
  text opacity=1,
  at={(0.52,0.45)},
  anchor=north west,
  draw=lightgray204
},
scale only axis,
width=2.25cm,
height=2cm,
tick align=outside,
tick pos=left,
title={Finance},
tick label style={font=\tiny},
title style={yshift=-4pt},
xticklabel style={yshift=2pt},
yticklabel style={xshift=2pt},
legend style={nodes={scale=0.5, transform shape}},
every tick/.style={
black,
semithick,
},
x label style={at={(axis description cs:0.5,-0.1)},anchor=north,font=\tiny},
y label style={at={(axis description cs:-0.1,.5)},rotate=90,anchor=south,font=\tiny},
x grid style={darkgray176},
xlabel={\(\displaystyle \epsilon\)},
xmin=-0.3, xmax=6.3,
xtick={0,1,2,3,4,5,6},
xticklabels={1,2,4,8,16,32,\(\displaystyle \infty\)},
ylabel={ACC},
ymin=0.5, ymax=0.85,
]
\path [draw=gray, fill=gray, opacity=0.2]
(axis cs:0,0.6895)
--(axis cs:0,0.5573)
--(axis cs:1,0.6693)
--(axis cs:2,0.6515)
--(axis cs:3,0.6631)
--(axis cs:4,0.6696)
--(axis cs:5,0.6773)
--(axis cs:6,0.7926)
--(axis cs:6,0.803)
--(axis cs:6,0.803)
--(axis cs:5,0.6955)
--(axis cs:4,0.692)
--(axis cs:3,0.6943)
--(axis cs:2,0.6891)
--(axis cs:1,0.6851)
--(axis cs:0,0.6895)
--cycle;

\path [draw=blue, fill=blue, opacity=0.2]
(axis cs:0,0.698)
--(axis cs:0,0.6938)
--(axis cs:1,0.715)
--(axis cs:2,0.7284)
--(axis cs:3,0.7315)
--(axis cs:4,0.7338)
--(axis cs:5,0.7427)
--(axis cs:6,0.7972)
--(axis cs:6,0.8054)
--(axis cs:6,0.8054)
--(axis cs:5,0.7497)
--(axis cs:4,0.7442)
--(axis cs:3,0.7477)
--(axis cs:2,0.743)
--(axis cs:1,0.731)
--(axis cs:0,0.698)
--cycle;

\path [draw=red, fill=red, opacity=0.2]
(axis cs:0,0.7113) -- (axis cs:0,0.7147) -- (axis cs:6,0.7147) -- (axis cs:6,0.7113);

\addplot [semithick, black, mark=*, mark size=1, mark options={solid}]
table {%
0 0.6234
1 0.6772
2 0.6703
3 0.6787
4 0.6808
5 0.6864
6 0.7978
};
\addlegendentry{GIN}
\addplot [semithick, blue, dashed, mark=square*, mark size=1, mark options={solid}]
table {%
0 0.6959
1 0.723
2 0.7357
3 0.7396
4 0.739
5 0.7462
6 0.8013
};
\addlegendentry{GCN}
\addplot [thick, red, mark=none]
table {%
0 0.7130
6 0.7130
};
\addlegendentry{MLP}
\end{axis}
\end{tikzpicture}}
            \caption{}
        \end{subfigure}
        \begin{subfigure}{0.16\linewidth}
            \resizebox{\linewidth}{\linewidth}{
\begin{tikzpicture}

\definecolor{darkgray176}{RGB}{176,176,176}
\definecolor{gray}{RGB}{128,128,128}
\definecolor{green}{RGB}{0,128,0}
\definecolor{lightgray204}{RGB}{204,204,204}

\begin{axis}[
legend cell align={left},
legend style={
  fill opacity=0.8,
  draw opacity=1,
  text opacity=1,
  at={(0.03,0.97)},
  anchor=north west,
  draw=lightgray204
},
scale only axis,
width=2.25cm,
height=2cm,
tick align=outside,
tick pos=left,
title={ogbn-products},
title style={yshift=-4pt},
tick label style={font=\tiny},
xticklabel style={yshift=2pt},
yticklabel style={xshift=2pt},
legend style={nodes={scale=0.5, transform shape}},
every tick/.style={
black,
semithick,
},
x label style={at={(axis description cs:0.5,-0.1)},anchor=north,font=\tiny},
y label style={at={(axis description cs:-0.1,.5)},rotate=90,anchor=south,font=\tiny},
x grid style={darkgray176},
xlabel={\(\displaystyle \epsilon\)},
xmin=-0.3, xmax=6.3,
xtick={0,1,2,3,4,5,6},
xticklabels={1,2,4,8,16,32,\(\displaystyle \infty\)},
ylabel={AUC},
ymin=0.431012779067799, ymax=0.802907937653953,
]
\path [draw=gray, fill=gray, opacity=0.2]
(axis cs:0,0.602176660111905)
--(axis cs:0,0.546547380682288)
--(axis cs:1,0.529904540801537)
--(axis cs:2,0.568640844218208)
--(axis cs:3,0.696724663442356)
--(axis cs:4,0.648009936126391)
--(axis cs:5,0.694683482192038)
--(axis cs:6,0.736415792783582)
--(axis cs:6,0.747821794081856)
--(axis cs:6,0.747821794081856)
--(axis cs:5,0.729590082701992)
--(axis cs:4,0.711278402018966)
--(axis cs:3,0.711543666945812)
--(axis cs:2,0.726565760382228)
--(axis cs:1,0.600088459391054)
--(axis cs:0,0.602176660111905)
--cycle;

\path [draw=green, fill=green, opacity=0.2]
(axis cs:0,0.552772369227595)
--(axis cs:0,0.499316778766979)
--(axis cs:1,0.489280708217722)
--(axis cs:2,0.567128732272145)
--(axis cs:3,0.446098922639897)
--(axis cs:4,0.549850633072463)
--(axis cs:5,0.662020794552003)
--(axis cs:6,0.731115861210483)
--(axis cs:6,0.738301930710474)
--(axis cs:6,0.738301930710474)
--(axis cs:5,0.667302460975767)
--(axis cs:4,0.634960972955443)
--(axis cs:3,0.588732108182758)
--(axis cs:2,0.625956354108116)
--(axis cs:1,0.562867629855046)
--(axis cs:0,0.552772369227595)
--cycle;

\addplot [semithick, black, mark=*, mark size=1, mark options={solid}]
table {%
0 0.574362020397096
1 0.564996500096295
2 0.647603302300218
3 0.704134165194084
4 0.679644169072679
5 0.712136782447015
6 0.742118793432719
};
\addlegendentry{GIN}
\addplot [semithick, green, dashed, mark=square*, mark size=1, mark options={solid}]
table {%
0 0.526044573997287
1 0.526074169036384
2 0.596542543190131
3 0.517415515411327
4 0.592405803013953
5 0.664661627763885
6 0.734708895960479
};
\addlegendentry{GCN}
\end{axis}

\end{tikzpicture}}
            \caption{}
        \end{subfigure}
        \begin{subfigure}{0.16\linewidth}
            \resizebox{\linewidth}{\linewidth}{
\begin{tikzpicture}

\definecolor{darkgray176}{RGB}{176,176,176}
\definecolor{gray}{RGB}{128,128,128}
\definecolor{green}{RGB}{0,128,0}
\definecolor{lightgray204}{RGB}{204,204,204}

\begin{axis}[
legend cell align={left},
legend style={
  fill opacity=0.8,
  draw opacity=1,
  text opacity=1,
  at={(0.03,0.97)},
  anchor=north west,
  draw=lightgray204
},
scale only axis,
width=2.25cm,
height=2cm,
tick align=outside,
tick pos=left,
title={Reddit},
title style={yshift=-4pt},
tick label style={font=\tiny},
xticklabel style={yshift=2pt},
yticklabel style={xshift=2pt},
legend style={nodes={scale=0.5, transform shape}},
every tick/.style={
black,
semithick,
},
x label style={at={(axis description cs:0.5,-0.1)},anchor=north,font=\tiny},
y label style={at={(axis description cs:-0.1,.5)},rotate=90,anchor=south,font=\tiny},
x grid style={darkgray176},
xlabel={\(\displaystyle \epsilon\)},
xmin=-0.3, xmax=6.3,
xtick={0,1,2,3,4,5,6},
xticklabels={1,2,4,8,16,32,\(\displaystyle \infty\)},
ylabel={AUC},
ymin=0.443367498325971, ymax=0.73515236932523,
]
\path [draw=gray, fill=gray, opacity=0.2]
(axis cs:0,0.500054802167273)
--(axis cs:0,0.499979067434759)
--(axis cs:1,0.487081764245714)
--(axis cs:2,0.491429487093707)
--(axis cs:3,0.477217057040684)
--(axis cs:4,0.581001404889022)
--(axis cs:5,0.630675341920215)
--(axis cs:6,0.717920330274454)
--(axis cs:6,0.721889420643446)
--(axis cs:6,0.721889420643446)
--(axis cs:5,0.674843008535527)
--(axis cs:4,0.611171206762723)
--(axis cs:3,0.54652932531443)
--(axis cs:2,0.519379303803085)
--(axis cs:1,0.533820380279406)
--(axis cs:0,0.500054802167273)
--cycle;

\path [draw=green, fill=green, opacity=0.2]
(axis cs:0,0.500497211681264)
--(axis cs:0,0.498622855484365)
--(axis cs:1,0.480031393230902)
--(axis cs:2,0.46046978823723)
--(axis cs:3,0.456630447007755)
--(axis cs:4,0.533704798268444)
--(axis cs:5,0.503816401633104)
--(axis cs:6,0.637916182764434)
--(axis cs:6,0.685824128299612)
--(axis cs:6,0.685824128299612)
--(axis cs:5,0.586101478204373)
--(axis cs:4,0.56041968614143)
--(axis cs:3,0.563076264034495)
--(axis cs:2,0.514048879835651)
--(axis cs:1,0.509900828848723)
--(axis cs:0,0.500497211681264)
--cycle;

\addplot [semithick, black, mark=*, mark size=1, mark options={solid}]
table {%
0 0.500016934801016
1 0.51045107226256
2 0.505404395448396
3 0.511873191177557
4 0.596086305825873
5 0.652759175227871
6 0.71990487545895
};
\addlegendentry{GIN}
\addplot [semithick, green, dashed, mark=square*, mark size=1, mark options={solid}]
table {%
0 0.499560033582815
1 0.494966111039813
2 0.487259334036441
3 0.509853355521125
4 0.547062242204937
5 0.544958939918739
6 0.661870155532023
};
\addlegendentry{GCN}
\end{axis}
\end{tikzpicture}}
            \caption{}
        \end{subfigure}
        \begin{subfigure}{0.16\linewidth}
            \resizebox{\linewidth}{\linewidth}{
\begin{tikzpicture}

\definecolor{darkgray176}{RGB}{176,176,176}
\definecolor{gray}{RGB}{128,128,128}
\definecolor{green}{RGB}{0,128,0}
\definecolor{lightgray204}{RGB}{204,204,204}

\begin{axis}[
legend cell align={left},
legend style={
  fill opacity=0.8,
  draw opacity=1,
  text opacity=1,
  at={(0.03,0.97)},
  anchor=north west,
  draw=lightgray204
},
scale only axis,
width=2.25cm,
height=2cm,
tick align=outside,
tick pos=left,
title={Finance},
tick label style={font=\tiny},
title style={yshift=-4pt},
xticklabel style={yshift=2pt},
yticklabel style={xshift=2pt},
legend style={nodes={scale=0.5, transform shape}},
every tick/.style={
black,
semithick,
},
x label style={at={(axis description cs:0.5,-0.1)},anchor=north,font=\tiny},
y label style={at={(axis description cs:-0.1,.5)},rotate=90,anchor=south,font=\tiny},
x grid style={darkgray176},
xlabel={\(\displaystyle \epsilon\)},
xmin=-0.3, xmax=6.3,
xtick={0,1,2,3,4,5,6},
xticklabels={1,2,4,8,16,32,\(\displaystyle \infty\)},
ylabel={AUC},
ymin=0.374827492589487, ymax=0.701017795046696,
]
\path [draw=gray, fill=gray, opacity=0.2]
(axis cs:0,0.512993688378993)
--(axis cs:0,0.43453785840271)
--(axis cs:1,0.445376487222329)
--(axis cs:2,0.462456322672348)
--(axis cs:3,0.431289839114708)
--(axis cs:4,0.45424108049143)
--(axis cs:5,0.464402734058819)
--(axis cs:6,0.537435839612462)
--(axis cs:6,0.686190963116823)
--(axis cs:6,0.686190963116823)
--(axis cs:5,0.487933310940003)
--(axis cs:4,0.489979829821448)
--(axis cs:3,0.470972627133641)
--(axis cs:2,0.514818555059631)
--(axis cs:1,0.525107788836534)
--(axis cs:0,0.512993688378993)
--cycle;

\path [draw=green, fill=green, opacity=0.2]
(axis cs:0,0.458626776897333)
--(axis cs:0,0.403290966681947)
--(axis cs:1,0.38965432451936)
--(axis cs:2,0.395600106938558)
--(axis cs:3,0.417302759501524)
--(axis cs:4,0.426797114248234)
--(axis cs:5,0.471714698863776)
--(axis cs:6,0.495997459990456)
--(axis cs:6,0.531098581674719)
--(axis cs:6,0.531098581674719)
--(axis cs:5,0.547566118085103)
--(axis cs:4,0.557744845109438)
--(axis cs:3,0.54701172782659)
--(axis cs:2,0.490060128933953)
--(axis cs:1,0.474112207967592)
--(axis cs:0,0.458626776897333)
--cycle;

\addplot [semithick, black, mark=*, mark size=1, mark options={solid}]
table {%
0 0.473765773390851
1 0.485242138029432
2 0.488637438865989
3 0.451131233124175
4 0.472110455156439
5 0.476168022499411
6 0.611813401364642
};
\addlegendentry{GIN}
\addplot [semithick, green, dashed, mark=square*, mark size=1, mark options={solid}]
table {%
0 0.43095887178964
1 0.431883266243476
2 0.442830117936255
3 0.482157243664057
4 0.492270979678836
5 0.509640408474439
6 0.513548020832588
};
\addlegendentry{GCN}
\end{axis}
\end{tikzpicture}}
            \caption{}
        \end{subfigure}
        \caption{(a)-(c): Evaluation of privacy-utility trade-off regarding the \vesper framework, with $\text{mean}\pm\text{std}$ plotted according to $10$ trials. The result of MLP is plotted as a reference line. Results below this line are practically problematic as it fails to exploit the graph information. (d)-(f): AUC ($\text{mean}\pm\text{std}$ over $10$ trials) of membership inference attacks.}
        \label{fig: privacy_utility_trade_off}
    \end{figure}
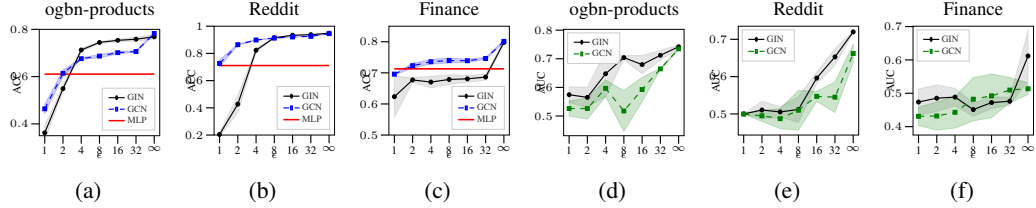

    \begin{table*}[]
        \centering
        \caption{Experimental results over three benchmark datasets using both non-private and private approaches, reported with format \result{\text{\texttt{mean}}}{\text{\texttt{std}}}, with \texttt{mean} and \texttt{std} (abbreviation for standard deviation) computed under $10$ trials for each setting.}
        \resizebox{\textwidth}{!}{%
            \begin{tabular}{l  c  c  c  c | c  c  c  c | c  c  c  c }
    \toprule
    \multicolumn{13}{c}{Non-private approaches} \\
    \midrule
    Model & \multicolumn{4}{c}{\ogb} & \multicolumn{4}{c}{Reddit} & \multicolumn{4}{c}{Finance} \\
    \midrule
    MLP & \multicolumn{4}{c}{\result{61.06}{0.08}} & \multicolumn{4}{c}{\result{71.07}{0.25}} & \multicolumn{4}{c}{\result{71.30}{0.17}}\\
    GIN & \multicolumn{4}{c}{\result{76.84}{0.94}} & \multicolumn{4}{c}{\result{94.85}{0.15}} & \multicolumn{4}{c}{\result{79.78}{0.52}}\\
    GCN & \multicolumn{4}{c}{\result{78.26}{0.36}} & \multicolumn{4}{c}{\result{94.56}{0.10}} & \multicolumn{4}{c}{\result{80.13}{0.41}}\\
    \midrule
    \multicolumn{13}{c}{Private approaches} \\
    \midrule
    Model & \multicolumn{4}{c}{\ogb} & \multicolumn{4}{c}{Reddit} & \multicolumn{4}{c}{Finance} \\
    \midrule
     & $\epsilon = 4$ & $\epsilon = 8$ & $\epsilon = 16$ & $\epsilon = 32$ & $\epsilon = 4$ & $\epsilon = 8$ & $\epsilon = 16$ & $\epsilon = 32$ & $\epsilon = 4$ & $\epsilon = 8$ & $\epsilon = 16$ & $\epsilon = 32$ \\
     \midrule

    VFGNN & \result{26.94}{0.00}& \result{40.96}{1.74}& \result{56.27}{1.35}& \result{69.57}{0.26}& \result{19.40}{2.56}& \result{31.20}{1.75}& \result{43.67}{1.18}& \result{86.72}{0.33}& \result{53.87}{6.86}& \result{56.17}{5.31}& \result{54.62}{4.09}& \result{52.12}{4.78} \\

    GAP & \result{59.20}{2.13}& \result{65.02}{0.82}& \result{66.20}{2.34}& \result{67.14}{0.34}& \result{76.84}{1.71}& \result{86.59}{0.48}& \result{88.57}{1.69}& \result{89.65}{0.30}& \result{52.02}{9.39}& \result{48.66}{7.14}& \result{59.04}{7.95}& \result{67.54}{4.41} \\
    \midrule
    
    \textbf{\vesper(GIN)} & \result{71.27}{0.70} & \result{74.46}{0.45} & \result{75.36}{0.49} & \result{75.82}{0.92} & \result{82.34}{1.57} & \result{91.52}{0.22} & \result{93.34}{0.20} & \result{93.77}{0.23} & \result{67.03}{1.88} & \result{67.87}{1.56} & \result{68.08}{1.12} & \result{68.64}{0.91} \\
    \textbf{\vesper(GCN)} & \result{67.60}{0.40} & \result{68.68}{0.67} & \result{70.13}{0.55}& \result{70.62}{0.33} & \result{89.85}{0.27} & \result{91.28}{0.17} & \result{92.11}{0.20} & \result{92.57}{0.14} & \result{73.57}{0.73} & \result{73.96}{0.81} & \result{73.90}{0.52} & \result{74.62}{0.35} \\
    \bottomrule
\end{tabular}
        }
        \label{tab: performance}
    \end{table*}
    \subsection{Performance and privacy-utility trade-off}
    According to our empirical experience, obtaining reasonable performance in the \emph{high privacy} regime is difficult, especially for baseline algorithms. Therefore, we report two sets of results: Firstly, we thoroughly investigate the privacy-utility trade-off regarding the proposed \vesper framework under both GIN and GCN aggregation schemes and plot the results in figure \ref{fig: privacy_utility_trade_off}. Secondly, we report comparisons of \vesper against private and non-private baselines with only moderate to large privacy budgets and summarize the results in table \ref{tab: performance}. The results demonstrate that the proposed \vesper framework exhibits competitive privacy-utility trade-off under both GIN and GCN aggregators. Moreover, a comparison of GIN and GCN aggregator suggests that summation pooling excels when the underlying graph is dense (i.e., \ogb and Reddit), while introducing the truncated message passing mechanism helps achieving better results over sparse graphs (i.e., Finance). Finally, \vesper demonstrates a better privacy-utility trade-off compared to other private GNN baselines. 
    
    \subsection{Protection against membership inference attacks}\label{sec: mia}
    We launch a membership inference attack (MIA) \cite{olatunji2021membership} to empirically investigate the resilience of \vesper against practical privacy risks that targets the membership of nodes instead of edges, which is regarded as a stronger attack than edge MIA. We provide a detailed description of the attack setup in appendix \ref{sec: mia_full}. The attack is conducted over trained models under privacy budgets $\epsilon\in\{1, 2, 4, 8, 16, 32, \infty\}$, where $\epsilon=\infty$ indicated no privacy protection is adopted. We use ROC-AUC (AUC) to evaluate the attack performance. We report the attack performances in Figure~\ref{fig: mia_attack_auc}. From the results, we observe that when privacy protection is disabled ($\epsilon=\infty$), the attacks show non-negligible effectiveness, especially on obgn-products and Reddit datasets. Generally, with the privacy budget getting smaller (privacy getting stronger), the attack performances sharply decline. With an appropriate privacy budget, the attacks on all three datasets are successfully defended with AUC reduced to around 0.5 (random guess baseline). \par\noindent
    \textbf{Additional experiments} We will report a series of ablation studies that assess the effect of maximum degree $D$, minimum degree $\mind$ for PMP-GCN and batch size in appendix \ref{sec: full_ablation}.

    \section{Related Works}
    \subsection{Graph representation learning in the federated setting}
    The majority of GRL research in the federated setting is based on the horizontal setup, with each party holding its own local graph data \cite{wu2021fedgnn, he2021fedgraphnn, ramezani2021learn}. 
    The adoption of VFL paradigms to GRL is relatively few, VFGNN \cite{zhou2020vertically} uses additive secret sharing to combine feature information held by different parties, followed by a straightforward adaptation of the SplitNN framework \cite{gupta2018distributed} with the underlying neural model being graph neural networks. In \cite{cheung2021fedsgc, wu2021linkteller}, the authors discussed VFL setups where node features and graph topology belong to different parties. We refer to the recent survey \cite{liu2022federated} for a more detailed overview. 
    \subsection{Graph representation learning with differential privacy guarantees}
    The most straightforward way to integrate DP techniques into GRL is via adopting private optimization algorithms like DP-SGD\cite{abadi2016deep}.
    However, meaningful notions of differential privacy over graph data (i.e., the edge model \cite{nissim2007smooth} and node model \cite{kasiviswanathan2013analyzing}) are semantically different from that of i.i.d. data, and require refined privacy analysis which is sometimes ignored in the privacy analysis in previous works \cite{zhou2020vertically, wu2021fedgnn, olatunji2021releasing}. In \cite{daigavane2021node}, the authors analyzed the DP-SGD algorithm in the node DP model. 
    The GAP model \cite{sajadmanesh2022gap} proposed a three-stage training procedure and analyzed its privacy guarantee in both edge DP and node DP models. However, we noticed that the privacy analysis in \cite{sajadmanesh2022gap} did not properly address the effect of sampling, resulting in an overly optimistic performance. Considering only edge DP, randomized response (RR) \cite{wu2021linkteller} that flips each entry of the underlying graph's adjacent matrix guarantees privacy (in a stronger \emph{local} sense), but makes reasonable privacy-utility trade-off extremely hard to obtain in practice. 

    \section{Conclusion and discussions}
    We present the \vesper framework as a differentially private solution to node classification in the VFL setup using graph representation learning techniques. The core algorithmic component of \vesper is the PMP scheme that allows efficient learning under both dense and sparse graph data. We demonstrate the practicality and effectiveness of the proposed framework by establishing theoretical DP guarantees as well as investigating its ability for privacy protection and privacy-utility trade-off empirically. We will discuss possible extensions and future directions of the \vesper framework in appendix \ref{sec: extensions}.
    
    \bibliographystyle{abbrv}
    \bibliography{privacy,private_gnn,attack}

    \appendix
    \section{Some standard tools for \renyi differential privacy}\label{sec: rdp}
    \textbf{\renyi divergence} the \renyi divergence between distributions of random variables $X$ and $Y$ given by
        \begin{align}
            \dalpha{X}{Y} = \dfrac{1}{\alpha - 1}\log \expectation{y \sim \mathbb{P}_Y}{\left(\dfrac{d\mathbb{P}_X}{d\mathbb{P}_Y}(y)\right)^{\alpha}}.
        \end{align}
        Here we use $\frac{d\mathbb{P}_X}{d\mathbb{P}_Y}(\cdot)$ to denote the density ratio between $X$ and $Y$ (or more formally the Radon-Nikodym derivative of the induced probability measure $\mathbb{P}_X$ with respect to $\mathbb{P}_Y$).
    We state here a couple of useful results in implementing and proving algorithms with \renyi differential privacy. The results will be stated under the context of graph algorithms in the edge DP model. The first lemma is the composition theorem of RDP:
    \begin{lemma}[Composition of \renyi DP \cite{mironov2017renyi}]
        Let $\mathcal{M}_1$ be a graph-input mechanism that satisfies $(\alpha, \epsilon_1)$-RDP, and $\mathcal{M}_2$ be a graph-input mechanism that is allowed to further depend on the output of $\mathcal{M}_1$ satisfying $(\alpha, \epsilon_2)$-RDP, then the composed mechanism $(\mathcal{M}_1 \circ \mathcal{M}_2)(G) = \mathcal{M}_2(\mathcal{M}_1(G), G)$ satisfies $(\alpha, \epsilon_1 + \epsilon_2)$-RDP.
    \end{lemma}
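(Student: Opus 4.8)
The plan is to reduce the statement to a \renyi-divergence bound on the \emph{joint} law of the transcript $\bigl(\mathcal{M}_1(G),\,\mathcal{M}_2(\mathcal{M}_1(G),G)\bigr)$, and then recover the claim for the released quantity $\mathcal{M}_2(\mathcal{M}_1(G),G)$ by the data-processing inequality for \renyi divergence, since that quantity is a deterministic post-processing (coordinate projection) of the transcript. First I would fix an arbitrary adjacent pair $(G,G')$ and set up densities: let $p_1,q_1$ be the densities of $\mathcal{M}_1(G),\mathcal{M}_1(G')$ against a common dominating measure $\mu_1$, and for each fixed auxiliary input $y_1$ let $p_2(\cdot\mid y_1),q_2(\cdot\mid y_1)$ be the densities of $\mathcal{M}_2(y_1,G),\mathcal{M}_2(y_1,G')$ against a common dominating measure $\mu_2$. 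If absolute continuity fails at either stage the corresponding divergence is $+\infty$ and there is nothing to prove, so these densities may be assumed to exist.

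The key step is the factorization of the integral defining the joint divergence. Writing $P,Q$ for the transcript laws under $G,G'$, abbreviating $I(y_1):=\int p_2(y_2\mid y_1)^{\alpha}q_2(y_2\mid y_1)^{1-\alpha}\,d\mu_2(y_2)$, and using that the transcript density factors as $p_1(y_1)\,p_2(y_2\mid y_1)$, Tonelli's theorem (the integrand is non-negative on the relevant support) gives
\begin{align*}
e^{(\alpha-1)\dalpha{P}{Q}} = \int p_1(y_1)^{\alpha}\,q_1(y_1)^{1-\alpha}\,I(y_1)\,d\mu_1(y_1).
\end{align*}
Now $I(y_1)=e^{(\alpha-1)\dalpha{\mathcal{M}_2(y_1,G)}{\mathcal{M}_2(y_1,G')}}\le e^{(\alpha-1)\epsilon_2}$ for every $y_1$ by the $(\alpha,\epsilon_2)$-RDP guarantee of $\mathcal{M}_2$ (here $\alpha>1$ makes $t\mapsto e^{(\alpha-1)t}$ increasing); pulling this uniform bound out leaves $\int p_1(y_1)^{\alpha}q_1(y_1)^{1-\alpha}\,d\mu_1(y_1) = e^{(\alpha-1)\dalpha{\mathcal{M}_1(G)}{\mathcal{M}_1(G')}}\le e^{(\alpha-1)\epsilon_1}$. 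Hence $e^{(\alpha-1)\dalpha{P}{Q}}\le e^{(\alpha-1)(\epsilon_1+\epsilon_2)}$; taking logarithms and dividing by $\alpha-1>0$ yields $\dalpha{P}{Q}\le\epsilon_1+\epsilon_2$. Applying the data-processing inequality to the projection onto the second coordinate then gives $\dalpha{(\mathcal{M}_1\circ\mathcal{M}_2)(G)}{(\mathcal{M}_1\circ\mathcal{M}_2)(G')}\le\epsilon_1+\epsilon_2$, and since $(G,G')$ was an arbitrary adjacent pair this is exactly $(\alpha,\epsilon_1+\epsilon_2)$-RDP.

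I expect no serious analytic obstacle here — this is the standard adaptive-composition computation — so the ``hard part'' is really stating the hypotheses precisely. In particular, the phrase that $\mathcal{M}_2$ satisfies $(\alpha,\epsilon_2)$-RDP while being allowed to depend on $\mathcal{M}_1$'s output must be read as: the bound $\dalpha{\mathcal{M}_2(y_1,G)}{\mathcal{M}_2(y_1,G')}\le\epsilon_2$ holds \emph{uniformly} over all admissible auxiliary inputs $y_1$ (and all adjacent $(G,G')$), since the argument pulls the inner bound out of the outer integral; a merely averaged-over-$y_1$ version would not suffice. The remaining care is routine measure theory — existence of the dominating measures, the convention on the integrand where a density vanishes, reduction of the non-absolutely-continuous case to a vacuous bound — together with invoking the data-processing inequality for $D_\alpha$ with $\alpha\ge 1$ as a known fact.
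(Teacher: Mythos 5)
Your proof is correct. The paper does not prove this lemma itself — it is quoted verbatim as a known result from the cited reference (Mironov, 2017, Proposition 1) — and your argument is exactly the standard adaptive-composition proof used there: factor the joint transcript density, bound the inner integral uniformly in the auxiliary input via the $(\alpha,\epsilon_2)$ guarantee of $\mathcal{M}_2$ (your remark that this uniformity is the essential reading of the hypothesis is the right one), bound the outer integral by the $(\alpha,\epsilon_1)$ guarantee of $\mathcal{M}_1$, and finish by post-processing.
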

    The second lemma is the conversion rule of RDP to the approximate $(\epsilon, \delta)$-DP:
    \begin{lemma}[Conversion of RDP to $(\epsilon, \delta)$-DP, \cite{balle2020hypothesis}]
        Let mechanism $\mathcal{M}$ satisfy $(\alpha, \epsilon)$-RDP, then it is $(\epsilon^\prime, \delta)$-DP for
        \begin{align}\label{eqn: rdp_conversion}
            \epsilon^\prime = \epsilon - \dfrac{\log(\delta \alpha)}{\alpha - 1} + \log(1 - \frac{1}{\alpha})
        \end{align}
        with any $\delta > 0$.
    \end{lemma}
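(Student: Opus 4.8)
The plan is to prove the conversion through the hypothesis-testing (privacy-profile) characterization of approximate DP. Fix an adjacent pair $(G,G')$ and write $P=\mathcal{M}(G)$, $Q=\mathcal{M}(G')$. Recall that $\mathcal{M}$ is $(\epsilon',\delta)$-Edge DP precisely when $\sup_S\big(\mathbb{P}[P\in S]-e^{\epsilon'}\mathbb{P}[Q\in S]\big)\le\delta$ over all adjacent pairs, and that this supremum is attained at $S^\star=\{y:\tfrac{dP}{dQ}(y)>e^{\epsilon'}\}$, yielding the identity $\delta_{\mathcal{M}}(\epsilon')=\mathbb{E}_{y\sim Q}\big[(\tfrac{dP}{dQ}(y)-e^{\epsilon'})_+\big]$. (Finiteness of the Rényi divergence forces $P\ll Q$, so the density ratio is well defined; and since edge-adjacency is symmetric, the $(\alpha,\epsilon)$-RDP hypothesis also bounds $D_\alpha(Q\|P)$, so the one-sided analysis below covers the DP definition in both directions.)

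The core step is a sharp pointwise inequality comparing the truncated-linear function with the $\alpha$-th power: for every $w\ge 0$ and $c>0$,
\begin{align}
  (w-c)_+ \;\le\; \frac{(\alpha-1)^{\alpha-1}}{\alpha^\alpha}\, c^{1-\alpha}\, w^\alpha \;=\; \frac{1}{\alpha-1}\left(1-\frac{1}{\alpha}\right)^\alpha c^{1-\alpha}\, w^\alpha .
\end{align}
I would prove this by maximizing $w\mapsto (w-c)/w^\alpha$ over $w>c$: the first-order condition gives the maximizer $w^\star=\tfrac{\alpha c}{\alpha-1}$, and substituting back produces the stated constant, which is then rewritten in the second form via $\tfrac{(\alpha-1)^{\alpha-1}}{\alpha^\alpha}=\tfrac{1}{\alpha-1}\big(\tfrac{\alpha-1}{\alpha}\big)^\alpha$.

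With $c=e^{\epsilon'}$, integrating this bound against $Q$ and applying the definition of Rényi divergence gives $\mathbb{E}_{y\sim Q}\big[(\tfrac{dP}{dQ}(y))^\alpha\big]=e^{(\alpha-1)D_\alpha(P\|Q)}\le e^{(\alpha-1)\epsilon}$ by the $(\alpha,\epsilon)$-RDP assumption, hence
\begin{align}
  \delta_{\mathcal{M}}(\epsilon') \;\le\; \frac{1}{\alpha-1}\left(1-\frac{1}{\alpha}\right)^\alpha e^{(\alpha-1)(\epsilon-\epsilon')}.
\end{align}
Setting the right-hand side equal to $\delta$ and solving for $\epsilon'$ (take logarithms, use $\log(1-\tfrac1\alpha)=\log(\alpha-1)-\log\alpha$, and collect the $\log(\alpha-1)$ and $\log\alpha$ terms) yields exactly $\epsilon'=\epsilon-\tfrac{\log(\delta\alpha)}{\alpha-1}+\log(1-\tfrac1\alpha)$; since $\delta_{\mathcal{M}}(\epsilon')$ is decreasing in $\epsilon'$, any $\epsilon'$ at least this large gives $\delta_{\mathcal{M}}(\epsilon')\le\delta$, which is the claim.

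I expect the main obstacle to be obtaining the sharp constant in the pointwise inequality. A crude Markov/Chernoff argument — bounding $\mathbb{P}_{y\sim Q}[\tfrac{dP}{dQ}(y)>e^{\epsilon'}]\le e^{(\alpha-1)(\epsilon-\epsilon')}$ and then $\delta_{\mathcal{M}}(\epsilon')\le \mathbb{P}_P[\tfrac{dP}{dQ}>e^{\epsilon'}]$ — is much easier but loses the $\tfrac{1}{\alpha-1}(1-\tfrac1\alpha)^\alpha$ prefactor and hence gives a weaker conversion; extracting the exact factor is precisely what the optimization over $w$ (equivalently, an optimally tuned one-parameter Chernoff bound on the tail integral $\int_{e^{\epsilon'}}^\infty Q(\tfrac{dP}{dQ}>s)\,ds$) buys. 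Beyond that, the remaining work is routine logarithmic bookkeeping and the standard observation about symmetry of adjacency.
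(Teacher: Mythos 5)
The paper does not actually prove this lemma: it is imported verbatim as a standard tool from \cite{balle2020hypothesis}, so there is no internal proof to compare yours against. Judged on its own, your argument is correct and is essentially the standard derivation of that cited conversion. The chain is sound: for $\alpha>1$ the RDP hypothesis forces $P\ll Q$, so the quantity to bound is indeed $\delta_{\mathcal{M}}(\epsilon')=\mathbb{E}_{y\sim Q}\bigl[(\tfrac{dP}{dQ}(y)-e^{\epsilon'})_+\bigr]$ (and the symmetry of edge-adjacency, which you note, covers both directions required by Definition~\ref{def: edgeDP}); your pointwise inequality $(w-c)_+\le\frac{1}{\alpha-1}(1-1/\alpha)^{\alpha}c^{1-\alpha}w^{\alpha}$ checks out, with the maximizer $w^{\star}=\alpha c/(\alpha-1)$ as you state; integrating against $Q$ and using $\mathbb{E}_{Q}[(\tfrac{dP}{dQ})^{\alpha}]=e^{(\alpha-1)D_{\alpha}(P\|Q)}\le e^{(\alpha-1)\epsilon}$ yields $\delta_{\mathcal{M}}(\epsilon')\le\frac{1}{\alpha-1}(1-1/\alpha)^{\alpha}e^{(\alpha-1)(\epsilon-\epsilon')}$; and solving this for $\epsilon'$ reproduces \eqref{eqn: rdp_conversion} exactly (both your route and the paper's formula reduce the constant terms to $\log(\alpha-1)-\tfrac{\alpha}{\alpha-1}\log\alpha$, which I verified). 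The only cosmetic remark is that you never need the supremum over $S$ to be attained at $\{\tfrac{dP}{dQ}>e^{\epsilon'}\}$: the inequality $\mathbb{P}[\mathcal{M}(G)\in S]-e^{\epsilon'}\,\mathbb{P}[\mathcal{M}(G^\prime)\in S]\le\mathbb{E}_{Q}\bigl[(\tfrac{dP}{dQ}-e^{\epsilon'})_+\bigr]$ for every measurable $S$, together with the monotonicity of $\delta_{\mathcal{M}}(\cdot)$ that you invoke at the end, is all the argument uses.
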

    \section{Missing proofs}\label{sec: proofs}
    \begin{proof}[Proof of theorem \ref{thm: sampled}]
        The proof contains two steps: In the first step, we prove that without neighborhood sampling, the algorithm is $(\alpha, \epsilon(\alpha))$-RDP. Then in the second step, we construct an algorithm that is \emph{less or equally private} than the procedure \ref{alg: priv_gnn_sample} and could be directly analyzed by \cite[Theorem 9]{wang2019subsampled} such that the privacy guarantee of the algorithm is the one stated in the theorem. \\
        \textbf{Step $1$}: We ignore neighborhood sampling and consider the first layer. By \cite[proposition 3]{mironov2017renyi}, the collection of perturbed embeddings $\{\check{h}^{(1)}_v\}_{v \in V}$, with $\check{h}_v^{(1)} = \widetilde{h}_v^{(1)} + N(0, \theta^2 I_d)$, 
        is $\left(\alpha, \dfrac{\alpha \se_1^2}{2\theta^2}\right)$-\renyi differentially private for any $\alpha > 1$. Since the nonlinear transform does not involve edge information and is therefore treated as a post-processing mechanism \cite{dwork2014algorithmic}, it follows that the collection of transformed embeddings $\{h^{(1)}_v\}_{v \in V}$, with $h^{(1)}_v = \sigma\left(\check{h}_v^{(1)}\right)$, is also $\left(\alpha, \dfrac{\alpha \se_1^2}{2\theta^2}\right)$-\renyi differentially private for any $\alpha > 1$. Now we view the operation in a single layer as a base mechanism, an $L$-layer perturbed message passing procedure could thus be viewed as composing the base mechanism for $L$ times. Then it follows by the composition theorem of \renyi differential privacy \cite[Proposition 1]{mironov2017renyi} that the non-sampling version is $(\alpha, \epsilon(\alpha))$-RDP.\\
        \textbf{Step $2$}: First we introduce some additional notations: Denote $G^{(L)}_v$ as the $L$-layer rooted subgraph with root node $v \in V$ produced by a neighborhood sampler. Then each training batch consists of $B$ randomly chosen subgraphs $(G^{(L)}_{v_1}, \ldots, G^{(L)}_{v_B})$ with root nodes $(v_1, \ldots, v_B)$, further denote $G^{(L)}_B$ as the graph generated via combining $(G^{(L)}_{v_1}, \ldots, G^{(L)}_{v_B})$ with node set $V^{(L)}_B$ and edge set $E^{(L)}_B$. Let $N_e$ be the maximum number of possible subgraphs that any specific edge might affect after an $L$-layer message passing procedure, then we may bound the probability of the event that any specific edge $e \in E$ is contained in $G^{(L)}_B$
        \begin{align}\label{eqn: edge_prob}
            \max_{e \in E}\pp{e \in E^{(L)}_B} \le 1 - \dfrac{\binom{N_T - N_e}{B}}{\binom{N_T}{B}}
        \end{align}
        Since the maximum degree is bounded from above by $D$, we further bound the above probability by bounding $N_e$
        \begin{align*}
            N_e \le 2 \sum_{l=0}^{L-1} D^l = \dfrac{2(D^L - 1)}{D - 1},
        \end{align*}
        yielding
        \begin{align*}
            \max_{e \in E}\pp{e \in E^{(L)}_B} \le \gamma := 1 - \dfrac{\binom{N_T - \frac{2(D^L - 1)}{D - 1}}{B}}{\binom{N_T}{B}}
        \end{align*}
        Next, we construct an algorithm $\mathcal{A}$ as follows: For a batch of size $B$, the algorithm first randomly samples $B$ nodes, then independently samples $\lfloor\gamma |E| \rfloor$ edges to form a subgraph $G^{\mathcal{A}}_B$. Then it returns the result via running a non-sampled version of algorithm \ref{alg: priv_gnn_sample} over $G^{\mathcal{A}}_B$. Here note the fact that for any edge $e$, the probability that $e$ is contained in $G^{(L)}_B$ is no greater than the probability that it is contained in $G^{\mathcal{A}}_B$. Therefore, algorithm $\mathcal{A}$ is less or equally private than the procedure \ref{alg: priv_gnn_sample}.\par\noindent
        Since the privacy guarantee of algorithm $\mathcal{A}$ can be directly analyzed by \cite[Theorem 9]{wang2019subsampled}, yielding a \renyi differential privacy guarantee of $(\alpha, \epsilon_\gamma(\alpha))$ with $\epsilon_\gamma(\alpha)$ defined in \eqref{eqn: sampling_rdp}. The result of the theorem follows.
    \end{proof}
    In the proofs of propotision \ref{prop: gin_es} and \ref{prop: gcn_es}, \textbf{we will prove for an arbitrary weight matrix $W$} and the result trivially applies to the weight matrices in each layer of the message passing procedure.
    \begin{proof}[Proof of proposition \ref{prop: gin_es}]
        We inherit the notation from definition \ref{def: edge_sensitivity} that $G^\prime$ is the adjacent graph via removing the edge $(u^*, v^*)$ from $G$. Write the summation pooling update rule as
        \begin{align}
            \widetilde{h}_v \leftarrow \sum_{u \in N(v)} W h_u,\quad \forall v \in V
        \end{align}
        Note that the only two node embeddings that get affected by the removal is $h_{v^*}$ and $h_{u^*}$. For node $v^*$, it follows that
        \begin{align}
            \norm{\widetilde{h}_{v^*} - \widetilde{h}^\prime_{v^*}} = \norm{W h_{u^*}} \le \opnorm{W}\norm{h_{u^*}} = \opnorm{W}.
        \end{align}
        Where the last equality follows since the input representations are $\ell_2$-normalized. The same argument leads to 
        \begin{align}
            \norm{\widetilde{h}_{u^*} - \widetilde{h}^\prime_{u^*}} \le \opnorm{W}.
        \end{align}
        Then we arrive at
        \begin{align}\label{eqn: edge_sym}
            \begin{aligned}
                &\sqrt{\sum_{v \in V}\|\widetilde{h}_v - \widetilde{h}^\prime_v\|_2^2} \\
                =& \sqrt{\norm{\widetilde{h}_{u^*} - \widetilde{h}^\prime_{u^*}}^2 + \norm{\widetilde{h}_{v^*} - \widetilde{h}^\prime_{v^*}}^2} \\
                \le& \sqrt{2}\opnorm{W}.
            \end{aligned}
        \end{align}
    \end{proof}
    \begin{proof}[Proof of proposition \ref{prop: gcn_es}]
        Recall the update rule of GCN \cite{kipf2016semi}
        \begin{align}\label{eqn: gcn_kw}
            \widetilde{h}_v \leftarrow \dfrac{W h_v}{d_v + 1} + \sum_{u \in N(v)}\frac{W h_u}{\sqrt{d_v + 1}\sqrt{d_u + 1}}
        \end{align}
        Following similar arguments in the proof of proposition \ref{prop: gin_es}, we first bound the difference between $\widetilde{h}_{v^*}$ and $\widetilde{h}^\prime_{v^*}$ in $\ell_2$ norm. First we inspect
        \begin{align}
            \begin{aligned}
                &\widetilde{h}_{v^*} - \widetilde{h}^\prime_{v^*} \\
                =& - \dfrac{W h_{v^*}}{d_{v^*}(d_{v^*} + 1)} + \dfrac{W h_{u^*}}{\sqrt{d_{u^*} + 1}\sqrt{d_{v^*} + 1}} \\
                &+ \sum_{u \in N({v^*})\setminus \{u^*\}}\left(\dfrac{W h_u}{\sqrt{d_v + 1}\sqrt{d_u + 1}} - \dfrac{W h_u}{\sqrt{d_v}\sqrt{d_u + 1}}\right) \\
                :=& \mathcal{T}_1 + \mathcal{T}_2 + \mathcal{T}_3
            \end{aligned}
        \end{align}
        Bounding $\mathcal{T}_1$ and $\mathcal{T}_2$ are straightforward
        \begin{align}
            &\mathcal{T}_1 \le \dfrac{\opnorm{W}}{\mind(\mind + 1)} \label{eqn: t1}\\
            &\mathcal{T}_2 \le \dfrac{\opnorm{W}}{\mind + 1}\label{eqn: t2}
        \end{align}
        Where we use the minimum degree assumption \eqref{eqn: min_degree}. To bound $\mathcal{T}_3$, we use the inequality
        \begin{align}\label{eqn: useful_ineq}
            \forall x > 0, \quad \frac{1}{x} - \frac{1}{x+1} \le \frac{1}{2x^{3/2}}
        \end{align}
        Now we proceed as follows:
        \begin{align}
            &\norm{\mathcal{T}_3} \\
            \le& \sum_{u \in N({v^*})\setminus \{u^*\}} \norm{
                \dfrac{W h_u}{\sqrt{d_v + 1}\sqrt{d_u + 1}} - \dfrac{W h_u}{\sqrt{d_v}\sqrt{d_u + 1}}
            }\\
            \le& \sum_{u \in N({v^*})\setminus \{u^*\}} \dfrac{\opnorm{W}}{\sqrt{d_u + 1}}\left(\frac{1}{\sqrt{d_v}} - \frac{1}{\sqrt{d_v + 1}}\right)\\
            \le& \sum_{u \in N({v^*})\setminus \{u^*\}} \dfrac{\opnorm{W}}{2 \sqrt{d_u + 1} d_v^{3/2}} \tag*{By inequality \eqref{eqn: useful_ineq}}\\
            \le& \sum_{u \in N({v^*})\setminus \{u^*\}} \dfrac{\opnorm{W}}{2 \sqrt{\mind + 1} d_v^{3/2}}
            \tag*{By assumption \eqref{eqn: min_degree}} \\
            =& \dfrac{\opnorm{W} (d_v - 1)}{2\sqrt{\mind + 1}d_v^{3/2}} \label{eqn: t3_tmp}
        \end{align}
        To further bound \eqref{eqn: t3_tmp}, observe that the function
        \begin{align}
            f(x) = \dfrac{x - 1}{x^{3/2}},\qquad x > 1
        \end{align}
        attains its maximum at $x = 3$, and becomes monotonically decreasing as $x \ge 3$. Since $d_v \ge \mind$, it suffices to check the case for $\mind = 2$ and $\mind \ge 3$ separately. For $\mind = 2$, we have
        \begin{align}\label{eqn: dmin_2}
            \norm{\mathcal{T}_3} \le \dfrac{\opnorm{W} (3 - 1)}{2\sqrt{2 + 1} 3^{3/2}} < \dfrac{\opnorm{W} (2 - 1)}{2\sqrt{2} 2^{3/2}} = \dfrac{\opnorm{W} (\mind - 1)}{2\sqrt{\mind} \mind^{3/2}}
        \end{align}
        For $\mind \ge 3$, we have
        \begin{align}\label{eqn: dmin_ge_3}
            \norm{\mathcal{T}_3} \le \dfrac{\opnorm{W} (\mind - 1)}{2\sqrt{\mind + 1}\mind^{3/2}} < \dfrac{\opnorm{W} (\mind - 1)}{2\sqrt{\mind}\mind^{3/2}}
        \end{align}
        Combining \eqref{eqn: dmin_2} and \eqref{eqn: dmin_ge_3} we get
        \begin{align}
            \norm{\mathcal{T}_3} \le \dfrac{\opnorm{W}(1 - 1/\mind)}{2\mind}.\label{eqn: t3}
        \end{align}
        Finally, combine \eqref{eqn: t1}, \eqref{eqn: t2} and \eqref{eqn: t3}, and then use the argument in \eqref{eqn: edge_sym} yield the result.
    \end{proof}
    \section{A complete report of empirical evaluations}\label{sec: exp_full}
    \subsection{Datasets}\label{sec: dataset}
    \textbf{The \ogb dataset} is an undirected and unweighted graph that represents an Amazon product co-purchasing network \cite{hu2020open}. Nodes represent products sold on Amazon, and edges between two products indicate that the products are purchased together. The node features are generated as dimensionality-reduced bag-of-words of the product descriptions. The learning task is to predict the category of a product in a multi-class classification setup with $47$ classes. We took the dataset and train/validation/test splitting from the official implementation available in the \href{https://github.com/snap-stanford/ogb}{\texttt{ogb} library}.\par\noindent
    \textbf{The Reddit dataset} is a graph dataset from Reddit posts made in September, $2014$. Nodes represent Reddit posts; two posts are connected if the same user comments on both. The node features are generated by combining the word embeddings of the corresponding post's metadata, as described in \cite{hamilton2017inductive}. The learning task is to predict which community different Reddit posts belong to, with $41$ classes. We use the training/validation/testing splitting from \cite{hamilton2017inductive}.\par\noindent
    \textbf{The Finance dataset} This dataset is generated from transaction records collected from one of the world's leading online payment systems. The underlying graph is generated by treating users as nodes, and two nodes are connected if at least one transaction occurred between corresponding users within a predefined time period. The business goal is to identify risky users which is cast into an algorithmic problem of node classification with a binary label. The node features are obtained via statistical summaries of corresponding users' behavior on the platform during a specific time period. The training and testing datasets are constructed under two distinct time windows with no overlap. \par\noindent
    We list the summary statistics in table \ref{tab: dataset_summary}
    \begin{table}
        \centering
        \caption{Summary statistics of the evaluation datasets}
        \begin{tabular}{lccc}
            \toprule
            & \ogb & Reddit & Alipay \\
            \midrule
            \# Nodes & $2449029$ & $232965$ & $1132511$ \\
            \# Edges & $123718280$ & $114615892$ & $2447370$ \\
            \# Training nodes & $196615$ & $153431$ & $848963$ \\
            \# Node features & $100$ & $602$ & $155$\\
            \# Classes & $47$ & $41$ & $2$\\
            \bottomrule
        \end{tabular}
        \label{tab: dataset_summary}
    \end{table}

    \subsection{A differentially private analysis of degree distributions}\label{sec: histogram}
    While all three datasets are large in scale (i.e., with the number of nodes exceeding $100,000$), they differ significantly in their degree distributions. Specifically, the average node degree is much higher in the \ogb ($\approx 50$) and Reddit dataset ($\approx 490$) than that in the Finance dataset ($\approx 2.2$). Following literature in random graph theory \cite{chung2001diameter}, we might consider \ogb and Reddit as dense graphs (with average degree $\gg \log(N)$) and Finance as a sparse graph (with average degree $\ll \log(N)$). For a better illustration, we conduct a differentially private analysis of degree distribution (with $(0.1, 0)$-differential privacy). Since we are basically interested in graphs with bounded degrees (and enforcing the property using neighborhood sampling), during the computation of degree distributions, we group all nodes with degrees over $50$ to a single category (i.e., with degrees greater than or equal to $50$). As a result, the final histogram represents counts of nodes under degree $\{0, 1, \ldots, 49, \ge 50\}$. The analysis is based on the trivial fact that the addition and removal of any single edge would change the degree of $2$ nodes by exactly $1$, therefore, the $\ell_1$ sensitivity \cite{dwork2014algorithmic} of the degree distribution histogram query is exactly $2$. By standard Laplacian mechanism \cite{dwork2006calibrating, dwork2014algorithmic}, we add to each count an independent Laplacian noise with scale $\frac{2}{\epsilon}$ with $\epsilon = 0.1$. The resulting private histograms are shown in figure \ref{fig: histograms} with counts reported at a logarithmic scale.
    \begin{figure}[h]
        \centering
        \begin{subfigure}{0.32\linewidth}
            \resizebox{\linewidth}{\linewidth}{
\begin{tikzpicture}

\definecolor{darkgray176}{RGB}{176,176,176}
\definecolor{steelblue31119180}{RGB}{31,119,180}

\begin{axis}[
scale only axis,
width=4cm,
height=3cm,
tick align=outside,
tick pos=left,
title={\ogb},
tick label style={font=\tiny},
every tick/.style={
black,
semithick,
},
x label style={at={(axis description cs:0.5,-0.1)},anchor=north,font=\tiny},
y label style={at={(axis description cs:-0.1,.5)},rotate=90,anchor=south,font=\tiny},
x grid style={darkgray176},
xlabel={degree},
xmin=-2.94, xmax=52.94,
xtick style={color=black},
xtick={0, 10, 20, 30, 40, 50},
ytick={0, 1, 2, 3, 4, 5, 6},
y grid style={darkgray176},
ylabel={$\log_{10}(\text{counts})$},
ymin=0, ymax=6.2,
ytick style={color=black}
]
\draw[draw=none,fill=steelblue31119180,fill opacity=0.8] (axis cs:-0.4,0) rectangle (axis cs:0.4,4.68512988315264);
\draw[draw=none,fill=steelblue31119180,fill opacity=0.8] (axis cs:0.6,0) rectangle (axis cs:1.4,5.01284447239621);
\draw[draw=none,fill=steelblue31119180,fill opacity=0.8] (axis cs:1.6,0) rectangle (axis cs:2.4,4.93132419991988);
\draw[draw=none,fill=steelblue31119180,fill opacity=0.8] (axis cs:2.6,0) rectangle (axis cs:3.4,4.87879583686076);
\draw[draw=none,fill=steelblue31119180,fill opacity=0.8] (axis cs:3.6,0) rectangle (axis cs:4.4,4.86808413060804);
\draw[draw=none,fill=steelblue31119180,fill opacity=0.8] (axis cs:4.6,0) rectangle (axis cs:5.4,4.84773128905974);
\draw[draw=none,fill=steelblue31119180,fill opacity=0.8] (axis cs:5.6,0) rectangle (axis cs:6.4,4.81657083377104);
\draw[draw=none,fill=steelblue31119180,fill opacity=0.8] (axis cs:6.6,0) rectangle (axis cs:7.4,4.79395596400676);
\draw[draw=none,fill=steelblue31119180,fill opacity=0.8] (axis cs:7.6,0) rectangle (axis cs:8.4,4.7650999416694);
\draw[draw=none,fill=steelblue31119180,fill opacity=0.8] (axis cs:8.6,0) rectangle (axis cs:9.4,4.73676863215884);
\draw[draw=none,fill=steelblue31119180,fill opacity=0.8] (axis cs:9.6,0) rectangle (axis cs:10.4,4.69679063139526);
\draw[draw=none,fill=steelblue31119180,fill opacity=0.8] (axis cs:10.6,0) rectangle (axis cs:11.4,4.66483669625984);
\draw[draw=none,fill=steelblue31119180,fill opacity=0.8] (axis cs:11.6,0) rectangle (axis cs:12.4,4.63169452117734);
\draw[draw=none,fill=steelblue31119180,fill opacity=0.8] (axis cs:12.6,0) rectangle (axis cs:13.4,4.59988000496342);
\draw[draw=none,fill=steelblue31119180,fill opacity=0.8] (axis cs:13.6,0) rectangle (axis cs:14.4,4.57694823465041);
\draw[draw=none,fill=steelblue31119180,fill opacity=0.8] (axis cs:14.6,0) rectangle (axis cs:15.4,4.54576120449756);
\draw[draw=none,fill=steelblue31119180,fill opacity=0.8] (axis cs:15.6,0) rectangle (axis cs:16.4,4.52371680533396);
\draw[draw=none,fill=steelblue31119180,fill opacity=0.8] (axis cs:16.6,0) rectangle (axis cs:17.4,4.49928447167268);
\draw[draw=none,fill=steelblue31119180,fill opacity=0.8] (axis cs:17.6,0) rectangle (axis cs:18.4,4.47838926327768);
\draw[draw=none,fill=steelblue31119180,fill opacity=0.8] (axis cs:18.6,0) rectangle (axis cs:19.4,4.46451516501767);
\draw[draw=none,fill=steelblue31119180,fill opacity=0.8] (axis cs:19.6,0) rectangle (axis cs:20.4,4.43999104177094);
\draw[draw=none,fill=steelblue31119180,fill opacity=0.8] (axis cs:20.6,0) rectangle (axis cs:21.4,4.41952154949178);
\draw[draw=none,fill=steelblue31119180,fill opacity=0.8] (axis cs:21.6,0) rectangle (axis cs:22.4,4.40719910831528);
\draw[draw=none,fill=steelblue31119180,fill opacity=0.8] (axis cs:22.6,0) rectangle (axis cs:23.4,4.394219041963);
\draw[draw=none,fill=steelblue31119180,fill opacity=0.8] (axis cs:23.6,0) rectangle (axis cs:24.4,4.38018805921623);
\draw[draw=none,fill=steelblue31119180,fill opacity=0.8] (axis cs:24.6,0) rectangle (axis cs:25.4,4.36550144261016);
\draw[draw=none,fill=steelblue31119180,fill opacity=0.8] (axis cs:25.6,0) rectangle (axis cs:26.4,4.35385314736583);
\draw[draw=none,fill=steelblue31119180,fill opacity=0.8] (axis cs:26.6,0) rectangle (axis cs:27.4,4.33926692976464);
\draw[draw=none,fill=steelblue31119180,fill opacity=0.8] (axis cs:27.6,0) rectangle (axis cs:28.4,4.3377730164862);
\draw[draw=none,fill=steelblue31119180,fill opacity=0.8] (axis cs:28.6,0) rectangle (axis cs:29.4,4.31663531210145);
\draw[draw=none,fill=steelblue31119180,fill opacity=0.8] (axis cs:29.6,0) rectangle (axis cs:30.4,4.30811001107435);
\draw[draw=none,fill=steelblue31119180,fill opacity=0.8] (axis cs:30.6,0) rectangle (axis cs:31.4,4.30368666093213);
\draw[draw=none,fill=steelblue31119180,fill opacity=0.8] (axis cs:31.6,0) rectangle (axis cs:32.4,4.29945758890528);
\draw[draw=none,fill=steelblue31119180,fill opacity=0.8] (axis cs:32.6,0) rectangle (axis cs:33.4,4.2898278585207);
\draw[draw=none,fill=steelblue31119180,fill opacity=0.8] (axis cs:33.6,0) rectangle (axis cs:34.4,4.28129974957903);
\draw[draw=none,fill=steelblue31119180,fill opacity=0.8] (axis cs:34.6,0) rectangle (axis cs:35.4,4.27689172488416);
\draw[draw=none,fill=steelblue31119180,fill opacity=0.8] (axis cs:35.6,0) rectangle (axis cs:36.4,4.26377139879968);
\draw[draw=none,fill=steelblue31119180,fill opacity=0.8] (axis cs:36.6,0) rectangle (axis cs:37.4,4.25803159967308);
\draw[draw=none,fill=steelblue31119180,fill opacity=0.8] (axis cs:37.6,0) rectangle (axis cs:38.4,4.2557721128193);
\draw[draw=none,fill=steelblue31119180,fill opacity=0.8] (axis cs:38.6,0) rectangle (axis cs:39.4,4.2540334565414);
\draw[draw=none,fill=steelblue31119180,fill opacity=0.8] (axis cs:39.6,0) rectangle (axis cs:40.4,4.24916701175067);
\draw[draw=none,fill=steelblue31119180,fill opacity=0.8] (axis cs:40.6,0) rectangle (axis cs:41.4,4.24335357733532);
\draw[draw=none,fill=steelblue31119180,fill opacity=0.8] (axis cs:41.6,0) rectangle (axis cs:42.4,4.23610171388167);
\draw[draw=none,fill=steelblue31119180,fill opacity=0.8] (axis cs:42.6,0) rectangle (axis cs:43.4,4.2421367328053);
\draw[draw=none,fill=steelblue31119180,fill opacity=0.8] (axis cs:43.6,0) rectangle (axis cs:44.4,4.22252708064072);
\draw[draw=none,fill=steelblue31119180,fill opacity=0.8] (axis cs:44.6,0) rectangle (axis cs:45.4,4.22317701908957);
\draw[draw=none,fill=steelblue31119180,fill opacity=0.8] (axis cs:45.6,0) rectangle (axis cs:46.4,4.2173185892572);
\draw[draw=none,fill=steelblue31119180,fill opacity=0.8] (axis cs:46.6,0) rectangle (axis cs:47.4,4.21196690868068);
\draw[draw=none,fill=steelblue31119180,fill opacity=0.8] (axis cs:47.6,0) rectangle (axis cs:48.4,4.19895161799202);
\draw[draw=none,fill=steelblue31119180,fill opacity=0.8] (axis cs:48.6,0) rectangle (axis cs:49.4,4.1913872607817);
\draw[draw=none,fill=steelblue31119180,fill opacity=0.8] (axis cs:49.6,0) rectangle (axis cs:50.4,5.66693496407755);
\end{axis}

\end{tikzpicture}}
        \end{subfigure}
        \begin{subfigure}{0.32\linewidth}
            \resizebox{\linewidth}{\linewidth}{
\begin{tikzpicture}

\definecolor{darkgray176}{RGB}{176,176,176}
\definecolor{steelblue31119180}{RGB}{31,119,180}

\begin{axis}[
scale only axis,
width=4cm,
height=3cm,
tick align=outside,
tick pos=left,
title={Reddit},
tick label style={font=\tiny},
every tick/.style={
black,
semithick,
},
x label style={at={(axis description cs:0.5,-0.1)},anchor=north,font=\tiny},
y label style={at={(axis description cs:-0.1,.5)},rotate=90,anchor=south,font=\tiny},
x grid style={darkgray176},
xlabel={degree},
xmin=-2.94, xmax=52.94,
xtick style={color=black},
xtick={0, 10, 20, 30, 40, 50},
ytick={0, 1, 2, 3, 4, 5, 6},
y grid style={darkgray176},
ylabel={$\log_{10}(\text{counts})$},
ymin=0, ymax=6.2,
ytick style={color=black}
]
\draw[draw=none,fill=steelblue31119180,fill opacity=0.8] (axis cs:-0.4,0) rectangle (axis cs:0.4,1.11403446873716);
\draw[draw=none,fill=steelblue31119180,fill opacity=0.8] (axis cs:0.6,0) rectangle (axis cs:1.4,3.32076979433825);
\draw[draw=none,fill=steelblue31119180,fill opacity=0.8] (axis cs:1.6,0) rectangle (axis cs:2.4,3.26173919575811);
\draw[draw=none,fill=steelblue31119180,fill opacity=0.8] (axis cs:2.6,0) rectangle (axis cs:3.4,3.21165512821711);
\draw[draw=none,fill=steelblue31119180,fill opacity=0.8] (axis cs:3.6,0) rectangle (axis cs:4.4,3.17753728706394);
\draw[draw=none,fill=steelblue31119180,fill opacity=0.8] (axis cs:4.6,0) rectangle (axis cs:5.4,3.14364008630666);
\draw[draw=none,fill=steelblue31119180,fill opacity=0.8] (axis cs:5.6,0) rectangle (axis cs:6.4,3.10991678277271);
\draw[draw=none,fill=steelblue31119180,fill opacity=0.8] (axis cs:6.6,0) rectangle (axis cs:7.4,3.09968158279174);
\draw[draw=none,fill=steelblue31119180,fill opacity=0.8] (axis cs:7.6,0) rectangle (axis cs:8.4,3.08206791494449);
\draw[draw=none,fill=steelblue31119180,fill opacity=0.8] (axis cs:8.6,0) rectangle (axis cs:9.4,3.04688625424562);
\draw[draw=none,fill=steelblue31119180,fill opacity=0.8] (axis cs:9.6,0) rectangle (axis cs:10.4,3.01283837483749);
\draw[draw=none,fill=steelblue31119180,fill opacity=0.8] (axis cs:10.6,0) rectangle (axis cs:11.4,3.01786885585056);
\draw[draw=none,fill=steelblue31119180,fill opacity=0.8] (axis cs:11.6,0) rectangle (axis cs:12.4,3.01911742407034);
\draw[draw=none,fill=steelblue31119180,fill opacity=0.8] (axis cs:12.6,0) rectangle (axis cs:13.4,3.01994781313664);
\draw[draw=none,fill=steelblue31119180,fill opacity=0.8] (axis cs:13.6,0) rectangle (axis cs:14.4,3.00000118463624);
\draw[draw=none,fill=steelblue31119180,fill opacity=0.8] (axis cs:14.6,0) rectangle (axis cs:15.4,2.98587658110611);
\draw[draw=none,fill=steelblue31119180,fill opacity=0.8] (axis cs:15.6,0) rectangle (axis cs:16.4,2.98542769914628);
\draw[draw=none,fill=steelblue31119180,fill opacity=0.8] (axis cs:16.6,0) rectangle (axis cs:17.4,2.95230933180058);
\draw[draw=none,fill=steelblue31119180,fill opacity=0.8] (axis cs:17.6,0) rectangle (axis cs:18.4,2.93196750026732);
\draw[draw=none,fill=steelblue31119180,fill opacity=0.8] (axis cs:18.6,0) rectangle (axis cs:19.4,2.97954961646108);
\draw[draw=none,fill=steelblue31119180,fill opacity=0.8] (axis cs:19.6,0) rectangle (axis cs:20.4,2.91803176750458);
\draw[draw=none,fill=steelblue31119180,fill opacity=0.8] (axis cs:20.6,0) rectangle (axis cs:21.4,2.92685811088573);
\draw[draw=none,fill=steelblue31119180,fill opacity=0.8] (axis cs:21.6,0) rectangle (axis cs:22.4,2.92890908557488);
\draw[draw=none,fill=steelblue31119180,fill opacity=0.8] (axis cs:22.6,0) rectangle (axis cs:23.4,2.92428069634276);
\draw[draw=none,fill=steelblue31119180,fill opacity=0.8] (axis cs:23.6,0) rectangle (axis cs:24.4,2.90902231492123);
\draw[draw=none,fill=steelblue31119180,fill opacity=0.8] (axis cs:24.6,0) rectangle (axis cs:25.4,2.90848648139206);
\draw[draw=none,fill=steelblue31119180,fill opacity=0.8] (axis cs:25.6,0) rectangle (axis cs:26.4,2.9025482619621);
\draw[draw=none,fill=steelblue31119180,fill opacity=0.8] (axis cs:26.6,0) rectangle (axis cs:27.4,2.89762859082945);
\draw[draw=none,fill=steelblue31119180,fill opacity=0.8] (axis cs:27.6,0) rectangle (axis cs:28.4,2.91907951964826);
\draw[draw=none,fill=steelblue31119180,fill opacity=0.8] (axis cs:28.6,0) rectangle (axis cs:29.4,2.87390318160203);
\draw[draw=none,fill=steelblue31119180,fill opacity=0.8] (axis cs:29.6,0) rectangle (axis cs:30.4,2.89487116583545);
\draw[draw=none,fill=steelblue31119180,fill opacity=0.8] (axis cs:30.6,0) rectangle (axis cs:31.4,2.88536276252599);
\draw[draw=none,fill=steelblue31119180,fill opacity=0.8] (axis cs:31.6,0) rectangle (axis cs:32.4,2.88309490914604);
\draw[draw=none,fill=steelblue31119180,fill opacity=0.8] (axis cs:32.6,0) rectangle (axis cs:33.4,2.84942108937763);
\draw[draw=none,fill=steelblue31119180,fill opacity=0.8] (axis cs:33.6,0) rectangle (axis cs:34.4,2.84136117742248);
\draw[draw=none,fill=steelblue31119180,fill opacity=0.8] (axis cs:34.6,0) rectangle (axis cs:35.4,2.84695701013442);
\draw[draw=none,fill=steelblue31119180,fill opacity=0.8] (axis cs:35.6,0) rectangle (axis cs:36.4,2.84010780635781);
\draw[draw=none,fill=steelblue31119180,fill opacity=0.8] (axis cs:36.6,0) rectangle (axis cs:37.4,2.8591399357947);
\draw[draw=none,fill=steelblue31119180,fill opacity=0.8] (axis cs:37.6,0) rectangle (axis cs:38.4,2.84819079732554);
\draw[draw=none,fill=steelblue31119180,fill opacity=0.8] (axis cs:38.6,0) rectangle (axis cs:39.4,2.81224652212452);
\draw[draw=none,fill=steelblue31119180,fill opacity=0.8] (axis cs:39.6,0) rectangle (axis cs:40.4,2.87274041545617);
\draw[draw=none,fill=steelblue31119180,fill opacity=0.8] (axis cs:40.6,0) rectangle (axis cs:41.4,2.83442243814053);
\draw[draw=none,fill=steelblue31119180,fill opacity=0.8] (axis cs:41.6,0) rectangle (axis cs:42.4,2.81954573044401);
\draw[draw=none,fill=steelblue31119180,fill opacity=0.8] (axis cs:42.6,0) rectangle (axis cs:43.4,2.80414129204543);
\draw[draw=none,fill=steelblue31119180,fill opacity=0.8] (axis cs:43.6,0) rectangle (axis cs:44.4,2.81023435179595);
\draw[draw=none,fill=steelblue31119180,fill opacity=0.8] (axis cs:44.6,0) rectangle (axis cs:45.4,2.82994844836097);
\draw[draw=none,fill=steelblue31119180,fill opacity=0.8] (axis cs:45.6,0) rectangle (axis cs:46.4,2.84261094167312);
\draw[draw=none,fill=steelblue31119180,fill opacity=0.8] (axis cs:46.6,0) rectangle (axis cs:47.4,2.8135828082846);
\draw[draw=none,fill=steelblue31119180,fill opacity=0.8] (axis cs:47.6,0) rectangle (axis cs:48.4,2.76417817134916);
\draw[draw=none,fill=steelblue31119180,fill opacity=0.8] (axis cs:48.6,0) rectangle (axis cs:49.4,2.81224652212452);
\draw[draw=none,fill=steelblue31119180,fill opacity=0.8] (axis cs:49.6,0) rectangle (axis cs:50.4,5.2740515790206);
\end{axis}

\end{tikzpicture}}
        \end{subfigure}
        \begin{subfigure}{0.32\linewidth}
            \resizebox{\linewidth}{\linewidth}{
\begin{tikzpicture}

\definecolor{darkgray176}{RGB}{176,176,176}
\definecolor{steelblue31119180}{RGB}{31,119,180}

\begin{axis}[
scale only axis,
width=4cm,
height=3cm,
tick align=outside,
tick pos=left,
title={Finance},
tick label style={font=\tiny},
every tick/.style={
black,
semithick,
},
x label style={at={(axis description cs:0.5,-0.1)},anchor=north,font=\tiny},
y label style={at={(axis description cs:-0.1,.5)},rotate=90,anchor=south,font=\tiny},
x grid style={darkgray176},
xlabel={degree},
xmin=-2.94, xmax=52.94,
xtick style={color=black},
xtick={0, 10, 20, 30, 40, 50},
ytick={0, 1, 2, 3, 4, 5, 6},
y grid style={darkgray176},
ylabel={$\log_{10}(\text{counts})$},
ymin=0, ymax=6.2,
ytick style={color=black}
]
\draw[draw=none,fill=steelblue31119180,fill opacity=0.8] (axis cs:-0.4,0) rectangle (axis cs:0.4,5.66028065618358);
\draw[draw=none,fill=steelblue31119180,fill opacity=0.8] (axis cs:0.6,0) rectangle (axis cs:1.4,5.33102961123361);
\draw[draw=none,fill=steelblue31119180,fill opacity=0.8] (axis cs:1.6,0) rectangle (axis cs:2.4,5.18921699373659);
\draw[draw=none,fill=steelblue31119180,fill opacity=0.8] (axis cs:2.6,0) rectangle (axis cs:3.4,4.94023585274421);
\draw[draw=none,fill=steelblue31119180,fill opacity=0.8] (axis cs:3.6,0) rectangle (axis cs:4.4,4.77641546768407);
\draw[draw=none,fill=steelblue31119180,fill opacity=0.8] (axis cs:4.6,0) rectangle (axis cs:5.4,4.5968462910033);
\draw[draw=none,fill=steelblue31119180,fill opacity=0.8] (axis cs:5.6,0) rectangle (axis cs:6.4,4.44934988685274);
\draw[draw=none,fill=steelblue31119180,fill opacity=0.8] (axis cs:6.6,0) rectangle (axis cs:7.4,4.30487104403719);
\draw[draw=none,fill=steelblue31119180,fill opacity=0.8] (axis cs:7.6,0) rectangle (axis cs:8.4,4.18411401309399);
\draw[draw=none,fill=steelblue31119180,fill opacity=0.8] (axis cs:8.6,0) rectangle (axis cs:9.4,4.05218092341025);
\draw[draw=none,fill=steelblue31119180,fill opacity=0.8] (axis cs:9.6,0) rectangle (axis cs:10.4,3.93935305822368);
\draw[draw=none,fill=steelblue31119180,fill opacity=0.8] (axis cs:10.6,0) rectangle (axis cs:11.4,3.83604998998397);
\draw[draw=none,fill=steelblue31119180,fill opacity=0.8] (axis cs:11.6,0) rectangle (axis cs:12.4,3.71220098082787);
\draw[draw=none,fill=steelblue31119180,fill opacity=0.8] (axis cs:12.6,0) rectangle (axis cs:13.4,3.63414173115035);
\draw[draw=none,fill=steelblue31119180,fill opacity=0.8] (axis cs:13.6,0) rectangle (axis cs:14.4,3.51991607043637);
\draw[draw=none,fill=steelblue31119180,fill opacity=0.8] (axis cs:14.6,0) rectangle (axis cs:15.4,3.4335572494351);
\draw[draw=none,fill=steelblue31119180,fill opacity=0.8] (axis cs:15.6,0) rectangle (axis cs:16.4,3.34785111931177);
\draw[draw=none,fill=steelblue31119180,fill opacity=0.8] (axis cs:16.6,0) rectangle (axis cs:17.4,3.26189810386872);
\draw[draw=none,fill=steelblue31119180,fill opacity=0.8] (axis cs:17.6,0) rectangle (axis cs:18.4,3.17250699912943);
\draw[draw=none,fill=steelblue31119180,fill opacity=0.8] (axis cs:18.6,0) rectangle (axis cs:19.4,3.07139272700505);
\draw[draw=none,fill=steelblue31119180,fill opacity=0.8] (axis cs:19.6,0) rectangle (axis cs:20.4,3.00973608913448);
\draw[draw=none,fill=steelblue31119180,fill opacity=0.8] (axis cs:20.6,0) rectangle (axis cs:21.4,2.92976180598464);
\draw[draw=none,fill=steelblue31119180,fill opacity=0.8] (axis cs:21.6,0) rectangle (axis cs:22.4,2.859541378937);
\draw[draw=none,fill=steelblue31119180,fill opacity=0.8] (axis cs:22.6,0) rectangle (axis cs:23.4,2.7683948795402);
\draw[draw=none,fill=steelblue31119180,fill opacity=0.8] (axis cs:23.6,0) rectangle (axis cs:24.4,2.70814149291428);
\draw[draw=none,fill=steelblue31119180,fill opacity=0.8] (axis cs:24.6,0) rectangle (axis cs:25.4,2.6381610153081);
\draw[draw=none,fill=steelblue31119180,fill opacity=0.8] (axis cs:25.6,0) rectangle (axis cs:26.4,2.59623560048195);
\draw[draw=none,fill=steelblue31119180,fill opacity=0.8] (axis cs:26.6,0) rectangle (axis cs:27.4,2.53614332804906);
\draw[draw=none,fill=steelblue31119180,fill opacity=0.8] (axis cs:27.6,0) rectangle (axis cs:28.4,2.49230120368886);
\draw[draw=none,fill=steelblue31119180,fill opacity=0.8] (axis cs:28.6,0) rectangle (axis cs:29.4,2.32974629258692);
\draw[draw=none,fill=steelblue31119180,fill opacity=0.8] (axis cs:29.6,0) rectangle (axis cs:30.4,2.33379245547389);
\draw[draw=none,fill=steelblue31119180,fill opacity=0.8] (axis cs:30.6,0) rectangle (axis cs:31.4,2.29594372683439);
\draw[draw=none,fill=steelblue31119180,fill opacity=0.8] (axis cs:31.6,0) rectangle (axis cs:32.4,2.27800173380809);
\draw[draw=none,fill=steelblue31119180,fill opacity=0.8] (axis cs:32.6,0) rectangle (axis cs:33.4,2.18659290467401);
\draw[draw=none,fill=steelblue31119180,fill opacity=0.8] (axis cs:33.6,0) rectangle (axis cs:34.4,2.10609344837931);
\draw[draw=none,fill=steelblue31119180,fill opacity=0.8] (axis cs:34.6,0) rectangle (axis cs:35.4,2.08874314235702);
\draw[draw=none,fill=steelblue31119180,fill opacity=0.8] (axis cs:35.6,0) rectangle (axis cs:36.4,2.03210015564903);
\draw[draw=none,fill=steelblue31119180,fill opacity=0.8] (axis cs:36.6,0) rectangle (axis cs:37.4,1.9716067762755);
\draw[draw=none,fill=steelblue31119180,fill opacity=0.8] (axis cs:37.6,0) rectangle (axis cs:38.4,1.94778331727875);
\draw[draw=none,fill=steelblue31119180,fill opacity=0.8] (axis cs:38.6,0) rectangle (axis cs:39.4,1.91735502609051);
\draw[draw=none,fill=steelblue31119180,fill opacity=0.8] (axis cs:39.6,0) rectangle (axis cs:40.4,1.85534558086804);
\draw[draw=none,fill=steelblue31119180,fill opacity=0.8] (axis cs:40.6,0) rectangle (axis cs:41.4,1.7533635532277);
\draw[draw=none,fill=steelblue31119180,fill opacity=0.8] (axis cs:41.6,0) rectangle (axis cs:42.4,1.83040483618724);
\draw[draw=none,fill=steelblue31119180,fill opacity=0.8] (axis cs:42.6,0) rectangle (axis cs:43.4,1.80394405359853);
\draw[draw=none,fill=steelblue31119180,fill opacity=0.8] (axis cs:43.6,0) rectangle (axis cs:44.4,1.71324980275614);
\draw[draw=none,fill=steelblue31119180,fill opacity=0.8] (axis cs:44.6,0) rectangle (axis cs:45.4,1.60928858101722);
\draw[draw=none,fill=steelblue31119180,fill opacity=0.8] (axis cs:45.6,0) rectangle (axis cs:46.4,1.42604498768737);
\draw[draw=none,fill=steelblue31119180,fill opacity=0.8] (axis cs:46.6,0) rectangle (axis cs:47.4,1.65964384277276);
\draw[draw=none,fill=steelblue31119180,fill opacity=0.8] (axis cs:47.6,0) rectangle (axis cs:48.4,1.56432689025009);
\draw[draw=none,fill=steelblue31119180,fill opacity=0.8] (axis cs:48.6,0) rectangle (axis cs:49.4,1.52726052049427);
\draw[draw=none,fill=steelblue31119180,fill opacity=0.8] (axis cs:49.6,0) rectangle (axis cs:50.4,2.58508984086927);
\end{axis}

\end{tikzpicture}}
        \end{subfigure}
        \caption{$(0.1, 0)$-differentially private histograms of three benchmark datasets}
        \label{fig: histograms}
    \end{figure}
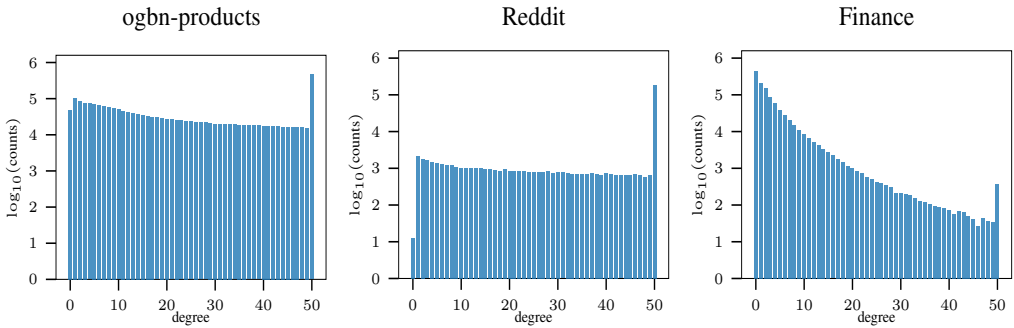
    According to the histograms, we find that both the \ogb and the Reddit contain a large portion of high-degree nodes (as illustrated by the spiking bar at the $\ge 50$ category), while the Finance dataset exhibits a concentration on the lower-degree nodes. In particular, around half of the nodes in the Finance dataset are singular nodes without any neighbors. According to the discussion in section \ref{sec: priv_gnn}, it is expected that the Finance dataset is more challenging for (private) message passing under sum pooling. 

    \subsection{Baselines}\label{sec: baselines}
    \begin{description}
        \item[MLP without edge information] Using an MLP over node features directly is the most trivial solution to the learning task as it totally ignores edge information. Equivalently, this corresponds to removing the PRE module in the \vesper framework. This baseline is of critical importance in evaluating the privacy-utility trade-off since in practice we require the model trained with graph information to significantly outperform MLPs.
        \item[Non-private GNN counterparts] We compare with ordinary GCN and GIN models without privacy guarantees, or equivalently set the $\epsilon$ parameter in the \vesper framework to be infinity. Ideally, the performance of these models should serve as performance upper bounds for corresponding \vesper models.
        \item[GNN models with privacy guarantees] We consider two alternative approaches to private GRL, namely the VFGNN model \cite{zhou2020vertically} and the GAP model \cite{sajadmanesh2022gap}. Both models add Gaussian noise to node embeddings and are implementable in the vertically federated setting. The privacy guarantees stated in \cite{zhou2020vertically} are not directly applicable to the edge privacy setup. Thus we provide an independent privacy analysis in the edge privacy model, similar to theorem \ref{thm: sampled}. For the GAP model \cite{sajadmanesh2022gap}, the effect of sampling was not properly addressed in the original paper, and we instead use a corrected version by noting that the aggregation perturbation mechanism is equivalent to PMP-GIN without learnable parameters and use theorem \ref{thm: sampled} to analyze it.
    \end{description}

    \subsection{Experimental setup}\label{sec: exp_setup_full}
    \textbf{Training configurations} Across all models (i.e., MLP or GNN-related baselines), we used a hidden dimension of $d=128$ for the \ogb dataset, $d=512$ for the Reddit dataset and $d=256$ for the Finance dataset. We use the Adam optimizer with a learning rate $0.001$ across all the tasks. We trained each model for $5$ epochs under the \ogb and Reddit dataset and $2$ epochs under the Finance dataset. For the GNN-related approaches, according to the private degree histogram analysis, we tune the maximum degree with range $\{20, 50\}$ for \ogb and Reddit datasets and $\{10, 20\}$ for the Finance dataset. For \vesper, we tested different decoder architectures as described in section \ref{sec: vesper}. For \vesper using the GCN aggregator, we tune the minimum degree hyperparameter over $\{10, 20, 40\}$ for \ogb and Reddit and $\{3, 5\}$ for Finance. We tested the number of message passing rounds with $L \in \{1, 2, 3\}$. We found that $L=2$ works best in general across all datasets for \vesper and GAP, and $L=1$ works best for VFGNN. We use the \texttt{DGL} framework \cite{wang2019deep} for the implementation of GNN algorithms. \par\noindent
    \textbf{privacy configurations}: All the privacy reports are based on the $(\epsilon, \delta)$-differential privacy model, with $\delta$ being the reciprocal of the number of edges. 
    To adequately inspect the privacy-utility trade-off, we aim to evaluate all the models with differential privacy guarantees under the total privacy costs (privacy budgets) $\epsilon \in \{1, 2, 4, 8, 16, 32\}$, with the privacy costs accounted during the entire training period. 
    We treat the setting where $\epsilon \in \{1, 2\}$ as of \emph{high privacy}, $\epsilon \in \{4, 8\}$ as of \emph{moderate privacy}, and the rest as of \emph{low privacy}.
    For \vesper and VFGNN, we add spectral normalization to each GNN layer. For the privacy accountant, we base our implementation upon AMA implementation available in the \href{https://github.com/google/differential-privacy/tree/main/python}{\texttt{dp-accounting}} library and use an adjusted sampling probability according to theorem \ref{thm: sampled}. For each required privacy level, we compute the minimum scale of Gaussian noise via conducting a binary search over the adjusted AMA, with associating spectral norms of weight matrices fixed at one in all layers. \par\noindent
    \textbf{Evaluation metrics} We adopt classification accuracy (ACC) as the evaluation metric for the \ogb and Reddit datasets, and ROC-AUC score (AUC) as the evaluation metric for the Finance dataset.

    \subsection{A privacy analysis for VFGNN \cite{zhou2020vertically}}\label{sec: vfgnn_privacy}
    The VFGNN model \cite{zhou2020vertically} adds Gaussian noise to the normalized output of an $L$ layer message passing. It is trivial to check that under the edge differential privacy model with noise scale $\theta$, VFGNN is $\left(\alpha, \dfrac{\alpha \mathcal{S}_L^2}{2\theta^2}\right)$-\renyi differentially private for any $\alpha > 1$, with edge sensitivity $\mathcal{S}_L$ slightly generalized (c.f. definition \ref{def: edge_sensitivity}) with $h$ and $h^\prime$ being the output of an $L$-layer message passing procedure. Without loss of generality, we assume the norm of node embeddings before perturbation to be $C$. To compute $\mathcal{S}_L$, first note that the change of any node embedding under an edge addition or removal operation is bounded by $2C$, it remains to bound the number of node embeddings that may get affected upon an edge addition or removal operation. Through similar arguments in the proof of theorem \ref{thm: sampled}, we bound this count from above by $2 \sum_{l=0}^{L-1} D^l$. We conclude the analysis in the following proposition:
    \begin{proposition}[\renyi DP guarantee for VFGNN]
        The output of an $L$-layer VFGNN model with normalization constant $C$ satisfies $\left(\alpha, \dfrac{4\alpha C^2 \left(\sum_{l=0}^{L-1} D^l\right)}{\theta^2}\right)$-\renyi differential privacy for any $\alpha > 1$.
    \end{proposition}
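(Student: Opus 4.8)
The plan is to follow the two-step template already used for Theorem~\ref{thm: sampled}, now specialized to the single, \emph{final-layer} perturbation in VFGNN. First I would record that VFGNN releases $\{h_v + N(0,\theta^2 I_d)\}_{v\in V}$, where $\{h_v\}_{v\in V}$ are the (norm-$C$) outputs of an $L$-layer message passing procedure. By \cite[Proposition 3]{mironov2017renyi}, the Gaussian mechanism applied to a vector-valued query of $\ell_2$-sensitivity $\mathcal{S}_L$ is $\left(\alpha, \alpha \mathcal{S}_L^2/(2\theta^2)\right)$-\renyi differentially private for every $\alpha > 1$; the decoder and any further computation on the released embeddings is post-processing \cite{dwork2014algorithmic} and leaves the guarantee unchanged. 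Hence it only remains to upper-bound the edge sensitivity $\mathcal{S}_L = \max_{G,G'}\sqrt{\sum_{v}\|h_v - h'_v\|_2^2}$, where $G'$ is obtained from $G$ by deleting a single edge $(u^*,v^*)$.

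Second, I would factor the bound on $\mathcal{S}_L^2$ into (i) a per-node bound on $\|h_v - h'_v\|_2$ and (ii) a bound on the number of nodes $v$ with $h_v \neq h'_v$. For (i), since each node embedding has norm $C$ under both $G$ and $G'$, the triangle inequality gives $\|h_v - h'_v\|_2 \le 2C$. For (ii), I would reuse the receptive-field (``sphere of influence'') argument from the proof of Theorem~\ref{thm: sampled}: the $L$-th-layer embedding of a node $v$ depends only on its $L$-hop receptive field, so deleting $(u^*,v^*)$ can affect $h_v$ only when that edge lies in the field, i.e., when $v$ is within graph distance $L-1$ of $u^*$ or of $v^*$; with maximum degree $D$, the number of such nodes is at most $2\sum_{l=0}^{L-1} D^l$. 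Combining, $\mathcal{S}_L^2 \le (2C)^2 \cdot 2\sum_{l=0}^{L-1} D^l = 8C^2\sum_{l=0}^{L-1} D^l$, and substituting into the Gaussian-mechanism bound yields $\left(\alpha, 4\alpha C^2\left(\sum_{l=0}^{L-1} D^l\right)/\theta^2\right)$-RDP, as claimed.

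The only step I would spell out with care — and the main (mild) obstacle — is the receptive-field count in (ii): one must verify that the set of nodes whose $L$-layer output can depend on a fixed edge is contained in the union of the two radius-$(L-1)$ balls around its endpoints, and that iterating the degree bound over hops $0,1,\dots,L-1$ produces the geometric sum $\sum_{l=0}^{L-1} D^l$ per endpoint (the factor $2$ covering both endpoints, with any overlap only making the bound looser). Everything else reduces to the standard RDP facts cited above and the triangle inequality, so no further difficulty is anticipated.
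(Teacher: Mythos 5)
Your proposal matches the paper's argument essentially step for step: treat the single final-layer Gaussian perturbation as a Gaussian mechanism (so the RDP bound is $\alpha\mathcal{S}_L^2/(2\theta^2)$), bound the per-node change by $2C$ via the norm-$C$ normalization and the triangle inequality, and bound the number of affected embeddings by $2\sum_{l=0}^{L-1}D^l$ using the same receptive-field count already invoked in the proof of Theorem~\ref{thm: sampled}, giving $\mathcal{S}_L^2 \le 8C^2\sum_{l=0}^{L-1}D^l$ and hence the stated guarantee. The argument is correct and takes the same route as the paper, just spelled out in slightly more detail on the receptive-field step.
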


    \subsection{A complete report of performance comparisons}\label{sec: full_performance}
    In this section, we present a complete report of empirical performance containing both the concatenation decoder (denote via using the "-C" postfix) and the GRU decoder  (denote via using the "-G" postfix) listed in table \ref{tab: performance_full}. 
    \begin{table*}[h]
        \centering
        \caption{Experimental results over three benchmark datasets using both non-private and private approaches, reported with format \result{\text{\texttt{mean}}}{\text{\texttt{std}}}, with \texttt{mean} and \texttt{std} (abbreviation for standard deviation) computed under $10$ trials for each setting.}
        \resizebox{\textwidth}{!}{%
            \begin{tabular}{l  c  c  c  c  c  c  c  c  c  c  c  c }
    \toprule
    \multicolumn{13}{c}{Non-private approaches} \\
    \midrule
    Model & \multicolumn{4}{c}{\ogb} & \multicolumn{4}{c}{Reddit} & \multicolumn{4}{c}{Finance} \\
    \midrule
    MLP & \multicolumn{4}{c}{\result{61.06}{0.08}} & \multicolumn{4}{c}{\result{71.07}{0.25}} & \multicolumn{4}{c}{\result{71.30}{0.17}}\\
    GIN-C & \multicolumn{4}{c}{\result{76.84}{0.94}} & \multicolumn{4}{c}{\result{94.85}{0.15}} & \multicolumn{4}{c}{\result{79.76}{0.59}}\\
    GIN-G & \multicolumn{4}{c}{\result{76.10}{0.67}} & \multicolumn{4}{c}{\result{94.38}{0.16}} & \multicolumn{4}{c}{\result{79.78}{0.52}}\\
    GCN-C & \multicolumn{4}{c}{\result{78.26}{0.36}} & \multicolumn{4}{c}{\result{94.56}{0.10}} & \multicolumn{4}{c}{\result{79.70}{0.60}}\\
    GCN-G & \multicolumn{4}{c}{\result{75.80}{0.65}} & \multicolumn{4}{c}{\result{94.37}{0.13}} & \multicolumn{4}{c}{\result{80.13}{0.41}}\\
    \midrule
    \multicolumn{13}{c}{Private approaches} \\
    \midrule
    Model & \multicolumn{4}{c}{\ogb} & \multicolumn{4}{c}{Reddit} & \multicolumn{4}{c}{Finance} \\
    \midrule
     & $\epsilon = 4$ & $\epsilon = 8$ & $\epsilon = 16$ & $\epsilon = 32$ & $\epsilon = 4$ & $\epsilon = 8$ & $\epsilon = 16$ & $\epsilon = 32$ & $\epsilon = 4$ & $\epsilon = 8$ & $\epsilon = 16$ & $\epsilon = 32$ \\
     \midrule
    VFGNN & \result{26.94}{0.00}& \result{40.96}{1.74}& \result{56.27}{1.35}& \result{69.57}{0.26}& \result{19.40}{2.56}& \result{31.20}{1.75}& \result{43.67}{1.18}& \result{86.72}{0.33}& \result{53.87}{6.86}& \result{56.17}{5.31}& \result{54.62}{4.09}& \result{52.12}{4.78} \\

    GAP & \result{59.20}{2.13}& \result{65.02}{0.82}& \result{66.20}{2.34}& \result{67.14}{0.34}& \result{76.84}{1.71}& \result{86.59}{0.48}& \result{88.57}{1.69}& \result{89.65}{0.30}& \result{52.02}{9.39}& \result{48.66}{7.14}& \result{59.04}{7.95}& \result{67.54}{4.41} \\
    \midrule

    \textbf{\vesper(GIN-C)} & \result{71.27}{0.70} & \result{74.46}{0.45} & \result{75.18}{0.52} & \result{75.82}{0.92} & \result{82.34}{1.57} & \result{91.52}{0.22} & \result{93.34}{0.20} & \result{93.77}{0.23} & \result{67.03}{1.88} & \result{67.87}{1.56} & \result{68.08}{1.12} & \result{68.64}{0.91} \\
    \textbf{\vesper(GIN-G)} & \result{69.55}{0.61} & \result{73.45}{0.30} & \result{75.36}{0.49} & \result{75.60}{0.51} & \result{75.48}{2.61} & \result{89.60}{0.44} & \result{92.01}{0.48} & \result{92.95}{0.41} & \result{58.90}{0.67} & \result{64.43}{0.23} & \result{66.05}{0.17} & \result{67.10}{2.15} \\
    \textbf{\vesper(GCN-C)} & \result{66.73}{0.40} & \result{67.53}{0.79} & \result{70.13}{0.55} & \result{70.47}{0.41} & \result{89.85}{0.27} & \result{91.28}{0.17} & \result{92.11}{0.20} & \result{92.57}{0.14} & \result{70.44}{0.89} & \result{71.39}{0.91} & \result{72.56}{0.57} & \result{72.72}{1.11} \\
    \textbf{\vesper(GCN-G)} & \result{67.60}{0.40} & \result{68.68}{0.67} & \result{70.02}{0.48} & \result{70.62}{0.33} & \result{89.54}{0.11} & \result{91.02}{0.31} & \result{91.47}{0.20} & \result{92.41}{0.19} & \result{73.57}{0.73} & \result{73.96}{0.81} & \result{73.90}{0.52} & \result{74.62}{0.35} \\
    \bottomrule
\end{tabular}
        }
        \label{tab: performance_full}
    \end{table*}
    We summarize our experimental findings as follows:
    \begin{description}
        \item[Privacy-utility trade-off] Overall, the \vesper framework exhibits competitive privacy-utility trade-off under both GIN and GCN aggregators. Specifically, \vesper using GCN aggregator achieves better performance than the non-private MLP baseline across all three datasets under a decent privacy protection level with $\epsilon = 4.$. The results suggest that the model has the capability of privately learning the structural information brought by the underlying graph. Moreover, if we are allowed to relax the privacy requirement via adopting bigger privacy budgets (i.e., $\epsilon \in \{16, 32\}$), \vesper might obtain high-performance models that closely match the performance of non-private versions. This phenomenon is particularly evident when using the GIN aggregator under \ogb and Reddit datasets, where the utility loss is cut to be around $1$ percent or fewer in the low-privacy regime. 
        \item[Sparse v.s. dense graphs] On one hand, \vesper using the GIN aggregator performs better than the GCN counterpart in terms of privacy-utility trade-off in moderate to high-privacy regime when the underlying graph is dense, i.e., on \ogb and Reddit datasets, which is likely due to the fact that GIN aggregates from the full neighborhood (with high SNR when the underlying graph is dense as discussed in section \ref{sec: priv_gnn}), while GCN requires truncating a significant fraction of neighborhood to control the noise level. On the other hand, for sparse graphs like the Finance dataset, the performance of the GIN aggregator deteriorates significantly (i.e., failing to match the non-private MLP baseline) due to the lower SNR in the underlying graph. In contrast, using a GCN aggregator equipped with truncated message passing allows finer noise level control, leading to much better results on sparse graphs. Meanwhile, the reduced noise level when applying GCN with truncated message passing demonstrates a significant advantage when the privacy requirements are more stringent. In particular, the performance of the GCN aggregator surpasses GIN in the high-privacy regime across all three datasets. On the Reddit dataset, GCN with truncated message passing achieves much better performance than MLP with $\epsilon = 2$, with GIN totally losing its performance at the same privacy level. 
        \item[Comparison against baselines] \vesper demonstrates a better privacy-utility trade-off compared to other private GNN baselines. The advantage over GAP is attributed to the end-to-end nature of the \vesper framework and better handling of message-passing mechanisms. The advantage over VFGNN is attributed to the tighter sensitivity control provided by the layer-wise perturbation strategy, as shown in the relative advantage of GAP to VFGNN. 
        \item[Comparison of decoders] The result shows that concatenation decoder performs better over \ogb and Reddit datasets, while the GRU decoder performs better over Alipay. 
    \end{description}

    \subsection{Protection against membership inference attacks}\label{sec: mia_full}
    We conduct membership inference attacks (MIAs) to empirically assess the resilience of our model against practical privacy risks. Note that although our method provides edge-privacy protection, we adopted the node MIA~\cite{olatunji2021membership} in this experiment due to two considerations. First, no generic and appropriate edge MIA is relevant to the GNN application in this paper. Therefore, the node MIA is a more realistic threat in our scenarios. Second, the node MIA can be considered as a strengthened variant of edge MIA where the adversary obtains extra node information. Therefore, the model with certain node-membership privacy will guarantee stronger edge-membership privacy.\par\noindent
    \textbf{Attack settings} Following ~\cite{olatunji2021membership,sajadmanesh2022gap}, we adopted the TFTS (train on subgraph, test
    on the full graph) setting of node MIA. Namely, the GNN model is trained on a subgraph, and the attack is reduced to a binary classification problem that distinguishes between nodes inside and outside the training subgraph. We consider an attacker with the following knowledge:
    \begin{itemize}[leftmargin=*]
        \item API access to the trained model, which returns a posterior distribution of node classes.
        \item A shadow dataset consists of 1000 nodes per class sampled randomly from the full graph.
        \item Architecture, hyperparameters of the target model.
    \end{itemize}
    For \ogb and Reddit datasets, we followed the attack procedures in~\cite{olatunji2021membership}. We first train a shadow model with the same architecture and hyperparameters as the target model using the shadow dataset. Then, we construct the attack training dataset by querying the shadow model. Finally, we train a 3-layer MLP as the attack model. For Finance dataset, since there are only two node classes, we adopted the entropy-based MIA as suggested in~\cite{song2021systematic} instead of shadow model training. Specifically, we computed the Shannon entropy of the node class distribution output by the target model. The nodes with smaller entropy (larger classification confidence) tend to be in the training subgraph.
    
    We respectively set the privacy budget $\epsilon\in\{1, 2, 4, 8, 16, 32, \infty\}$, where $\epsilon=\infty$ indicated no privacy protection is adopted. We use ROC-AUC (AUC) to evaluate the attack performance. 
    
    \par\noindent
    \textbf{Results} We report the attack performances in Figure~\ref{fig: mia_attack_auc}. From the results, we observe that when privacy protection is disabled ($\epsilon=\infty$), the attacks show non-negligible effectiveness, especially on obgn-products and Reddit datasets. Generally, with the privacy budget getting smaller (privacy getting stronger), the attack performances sharply decline. With an appropriate privacy budget, the attacks on all three datasets are successfully defended with AUC reduced to around 0.5 (random guess baseline). In conclusion, the above observations demonstrate that our method effectively mitigates the risks of privacy attacks with reasonable privacy budgets.
    
    \begin{figure}
        \centering
        \begin{subfigure}{0.25\linewidth}
            \resizebox{\linewidth}{\linewidth}{
\begin{tikzpicture}

\definecolor{darkgray176}{RGB}{176,176,176}
\definecolor{gray}{RGB}{128,128,128}
\definecolor{green}{RGB}{0,128,0}
\definecolor{lightgray204}{RGB}{204,204,204}

\begin{axis}[
legend cell align={left},
legend style={
  fill opacity=0.8,
  draw opacity=1,
  text opacity=1,
  at={(0.03,0.97)},
  anchor=north west,
  draw=lightgray204
},
scale only axis,
width=2.25cm,
height=2cm,
tick align=outside,
tick pos=left,
title={ogbn-products},
title style={yshift=-4pt},
tick label style={font=\tiny},
xticklabel style={yshift=2pt},
yticklabel style={xshift=2pt},
legend style={nodes={scale=0.5, transform shape}},
every tick/.style={
black,
semithick,
},
x label style={at={(axis description cs:0.5,-0.1)},anchor=north,font=\tiny},
y label style={at={(axis description cs:-0.1,.5)},rotate=90,anchor=south,font=\tiny},
x grid style={darkgray176},
xlabel={\(\displaystyle \epsilon\)},
xmin=-0.3, xmax=6.3,
xtick={0,1,2,3,4,5,6},
xticklabels={1,2,4,8,16,32,\(\displaystyle \infty\)},
ylabel={AUC},
ymin=0.431012779067799, ymax=0.802907937653953,
]
\path [draw=gray, fill=gray, opacity=0.2]
(axis cs:0,0.602176660111905)
--(axis cs:0,0.546547380682288)
--(axis cs:1,0.529904540801537)
--(axis cs:2,0.568640844218208)
--(axis cs:3,0.696724663442356)
--(axis cs:4,0.648009936126391)
--(axis cs:5,0.694683482192038)
--(axis cs:6,0.736415792783582)
--(axis cs:6,0.747821794081856)
--(axis cs:6,0.747821794081856)
--(axis cs:5,0.729590082701992)
--(axis cs:4,0.711278402018966)
--(axis cs:3,0.711543666945812)
--(axis cs:2,0.726565760382228)
--(axis cs:1,0.600088459391054)
--(axis cs:0,0.602176660111905)
--cycle;

\path [draw=green, fill=green, opacity=0.2]
(axis cs:0,0.552772369227595)
--(axis cs:0,0.499316778766979)
--(axis cs:1,0.489280708217722)
--(axis cs:2,0.567128732272145)
--(axis cs:3,0.446098922639897)
--(axis cs:4,0.549850633072463)
--(axis cs:5,0.662020794552003)
--(axis cs:6,0.731115861210483)
--(axis cs:6,0.738301930710474)
--(axis cs:6,0.738301930710474)
--(axis cs:5,0.667302460975767)
--(axis cs:4,0.634960972955443)
--(axis cs:3,0.588732108182758)
--(axis cs:2,0.625956354108116)
--(axis cs:1,0.562867629855046)
--(axis cs:0,0.552772369227595)
--cycle;

\addplot [semithick, black, mark=*, mark size=1, mark options={solid}]
table {%
0 0.574362020397096
1 0.564996500096295
2 0.647603302300218
3 0.704134165194084
4 0.679644169072679
5 0.712136782447015
6 0.742118793432719
};
\addlegendentry{GIN}
\addplot [semithick, green, dashed, mark=square*, mark size=1, mark options={solid}]
table {%
0 0.526044573997287
1 0.526074169036384
2 0.596542543190131
3 0.517415515411327
4 0.592405803013953
5 0.664661627763885
6 0.734708895960479
};
\addlegendentry{GCN}
\end{axis}

\end{tikzpicture}}
        \end{subfigure}
        \begin{subfigure}{0.25\linewidth}
            \resizebox{\linewidth}{\linewidth}{
\begin{tikzpicture}

\definecolor{darkgray176}{RGB}{176,176,176}
\definecolor{gray}{RGB}{128,128,128}
\definecolor{green}{RGB}{0,128,0}
\definecolor{lightgray204}{RGB}{204,204,204}

\begin{axis}[
legend cell align={left},
legend style={
  fill opacity=0.8,
  draw opacity=1,
  text opacity=1,
  at={(0.03,0.97)},
  anchor=north west,
  draw=lightgray204
},
scale only axis,
width=2.25cm,
height=2cm,
tick align=outside,
tick pos=left,
title={Reddit},
title style={yshift=-4pt},
tick label style={font=\tiny},
xticklabel style={yshift=2pt},
yticklabel style={xshift=2pt},
legend style={nodes={scale=0.5, transform shape}},
every tick/.style={
black,
semithick,
},
x label style={at={(axis description cs:0.5,-0.1)},anchor=north,font=\tiny},
y label style={at={(axis description cs:-0.1,.5)},rotate=90,anchor=south,font=\tiny},
x grid style={darkgray176},
xlabel={\(\displaystyle \epsilon\)},
xmin=-0.3, xmax=6.3,
xtick={0,1,2,3,4,5,6},
xticklabels={1,2,4,8,16,32,\(\displaystyle \infty\)},
ylabel={AUC},
ymin=0.443367498325971, ymax=0.73515236932523,
]
\path [draw=gray, fill=gray, opacity=0.2]
(axis cs:0,0.500054802167273)
--(axis cs:0,0.499979067434759)
--(axis cs:1,0.487081764245714)
--(axis cs:2,0.491429487093707)
--(axis cs:3,0.477217057040684)
--(axis cs:4,0.581001404889022)
--(axis cs:5,0.630675341920215)
--(axis cs:6,0.717920330274454)
--(axis cs:6,0.721889420643446)
--(axis cs:6,0.721889420643446)
--(axis cs:5,0.674843008535527)
--(axis cs:4,0.611171206762723)
--(axis cs:3,0.54652932531443)
--(axis cs:2,0.519379303803085)
--(axis cs:1,0.533820380279406)
--(axis cs:0,0.500054802167273)
--cycle;

\path [draw=green, fill=green, opacity=0.2]
(axis cs:0,0.500497211681264)
--(axis cs:0,0.498622855484365)
--(axis cs:1,0.480031393230902)
--(axis cs:2,0.46046978823723)
--(axis cs:3,0.456630447007755)
--(axis cs:4,0.533704798268444)
--(axis cs:5,0.503816401633104)
--(axis cs:6,0.637916182764434)
--(axis cs:6,0.685824128299612)
--(axis cs:6,0.685824128299612)
--(axis cs:5,0.586101478204373)
--(axis cs:4,0.56041968614143)
--(axis cs:3,0.563076264034495)
--(axis cs:2,0.514048879835651)
--(axis cs:1,0.509900828848723)
--(axis cs:0,0.500497211681264)
--cycle;

\addplot [semithick, black, mark=*, mark size=1, mark options={solid}]
table {%
0 0.500016934801016
1 0.51045107226256
2 0.505404395448396
3 0.511873191177557
4 0.596086305825873
5 0.652759175227871
6 0.71990487545895
};
\addlegendentry{GIN}
\addplot [semithick, green, dashed, mark=square*, mark size=1, mark options={solid}]
table {%
0 0.499560033582815
1 0.494966111039813
2 0.487259334036441
3 0.509853355521125
4 0.547062242204937
5 0.544958939918739
6 0.661870155532023
};
\addlegendentry{GCN}
\end{axis}
\end{tikzpicture}}
        \end{subfigure}
        \begin{subfigure}{0.25\linewidth}
            \resizebox{\linewidth}{\linewidth}{
\begin{tikzpicture}

\definecolor{darkgray176}{RGB}{176,176,176}
\definecolor{gray}{RGB}{128,128,128}
\definecolor{green}{RGB}{0,128,0}
\definecolor{lightgray204}{RGB}{204,204,204}

\begin{axis}[
legend cell align={left},
legend style={
  fill opacity=0.8,
  draw opacity=1,
  text opacity=1,
  at={(0.03,0.97)},
  anchor=north west,
  draw=lightgray204
},
scale only axis,
width=2.25cm,
height=2cm,
tick align=outside,
tick pos=left,
title={Finance},
tick label style={font=\tiny},
title style={yshift=-4pt},
xticklabel style={yshift=2pt},
yticklabel style={xshift=2pt},
legend style={nodes={scale=0.5, transform shape}},
every tick/.style={
black,
semithick,
},
x label style={at={(axis description cs:0.5,-0.1)},anchor=north,font=\tiny},
y label style={at={(axis description cs:-0.1,.5)},rotate=90,anchor=south,font=\tiny},
x grid style={darkgray176},
xlabel={\(\displaystyle \epsilon\)},
xmin=-0.3, xmax=6.3,
xtick={0,1,2,3,4,5,6},
xticklabels={1,2,4,8,16,32,\(\displaystyle \infty\)},
ylabel={AUC},
ymin=0.374827492589487, ymax=0.701017795046696,
]
\path [draw=gray, fill=gray, opacity=0.2]
(axis cs:0,0.512993688378993)
--(axis cs:0,0.43453785840271)
--(axis cs:1,0.445376487222329)
--(axis cs:2,0.462456322672348)
--(axis cs:3,0.431289839114708)
--(axis cs:4,0.45424108049143)
--(axis cs:5,0.464402734058819)
--(axis cs:6,0.537435839612462)
--(axis cs:6,0.686190963116823)
--(axis cs:6,0.686190963116823)
--(axis cs:5,0.487933310940003)
--(axis cs:4,0.489979829821448)
--(axis cs:3,0.470972627133641)
--(axis cs:2,0.514818555059631)
--(axis cs:1,0.525107788836534)
--(axis cs:0,0.512993688378993)
--cycle;

\path [draw=green, fill=green, opacity=0.2]
(axis cs:0,0.458626776897333)
--(axis cs:0,0.403290966681947)
--(axis cs:1,0.38965432451936)
--(axis cs:2,0.395600106938558)
--(axis cs:3,0.417302759501524)
--(axis cs:4,0.426797114248234)
--(axis cs:5,0.471714698863776)
--(axis cs:6,0.495997459990456)
--(axis cs:6,0.531098581674719)
--(axis cs:6,0.531098581674719)
--(axis cs:5,0.547566118085103)
--(axis cs:4,0.557744845109438)
--(axis cs:3,0.54701172782659)
--(axis cs:2,0.490060128933953)
--(axis cs:1,0.474112207967592)
--(axis cs:0,0.458626776897333)
--cycle;

\addplot [semithick, black, mark=*, mark size=1, mark options={solid}]
table {%
0 0.473765773390851
1 0.485242138029432
2 0.488637438865989
3 0.451131233124175
4 0.472110455156439
5 0.476168022499411
6 0.611813401364642
};
\addlegendentry{GIN}
\addplot [semithick, green, dashed, mark=square*, mark size=1, mark options={solid}]
table {%
0 0.43095887178964
1 0.431883266243476
2 0.442830117936255
3 0.482157243664057
4 0.492270979678836
5 0.509640408474439
6 0.513548020832588
};
\addlegendentry{GCN}
\end{axis}
\end{tikzpicture}}
        \end{subfigure}
        \caption{AUC ($\text{mean}\pm\text{std}$ over $10$ trials) of membership inference attacks.}
        \label{fig: mia_attack_auc}
    \end{figure}
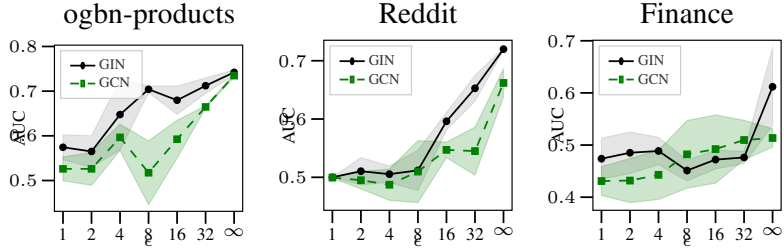
    
    \subsection{A complete report of ablation study}\label{sec: full_ablation}
    In this section, we investigate the effects of several critical hyperparameters in the \vesper framework via assessing their corresponding privacy-utility trade-off. All the results reported in this section will be based on a \vesper instantiation with GCN aggregator and concatenation decoder.\par\noindent
    \textbf{On the effect of maximum degree $D$} In GNN training with neighborhood sampling, a larger $D$ might retain more structural information of the underlying graphs, at the same time weakening the privacy amplification effect, resulting in a higher noise level. We evaluate the effect of $D$ under the range $\{20, 50\}$ for \ogb and Reddit dataset and $\{10, 20\}$ for the Finance dataset. We plot privacy-utility trade-off curves in figure \ref{fig: ablation_max_degree_gin_c}, \ref{fig: ablation_max_degree_gcn_c}, \ref{fig: ablation_max_degree_gin_g}, \ref{fig: ablation_max_degree_gcn_g}. The results imply the trade-off that larger $D$ may not always be beneficial in private GRL, especially for sparse graphs, where efficient control of noise level becomes more important than retaining structural information. \par\noindent
    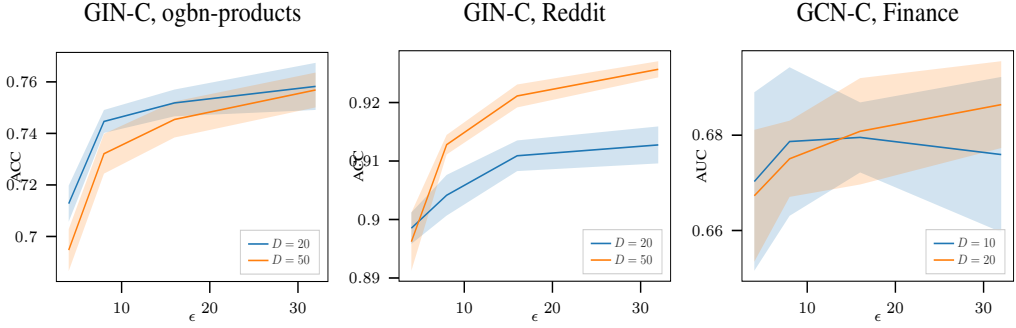
\begin{figure}
        \centering
        \begin{subfigure}{0.32\linewidth}
            \resizebox{\linewidth}{\linewidth}{
\begin{tikzpicture}

\definecolor{crimson2143940}{RGB}{214,39,40}
\definecolor{darkgray176}{RGB}{176,176,176}
\definecolor{darkorange25512714}{RGB}{255,127,14}
\definecolor{forestgreen4416044}{RGB}{44,160,44}
\definecolor{lightgray204}{RGB}{204,204,204}
\definecolor{mediumpurple148103189}{RGB}{148,103,189}
\definecolor{steelblue31119180}{RGB}{31,119,180}

\begin{axis}[
legend cell align={left},
legend style={
  fill opacity=0.8,
  draw opacity=1,
  text opacity=1,
  at={(0.97,0.03)},
  anchor=south east,
  draw=lightgray204
},
scale only axis,
width=4cm,
height=3cm,
tick align=outside,
tick pos=left,
title={GIN-C, \ogb},
tick label style={font=\tiny},
legend style={nodes={scale=0.5, transform shape}},
every tick/.style={
black,
semithick,
},
x label style={at={(axis description cs:0.5,-0.1)},anchor=north,font=\tiny},
y label style={at={(axis description cs:-0.1,.5)},rotate=90,anchor=south,font=\tiny},
x grid style={darkgray176},
xlabel={$\epsilon$},
xmin=2.6, xmax=33.4,
xtick style={color=black},
y grid style={darkgray176},
ylabel={ACC},
ymin=0.682516214574389, ymax=0.771431717097416,
ytick style={color=black}
]
\path [fill=steelblue31119180, fill opacity=0.2]
(axis cs:4,0.7197549408467)
--(axis cs:4,0.70571278732173)
--(axis cs:8,0.740175850764731)
--(axis cs:16,0.746634995034394)
--(axis cs:32,0.749100452170779)
--(axis cs:32,0.767390103346369)
--(axis cs:32,0.767390103346369)
--(axis cs:16,0.757031415429523)
--(axis cs:8,0.749100369688924)
--(axis cs:4,0.7197549408467)
--cycle;

\path [fill=darkorange25512714, fill opacity=0.2]
(axis cs:4,0.702870622651049)
--(axis cs:4,0.686557828325436)
--(axis cs:8,0.724385586066143)
--(axis cs:16,0.738414236908568)
--(axis cs:32,0.750092685463124)
--(axis cs:32,0.763619493566116)
--(axis cs:32,0.763619493566116)
--(axis cs:16,0.752401007471351)
--(axis cs:8,0.739843313694418)
--(axis cs:4,0.702870622651049)
--cycle;

\addplot [semithick, steelblue31119180]
table {%
4 0.712733864084215
8 0.744638110226828
16 0.751833205231958
32 0.758245277758574
};
\addlegendentry{$D=20$}
\addplot [semithick, darkorange25512714]
table {%
4 0.694714225488243
8 0.732114449880281
16 0.745407622189959
32 0.75685608951462
};
\addlegendentry{$D=50$}
\end{axis}

\end{tikzpicture}}
        \end{subfigure}
        \begin{subfigure}{0.32\linewidth}
            \resizebox{\linewidth}{\linewidth}{
\begin{tikzpicture}

\definecolor{crimson2143940}{RGB}{214,39,40}
\definecolor{darkgray176}{RGB}{176,176,176}
\definecolor{darkorange25512714}{RGB}{255,127,14}
\definecolor{forestgreen4416044}{RGB}{44,160,44}
\definecolor{lightgray204}{RGB}{204,204,204}
\definecolor{mediumpurple148103189}{RGB}{148,103,189}
\definecolor{steelblue31119180}{RGB}{31,119,180}

\begin{axis}[
legend cell align={left},
legend style={
  fill opacity=0.8,
  draw opacity=1,
  text opacity=1,
  at={(0.97,0.03)},
  anchor=south east,
  draw=lightgray204
},
scale only axis,
width=4cm,
height=3cm,
tick align=outside,
tick pos=left,
title={GIN-C, Reddit},
tick label style={font=\tiny},
legend style={nodes={scale=0.5, transform shape}},
every tick/.style={
black,
semithick,
},
x label style={at={(axis description cs:0.5,-0.1)},anchor=north,font=\tiny},
y label style={at={(axis description cs:-0.1,.5)},rotate=90,anchor=south,font=\tiny},
x grid style={darkgray176},
xlabel={$\epsilon$},
xmin=2.6, xmax=33.4,
xtick style={color=black},
y grid style={darkgray176},
ylabel={ACC},
ymin=0.889446033417524, ymax=0.928857154503278,
ytick style={color=black}
]
\path [fill=steelblue31119180, fill opacity=0.2]
(axis cs:4,0.901205130589604)
--(axis cs:4,0.895829140455044)
--(axis cs:8,0.900666337896145)
--(axis cs:16,0.908276333396107)
--(axis cs:32,0.909589798806771)
--(axis cs:32,0.915916906397616)
--(axis cs:32,0.915916906397616)
--(axis cs:16,0.913521415378645)
--(axis cs:8,0.907627649860349)
--(axis cs:4,0.901205130589604)
--cycle;

\path [fill=darkorange25512714, fill opacity=0.2]
(axis cs:4,0.901179477467445)
--(axis cs:4,0.891237448012331)
--(axis cs:8,0.91111645676936)
--(axis cs:16,0.919163365854363)
--(axis cs:32,0.924299536647549)
--(axis cs:32,0.927065739908471)
--(axis cs:32,0.927065739908471)
--(axis cs:16,0.923085705111293)
--(axis cs:8,0.914465648323723)
--(axis cs:4,0.901179477467445)
--cycle;

\addplot [semithick, steelblue31119180]
table {%
4 0.898517135522324
8 0.904146993878247
16 0.910898874387376
32 0.912753352602194
};
\addlegendentry{$D=20$}
\addplot [semithick, darkorange25512714]
table {%
4 0.896208462739888
8 0.912791052546541
16 0.921124535482828
32 0.92568263827801
};
\addlegendentry{$D=50$}
\end{axis}

\end{tikzpicture}}
        \end{subfigure}
        \begin{subfigure}{0.32\linewidth}
            \resizebox{\linewidth}{\linewidth}{
\begin{tikzpicture}

\definecolor{crimson2143940}{RGB}{214,39,40}
\definecolor{darkgray176}{RGB}{176,176,176}
\definecolor{darkorange25512714}{RGB}{255,127,14}
\definecolor{forestgreen4416044}{RGB}{44,160,44}
\definecolor{lightgray204}{RGB}{204,204,204}
\definecolor{mediumpurple148103189}{RGB}{148,103,189}
\definecolor{steelblue31119180}{RGB}{31,119,180}

\begin{axis}[
legend cell align={left},
legend style={
  fill opacity=0.8,
  draw opacity=1,
  text opacity=1,
  at={(0.97,0.03)},
  anchor=south east,
  draw=lightgray204
},
scale only axis,
width=4cm,
height=3cm,
tick align=outside,
tick pos=left,
title={GCN-C, Finance},
tick label style={font=\tiny},
legend style={nodes={scale=0.5, transform shape}},
every tick/.style={
black,
semithick,
},
x label style={at={(axis description cs:0.5,-0.1)},anchor=north,font=\tiny},
y label style={at={(axis description cs:-0.1,.5)},rotate=90,anchor=south,font=\tiny},
x grid style={darkgray176},
xlabel={$\epsilon$},
xmin=2.6, xmax=33.4,
xtick style={color=black},
y grid style={darkgray176},
ylabel={AUC},
ymin=0.6493550423701, ymax=0.69772387903286,
ytick style={color=black}
]
\path [fill=steelblue31119180, fill opacity=0.2]
(axis cs:4,0.68900594354819)
--(axis cs:4,0.651553625854771)
--(axis cs:8,0.66307876137867)
--(axis cs:16,0.672186716154177)
--(axis cs:32,0.659643485982827)
--(axis cs:32,0.692224191983989)
--(axis cs:32,0.692224191983989)
--(axis cs:16,0.686880890039866)
--(axis cs:8,0.69425621541033)
--(axis cs:4,0.68900594354819)
--cycle;

\path [fill=darkorange25512714, fill opacity=0.2]
(axis cs:4,0.681123852510348)
--(axis cs:4,0.653452022625872)
--(axis cs:8,0.667051133124192)
--(axis cs:16,0.669652752762486)
--(axis cs:32,0.677301034943372)
--(axis cs:32,0.695525295548189)
--(axis cs:32,0.695525295548189)
--(axis cs:16,0.691976141862098)
--(axis cs:8,0.683044252696769)
--(axis cs:4,0.681123852510348)
--cycle;

\addplot [semithick, steelblue31119180]
table {%
4 0.67027978470148
8 0.6786674883945
16 0.679533803097021
32 0.675933838983408
};
\addlegendentry{$D=10$}
\addplot [semithick, darkorange25512714]
table {%
4 0.66728793756811
8 0.67504769291048
16 0.680814447312292
32 0.68641316524578
};
\addlegendentry{$D=20$}
\end{axis}

\end{tikzpicture}}
        \end{subfigure}
        \caption{Performance ($\text{mean}\pm\text{std}$ over $10$ trials) of \vesper under varying max degree $D$, using GIN aggregator and CONCAT decoder.}
        \label{fig: ablation_max_degree_gin_c}
    \end{figure}
    
    \begin{figure}
        \centering
        \begin{subfigure}{0.32\linewidth}
            \resizebox{\linewidth}{\linewidth}{
\begin{tikzpicture}

\definecolor{crimson2143940}{RGB}{214,39,40}
\definecolor{darkgray176}{RGB}{176,176,176}
\definecolor{darkorange25512714}{RGB}{255,127,14}
\definecolor{forestgreen4416044}{RGB}{44,160,44}
\definecolor{lightgray204}{RGB}{204,204,204}
\definecolor{mediumpurple148103189}{RGB}{148,103,189}
\definecolor{steelblue31119180}{RGB}{31,119,180}

\begin{axis}[
legend cell align={left},
legend style={
  fill opacity=0.8,
  draw opacity=1,
  text opacity=1,
  at={(0.97,0.03)},
  anchor=south east,
  draw=lightgray204
},
scale only axis,
width=4cm,
height=3cm,
tick align=outside,
tick pos=left,
title={GCN-C, \ogb},
tick label style={font=\tiny},
legend style={nodes={scale=0.5, transform shape}},
every tick/.style={
black,
semithick,
},
x label style={at={(axis description cs:0.5,-0.1)},anchor=north,font=\tiny},
y label style={at={(axis description cs:-0.1,.5)},rotate=90,anchor=south,font=\tiny},
x grid style={darkgray176},
xlabel={$\epsilon$},
xmin=2.6, xmax=33.4,
xtick style={color=black},
y grid style={darkgray176},
ylabel={ACC},
ymin=0.66081423309623, ymax=0.712373938231697,
ytick style={color=black}
]
\path [fill=steelblue31119180, fill opacity=0.2]
(axis cs:4,0.671236894707091)
--(axis cs:4,0.663300004137105)
--(axis cs:8,0.669329899195047)
--(axis cs:16,0.69588391704559)
--(axis cs:32,0.700655316445865)
--(axis cs:32,0.708866479088074)
--(axis cs:32,0.708866479088074)
--(axis cs:16,0.706809600753723)
--(axis cs:8,0.679990937589342)
--(axis cs:4,0.671236894707091)
--cycle;

\path [fill=darkorange25512714, fill opacity=0.2]
(axis cs:4,0.671150584038842)
--(axis cs:4,0.663157856056933)
--(axis cs:8,0.667405408960443)
--(axis cs:16,0.693967663977903)
--(axis cs:32,0.693855561992977)
--(axis cs:32,0.710030315270994)
--(axis cs:32,0.710030315270994)
--(axis cs:16,0.707777587347054)
--(axis cs:8,0.683173306510362)
--(axis cs:4,0.671150584038842)
--cycle;

\addplot [semithick, steelblue31119180]
table {%
4 0.667268449422098
8 0.674660418392194
16 0.701346758899657
32 0.704760897766969
};
\addlegendentry{$D=20$}
\addplot [semithick, darkorange25512714]
table {%
4 0.667154220047888
8 0.675289357735403
16 0.700872625662478
32 0.701942938631986
};
\addlegendentry{$D=50$}
\end{axis}

\end{tikzpicture}}
        \end{subfigure}
        \begin{subfigure}{0.32\linewidth}
            \resizebox{\linewidth}{\linewidth}{
\begin{tikzpicture}

\definecolor{crimson2143940}{RGB}{214,39,40}
\definecolor{darkgray176}{RGB}{176,176,176}
\definecolor{darkorange25512714}{RGB}{255,127,14}
\definecolor{forestgreen4416044}{RGB}{44,160,44}
\definecolor{lightgray204}{RGB}{204,204,204}
\definecolor{mediumpurple148103189}{RGB}{148,103,189}
\definecolor{steelblue31119180}{RGB}{31,119,180}

\begin{axis}[
legend cell align={left},
legend style={
  fill opacity=0.8,
  draw opacity=1,
  text opacity=1,
  at={(0.97,0.03)},
  anchor=south east,
  draw=lightgray204
},
scale only axis,
width=4cm,
height=3cm,
tick align=outside,
tick pos=left,
title={GCN-C, Reddit},
tick label style={font=\tiny},
legend style={nodes={scale=0.5, transform shape}},
every tick/.style={
black,
semithick,
},
x label style={at={(axis description cs:0.5,-0.1)},anchor=north,font=\tiny},
y label style={at={(axis description cs:-0.1,.5)},rotate=90,anchor=south,font=\tiny},
x grid style={darkgray176},
xlabel={$\epsilon$},
xmin=2.6, xmax=33.4,
xtick style={color=black},
y grid style={darkgray176},
ylabel={ACC},
ymin=0.889446033417524, ymax=0.928857154503278,
ytick style={color=black}
]
\path [fill=steelblue31119180, fill opacity=0.2]
(axis cs:4,0.901205130589604)
--(axis cs:4,0.895829140455044)
--(axis cs:8,0.900666337896145)
--(axis cs:16,0.908276333396107)
--(axis cs:32,0.909589798806771)
--(axis cs:32,0.915916906397616)
--(axis cs:32,0.915916906397616)
--(axis cs:16,0.913521415378645)
--(axis cs:8,0.907627649860349)
--(axis cs:4,0.901205130589604)
--cycle;

\path [fill=darkorange25512714, fill opacity=0.2]
(axis cs:4,0.901179477467445)
--(axis cs:4,0.891237448012331)
--(axis cs:8,0.91111645676936)
--(axis cs:16,0.919163365854363)
--(axis cs:32,0.924299536647549)
--(axis cs:32,0.927065739908471)
--(axis cs:32,0.927065739908471)
--(axis cs:16,0.923085705111293)
--(axis cs:8,0.914465648323723)
--(axis cs:4,0.901179477467445)
--cycle;

\addplot [semithick, steelblue31119180]
table {%
4 0.898517135522324
8 0.904146993878247
16 0.910898874387376
32 0.912753352602194
};
\addlegendentry{$D=20$}
\addplot [semithick, darkorange25512714]
table {%
4 0.896208462739888
8 0.912791052546541
16 0.921124535482828
32 0.92568263827801
};
\addlegendentry{$D=50$}
\end{axis}

\end{tikzpicture}}
        \end{subfigure}
        \begin{subfigure}{0.32\linewidth}
            \resizebox{\linewidth}{\linewidth}{
\begin{tikzpicture}

\definecolor{crimson2143940}{RGB}{214,39,40}
\definecolor{darkgray176}{RGB}{176,176,176}
\definecolor{darkorange25512714}{RGB}{255,127,14}
\definecolor{forestgreen4416044}{RGB}{44,160,44}
\definecolor{lightgray204}{RGB}{204,204,204}
\definecolor{mediumpurple148103189}{RGB}{148,103,189}
\definecolor{steelblue31119180}{RGB}{31,119,180}

\begin{axis}[
legend cell align={left},
legend style={
  fill opacity=0.8,
  draw opacity=1,
  text opacity=1,
  at={(0.97,0.03)},
  anchor=south east,
  draw=lightgray204
},
scale only axis,
width=4cm,
height=3cm,
tick align=outside,
tick pos=left,
title={GCN-C, Finance},
tick label style={font=\tiny},
legend style={nodes={scale=0.5, transform shape}},
every tick/.style={
black,
semithick,
},
x label style={at={(axis description cs:0.5,-0.1)},anchor=north,font=\tiny},
y label style={at={(axis description cs:-0.1,.5)},rotate=90,anchor=south,font=\tiny},
x grid style={darkgray176},
xlabel={$\epsilon$},
xmin=2.6, xmax=33.4,
xtick style={color=black},
y grid style={darkgray176},
ylabel={AUC},
ymin=0.692762525649503, ymax=0.740592882254496,
ytick style={color=black}
]
\path [fill=steelblue31119180, fill opacity=0.2]
(axis cs:4,0.713308700130452)
--(axis cs:4,0.695578977852054)
--(axis cs:8,0.704781784510618)
--(axis cs:16,0.719910308412774)
--(axis cs:32,0.71606570449207)
--(axis cs:32,0.738418775136088)
--(axis cs:32,0.738418775136088)
--(axis cs:16,0.731320865663228)
--(axis cs:8,0.723067670314412)
--(axis cs:4,0.713308700130452)
--cycle;

\path [fill=darkorange25512714, fill opacity=0.2]
(axis cs:4,0.709416774789518)
--(axis cs:4,0.694936632767912)
--(axis cs:8,0.705384348211806)
--(axis cs:16,0.712760467484455)
--(axis cs:32,0.71376828584445)
--(axis cs:32,0.730895476197186)
--(axis cs:32,0.730895476197186)
--(axis cs:16,0.724262924768823)
--(axis cs:8,0.717196366459426)
--(axis cs:4,0.709416774789518)
--cycle;

\addplot [semithick, steelblue31119180]
table {%
4 0.704443838991253
8 0.713924727412515
16 0.725615587038001
32 0.727242239814079
};
\addlegendentry{$D=10$}
\addplot [semithick, darkorange25512714]
table {%
4 0.702176703778715
8 0.711290357335616
16 0.718511696126639
32 0.722331881020818
};
\addlegendentry{$D=20$}
\end{axis}

\end{tikzpicture}}
        \end{subfigure}
        \caption{Performance ($\text{mean}\pm\text{std}$ over $10$ trials) of \vesper under varying max degree $D$, using GCN aggregator and CONCAT decoder.}
        \label{fig: ablation_max_degree_gcn_c}
    \end{figure}
    
    \begin{figure}
        \centering
        \begin{subfigure}{0.32\linewidth}
            \resizebox{\linewidth}{\linewidth}{
\begin{tikzpicture}

\definecolor{crimson2143940}{RGB}{214,39,40}
\definecolor{darkgray176}{RGB}{176,176,176}
\definecolor{darkorange25512714}{RGB}{255,127,14}
\definecolor{forestgreen4416044}{RGB}{44,160,44}
\definecolor{lightgray204}{RGB}{204,204,204}
\definecolor{mediumpurple148103189}{RGB}{148,103,189}
\definecolor{steelblue31119180}{RGB}{31,119,180}

\begin{axis}[
legend cell align={left},
legend style={
  fill opacity=0.8,
  draw opacity=1,
  text opacity=1,
  at={(0.97,0.03)},
  anchor=south east,
  draw=lightgray204
},
scale only axis,
width=4cm,
height=3cm,
tick align=outside,
tick pos=left,
title={GCN-G, \ogb},
tick label style={font=\tiny},
legend style={nodes={scale=0.5, transform shape}},
every tick/.style={
black,
semithick,
},
x label style={at={(axis description cs:0.5,-0.1)},anchor=north,font=\tiny},
y label style={at={(axis description cs:-0.1,.5)},rotate=90,anchor=south,font=\tiny},
x grid style={darkgray176},
xlabel={$\epsilon$},
xmin=2.6, xmax=33.4,
xtick style={color=black},
y grid style={darkgray176},
ylabel={ACC},
ymin=0.656141676674975, ymax=0.712101918542113,
ytick style={color=black}
]
\path [fill=steelblue31119180, fill opacity=0.2]
(axis cs:4,0.680000881786496)
--(axis cs:4,0.672056489555216)
--(axis cs:8,0.680185788219953)
--(axis cs:16,0.695438475566557)
--(axis cs:32,0.70287122222538)
--(axis cs:32,0.709558271184516)
--(axis cs:32,0.709558271184516)
--(axis cs:16,0.704988438645285)
--(axis cs:8,0.693412676551717)
--(axis cs:4,0.680000881786496)
--cycle;

\path [fill=darkorange25512714, fill opacity=0.2]
(axis cs:4,0.66684372109029)
--(axis cs:4,0.658685324032572)
--(axis cs:8,0.679745515186942)
--(axis cs:16,0.686247378727248)
--(axis cs:32,0.695786921328228)
--(axis cs:32,0.703955249237735)
--(axis cs:32,0.703955249237735)
--(axis cs:16,0.699198588022425)
--(axis cs:8,0.686084990653079)
--(axis cs:4,0.66684372109029)
--cycle;

\addplot [semithick, steelblue31119180]
table {%
4 0.676028685670856
8 0.686799232385835
16 0.700213457105921
32 0.706214746704948
};
\addlegendentry{$D=20$}
\addplot [semithick, darkorange25512714]
table {%
4 0.662764522561431
8 0.682915252920011
16 0.692722983374836
32 0.699871085282982
};
\addlegendentry{$D=50$}
\end{axis}

\end{tikzpicture}}
        \end{subfigure}
        \begin{subfigure}{0.32\linewidth}
            \resizebox{\linewidth}{\linewidth}{
\begin{tikzpicture}

\definecolor{crimson2143940}{RGB}{214,39,40}
\definecolor{darkgray176}{RGB}{176,176,176}
\definecolor{darkorange25512714}{RGB}{255,127,14}
\definecolor{forestgreen4416044}{RGB}{44,160,44}
\definecolor{lightgray204}{RGB}{204,204,204}
\definecolor{mediumpurple148103189}{RGB}{148,103,189}
\definecolor{steelblue31119180}{RGB}{31,119,180}

\begin{axis}[
legend cell align={left},
legend style={
  fill opacity=0.8,
  draw opacity=1,
  text opacity=1,
  at={(0.97,0.03)},
  anchor=south east,
  draw=lightgray204
},
scale only axis,
width=4cm,
height=3cm,
tick align=outside,
tick pos=left,
title={GCN-G, Reddit},
tick label style={font=\tiny},
legend style={nodes={scale=0.5, transform shape}},
every tick/.style={
black,
semithick,
},
x label style={at={(axis description cs:0.5,-0.1)},anchor=north,font=\tiny},
y label style={at={(axis description cs:-0.1,.5)},rotate=90,anchor=south,font=\tiny},
x grid style={darkgray176},
xlabel={$\epsilon$},
xmin=2.6, xmax=33.4,
xtick style={color=black},
y grid style={darkgray176},
ylabel={ACC},
ymin=0.888654526730083, ymax=0.927778206239199,
ytick style={color=black}
]
\path [fill=steelblue31119180, fill opacity=0.2]
(axis cs:4,0.899409682052782)
--(axis cs:4,0.89043287579868)
--(axis cs:8,0.900697515891767)
--(axis cs:16,0.907371961242697)
--(axis cs:32,0.9106158506824)
--(axis cs:32,0.917397003203388)
--(axis cs:32,0.917397003203388)
--(axis cs:16,0.913585617343735)
--(axis cs:8,0.907837033414374)
--(axis cs:4,0.899409682052782)
--cycle;

\path [fill=darkorange25512714, fill opacity=0.2]
(axis cs:4,0.896485500605356)
--(axis cs:4,0.894348027211817)
--(axis cs:8,0.907118899049947)
--(axis cs:16,0.91272935596653)
--(axis cs:32,0.922284795361577)
--(axis cs:32,0.925999857170603)
--(axis cs:32,0.925999857170603)
--(axis cs:16,0.916723276746251)
--(axis cs:8,0.913249842310482)
--(axis cs:4,0.896485500605356)
--cycle;

\addplot [semithick, steelblue31119180]
table {%
4 0.894921278925731
8 0.904267274653071
16 0.910478789293216
32 0.914006426942894
};
\addlegendentry{$D=20$}
\addplot [semithick, darkorange25512714]
table {%
4 0.895416763908586
8 0.910184370680215
16 0.91472631635639
32 0.92414232626609
};
\addlegendentry{$D=50$}
\end{axis}

\end{tikzpicture}}
        \end{subfigure}
        \begin{subfigure}{0.32\linewidth}
            \resizebox{\linewidth}{\linewidth}{
\begin{tikzpicture}

\definecolor{crimson2143940}{RGB}{214,39,40}
\definecolor{darkgray176}{RGB}{176,176,176}
\definecolor{darkorange25512714}{RGB}{255,127,14}
\definecolor{forestgreen4416044}{RGB}{44,160,44}
\definecolor{lightgray204}{RGB}{204,204,204}
\definecolor{mediumpurple148103189}{RGB}{148,103,189}
\definecolor{steelblue31119180}{RGB}{31,119,180}

\begin{axis}[
legend cell align={left},
legend style={
  fill opacity=0.8,
  draw opacity=1,
  text opacity=1,
  at={(0.97,0.03)},
  anchor=south east,
  draw=lightgray204
},
scale only axis,
width=4cm,
height=3cm,
tick align=outside,
tick pos=left,
title={GCN-G, Finance},
tick label style={font=\tiny},
legend style={nodes={scale=0.5, transform shape}},
every tick/.style={
black,
semithick,
},
x label style={at={(axis description cs:0.5,-0.1)},anchor=north,font=\tiny},
y label style={at={(axis description cs:-0.1,.5)},rotate=90,anchor=south,font=\tiny},
x grid style={darkgray176},
xlabel={$\epsilon$},
xmin=2.6, xmax=33.4,
xtick style={color=black},
y grid style={darkgray176},
ylabel={AUC},
ymin=0.720746291810819, ymax=0.752691956320407,
ytick style={color=black}
]
\path [fill=steelblue31119180, fill opacity=0.2]
(axis cs:4,0.743018694014603)
--(axis cs:4,0.728438173641977)
--(axis cs:8,0.731523147425954)
--(axis cs:16,0.733753921838726)
--(axis cs:32,0.739272738149152)
--(axis cs:32,0.751239880660881)
--(axis cs:32,0.751239880660881)
--(axis cs:16,0.744160500084317)
--(axis cs:8,0.747717439163771)
--(axis cs:4,0.743018694014603)
--cycle;

\path [fill=darkorange25512714, fill opacity=0.2]
(axis cs:4,0.746460346895058)
--(axis cs:4,0.723346560232252)
--(axis cs:8,0.732845564766982)
--(axis cs:16,0.722198367470346)
--(axis cs:32,0.742662609660882)
--(axis cs:32,0.749757925331492)
--(axis cs:32,0.749757925331492)
--(axis cs:16,0.745055687786583)
--(axis cs:8,0.744364055100684)
--(axis cs:4,0.746460346895058)
--cycle;

\addplot [semithick, steelblue31119180]
table {%
4 0.73572843382829
8 0.739620293294862
16 0.738957210961521
32 0.745256309405016
};
\addlegendentry{$D=10$}
\addplot [semithick, darkorange25512714]
table {%
4 0.734903453563655
8 0.738604809933833
16 0.733627027628464
32 0.746210267496187
};
\addlegendentry{$D=20$}
\end{axis}

\end{tikzpicture}}
        \end{subfigure}
        \caption{Performance ($\text{mean}\pm\text{std}$ over $10$ trials) of \vesper under varying max degree $D$, using GCN aggregator and GRU decoder.}
        \label{fig: ablation_max_degree_gcn_g}
    \end{figure}

    \begin{figure}
        \centering
        \begin{subfigure}{0.32\linewidth}
            \resizebox{\linewidth}{\linewidth}{
\begin{tikzpicture}

\definecolor{crimson2143940}{RGB}{214,39,40}
\definecolor{darkgray176}{RGB}{176,176,176}
\definecolor{darkorange25512714}{RGB}{255,127,14}
\definecolor{forestgreen4416044}{RGB}{44,160,44}
\definecolor{lightgray204}{RGB}{204,204,204}
\definecolor{mediumpurple148103189}{RGB}{148,103,189}
\definecolor{steelblue31119180}{RGB}{31,119,180}

\begin{axis}[
legend cell align={left},
legend style={
  fill opacity=0.8,
  draw opacity=1,
  text opacity=1,
  at={(0.97,0.03)},
  anchor=south east,
  draw=lightgray204
},
scale only axis,
width=4cm,
height=3cm,
tick align=outside,
tick pos=left,
title={GIN-G, \ogb},
tick label style={font=\tiny},
legend style={nodes={scale=0.5, transform shape}},
every tick/.style={
black,
semithick,
},
x label style={at={(axis description cs:0.5,-0.1)},anchor=north,font=\tiny},
y label style={at={(axis description cs:-0.1,.5)},rotate=90,anchor=south,font=\tiny},
x grid style={darkgray176},
xlabel={$\epsilon$},
xmin=2.6, xmax=33.4,
xtick style={color=black},
y grid style={darkgray176},
ylabel={ACC},
ymin=0.674810962336219, ymax=0.765264439728598,
ytick style={color=black}
]
\path [fill=steelblue31119180, fill opacity=0.2]
(axis cs:4,0.701582728714592)
--(axis cs:4,0.689377606852268)
--(axis cs:8,0.729902731378523)
--(axis cs:16,0.74113015154025)
--(axis cs:32,0.749355694636969)
--(axis cs:32,0.76094532777919)
--(axis cs:32,0.76094532777919)
--(axis cs:16,0.755188074867973)
--(axis cs:8,0.737856615164389)
--(axis cs:4,0.701582728714592)
--cycle;

\path [fill=darkorange25512714, fill opacity=0.2]
(axis cs:4,0.697725651987455)
--(axis cs:4,0.678922484035872)
--(axis cs:8,0.73147129051379)
--(axis cs:16,0.748734374516823)
--(axis cs:32,0.751006771744319)
--(axis cs:32,0.761152918028944)
--(axis cs:32,0.761152918028944)
--(axis cs:16,0.758467182039143)
--(axis cs:8,0.737453123348993)
--(axis cs:4,0.697725651987455)
--cycle;

\addplot [semithick, steelblue31119180]
table {%
4 0.69548016778343
8 0.733879673271456
16 0.748159113204111
32 0.75515051120808
};
\addlegendentry{$D=20$}
\addplot [semithick, darkorange25512714]
table {%
4 0.688324068011663
8 0.734462206931392
16 0.753600778277983
32 0.756079844886631
};
\addlegendentry{$D=50$}
\end{axis}

\end{tikzpicture}}
        \end{subfigure}
        \begin{subfigure}{0.32\linewidth}
            \resizebox{\linewidth}{\linewidth}{
\begin{tikzpicture}

\definecolor{crimson2143940}{RGB}{214,39,40}
\definecolor{darkgray176}{RGB}{176,176,176}
\definecolor{darkorange25512714}{RGB}{255,127,14}
\definecolor{forestgreen4416044}{RGB}{44,160,44}
\definecolor{lightgray204}{RGB}{204,204,204}
\definecolor{mediumpurple148103189}{RGB}{148,103,189}
\definecolor{steelblue31119180}{RGB}{31,119,180}

\begin{axis}[
legend cell align={left},
legend style={
  fill opacity=0.8,
  draw opacity=1,
  text opacity=1,
  at={(0.97,0.03)},
  anchor=south east,
  draw=lightgray204
},
scale only axis,
width=4cm,
height=3cm,
tick align=outside,
tick pos=left,
title={GIN-G, Reddit},
tick label style={font=\tiny},
legend style={nodes={scale=0.5, transform shape}},
every tick/.style={
black,
semithick,
},
x label style={at={(axis description cs:0.5,-0.1)},anchor=north,font=\tiny},
y label style={at={(axis description cs:-0.1,.5)},rotate=90,anchor=south,font=\tiny},
x grid style={darkgray176},
xlabel={$\epsilon$},
xmin=2.6, xmax=33.4,
xtick style={color=black},
y grid style={darkgray176},
ylabel={ACC},
ymin=0.677324365239341, ymax=0.945719932775256,
ytick style={color=black}
]
\path [fill=steelblue31119180, fill opacity=0.2]
(axis cs:4,0.780824927068668)
--(axis cs:4,0.728691616025959)
--(axis cs:8,0.866424149971383)
--(axis cs:16,0.900853965427953)
--(axis cs:32,0.913622026167442)
--(axis cs:32,0.916444577067572)
--(axis cs:32,0.916444577067572)
--(axis cs:16,0.908879801155534)
--(axis cs:8,0.883445695458845)
--(axis cs:4,0.780824927068668)
--cycle;

\path [fill=darkorange25512714, fill opacity=0.2]
(axis cs:4,0.753252706422824)
--(axis cs:4,0.689524163763701)
--(axis cs:8,0.89155721174369)
--(axis cs:16,0.915339149918999)
--(axis cs:32,0.925424626354457)
--(axis cs:32,0.933520134250897)
--(axis cs:32,0.933520134250897)
--(axis cs:16,0.924910028760785)
--(axis cs:8,0.900339095457)
--(axis cs:4,0.753252706422824)
--cycle;

\addplot [semithick, steelblue31119180]
table {%
4 0.754758271547313
8 0.874934922715114
16 0.904866883291744
32 0.915033301617507
};
\addlegendentry{$D=20$}
\addplot [semithick, darkorange25512714]
table {%
4 0.721388435093262
8 0.895948153600345
16 0.920124589339892
32 0.929472380302677
};
\addlegendentry{$D=50$}
\end{axis}

\end{tikzpicture}}
        \end{subfigure}
        \begin{subfigure}{0.32\linewidth}
            \resizebox{\linewidth}{\linewidth}{
\begin{tikzpicture}

\definecolor{crimson2143940}{RGB}{214,39,40}
\definecolor{darkgray176}{RGB}{176,176,176}
\definecolor{darkorange25512714}{RGB}{255,127,14}
\definecolor{forestgreen4416044}{RGB}{44,160,44}
\definecolor{lightgray204}{RGB}{204,204,204}
\definecolor{mediumpurple148103189}{RGB}{148,103,189}
\definecolor{steelblue31119180}{RGB}{31,119,180}

\begin{axis}[
legend cell align={left},
legend style={
  fill opacity=0.8,
  draw opacity=1,
  text opacity=1,
  at={(0.97,0.03)},
  anchor=south east,
  draw=lightgray204
},
scale only axis,
width=4cm,
height=3cm,
tick align=outside,
tick pos=left,
title={GIN-G, Finance},
tick label style={font=\tiny},
legend style={nodes={scale=0.5, transform shape}},
every tick/.style={
black,
semithick,
},
x label style={at={(axis description cs:0.5,-0.1)},anchor=north,font=\tiny},
y label style={at={(axis description cs:-0.1,.5)},rotate=90,anchor=south,font=\tiny},
x grid style={darkgray176},
xlabel={$\epsilon$},
xmin=2.6, xmax=33.4,
xtick style={color=black},
y grid style={darkgray176},
ylabel={AUC},
ymin=0.457487626927219, ymax=0.715391336111662,
ytick style={color=black}
]
\path [fill=steelblue31119180, fill opacity=0.2]
(axis cs:4,0.656195335189903)
--(axis cs:4,0.521713709845578)
--(axis cs:8,0.621752060479667)
--(axis cs:16,0.643608619994545)
--(axis cs:32,0.64952683682845)
--(axis cs:32,0.692497253428399)
--(axis cs:32,0.692497253428399)
--(axis cs:16,0.677473112021095)
--(axis cs:8,0.666962099922711)
--(axis cs:4,0.656195335189903)
--cycle;

\path [fill=darkorange25512714, fill opacity=0.2]
(axis cs:4,0.612215384840749)
--(axis cs:4,0.469210522799239)
--(axis cs:8,0.518206650474516)
--(axis cs:16,0.554684526848958)
--(axis cs:32,0.645561048267037)
--(axis cs:32,0.685931166256786)
--(axis cs:32,0.685931166256786)
--(axis cs:16,0.703668440239641)
--(axis cs:8,0.656412398993672)
--(axis cs:4,0.612215384840749)
--cycle;

\addplot [semithick, steelblue31119180]
table {%
4 0.588954522517741
8 0.644357080201189
16 0.66054086600782
32 0.671012045128424
};
\addlegendentry{$D=10$}
\addplot [semithick, darkorange25512714]
table {%
4 0.540712953819994
8 0.587309524734094
16 0.6291764835443
32 0.665746107261912
};
\addlegendentry{$D=20$}
\end{axis}

\end{tikzpicture}}
        \end{subfigure}
        \caption{Performance ($\text{mean}\pm\text{std}$ over $10$ trials) of \vesper under varying max degree $D$, using GIN aggregator and GRU decoder.}
        \label{fig: ablation_max_degree_gin_g}
    \end{figure}

    \textbf{On the effect of minimum degree $\mind$ for PMP-GCN} As discussed in section \ref{sec: pmp_gcn}, the $\mind$ parameter trades off the amount of structural information involved during message passing and the noise scale during perturbation. We evaluate the effect of $\mind$ under the range $\{10, 20, 40\}$ for \ogb and Reddit datasets and $\{3, 5\}$ for the Finance dataset and plot the resulting curves in figure \ref{fig: ablation_min_degree_gcn_c}, \ref{fig: ablation_min_degree_gcn_g}. We observe two interesting phenomena. First, adopting larger $\mind$ makes the noise scale less sensitive with respect to privacy constraints, resulting in a flatter privacy-utility curve. On the two dense graph datasets, this shows the potential benefits of using a larger $\mind$ when the required privacy level is more stringent. Second, an \emph{crossing effect} is observed on \ogb and Reddit datasets, suggesting that a lower $\mind$ is beneficial in the low-privacy regime, where the noise scale is efficiently controlled and the incorporation of more structural information becomes effective. \par\noindent
    
    \begin{figure}
        \centering
        \begin{subfigure}{0.32\linewidth}
            \resizebox{\linewidth}{\linewidth}{
\begin{tikzpicture}

\definecolor{crimson2143940}{RGB}{214,39,40}
\definecolor{darkgray176}{RGB}{176,176,176}
\definecolor{darkorange25512714}{RGB}{255,127,14}
\definecolor{forestgreen4416044}{RGB}{44,160,44}
\definecolor{lightgray204}{RGB}{204,204,204}
\definecolor{mediumpurple148103189}{RGB}{148,103,189}
\definecolor{steelblue31119180}{RGB}{31,119,180}

\begin{axis}[
legend cell align={left},
legend style={
  fill opacity=0.8,
  draw opacity=1,
  text opacity=1,
  at={(0.97,0.03)},
  anchor=south east,
  draw=lightgray204
},
scale only axis,
width=4cm,
height=3cm,
tick align=outside,
tick pos=left,
title={GCN-C, \ogb},
tick label style={font=\tiny},
legend style={nodes={scale=0.5, transform shape}},
every tick/.style={
black,
semithick,
},
x label style={at={(axis description cs:0.5,-0.1)},anchor=north,font=\tiny},
y label style={at={(axis description cs:-0.1,.5)},rotate=90,anchor=south,font=\tiny},
x grid style={darkgray176},
xlabel={$\epsilon$},
xmin=2.6, xmax=33.4,
xtick style={color=black},
y grid style={darkgray176},
ylabel={ACC},
ymin=0.459247501575414, ymax=0.720753097064867,
ytick style={color=black}
]
\path [fill=steelblue31119180, fill opacity=0.2]
(axis cs:4,0.495488400714695)
--(axis cs:4,0.471134119552208)
--(axis cs:8,0.573825218559298)
--(axis cs:16,0.69588391704559)
--(axis cs:32,0.700655316445865)
--(axis cs:32,0.708866479088074)
--(axis cs:32,0.708866479088074)
--(axis cs:16,0.706809600753723)
--(axis cs:8,0.5919483533363)
--(axis cs:4,0.495488400714695)
--cycle;

\path [fill=darkorange25512714, fill opacity=0.2]
(axis cs:4,0.671236894707091)
--(axis cs:4,0.663300004137105)
--(axis cs:8,0.667405408960443)
--(axis cs:16,0.64872853153904)
--(axis cs:32,0.67507986339463)
--(axis cs:32,0.68263927242039)
--(axis cs:32,0.68263927242039)
--(axis cs:16,0.65834713773981)
--(axis cs:8,0.683173306510362)
--(axis cs:4,0.671236894707091)
--cycle;

\path [fill=forestgreen4416044, fill opacity=0.2]
(axis cs:4,0.589910447416111)
--(axis cs:4,0.586156691260066)
--(axis cs:8,0.640640635143339)
--(axis cs:16,0.64123331115237)
--(axis cs:32,0.640622497518217)
--(axis cs:32,0.649952449467695)
--(axis cs:32,0.649952449467695)
--(axis cs:16,0.646538678341057)
--(axis cs:8,0.647112376368614)
--(axis cs:4,0.589910447416111)
--cycle;

\addplot [semithick, steelblue31119180]
table {%
4 0.483311260133451
8 0.582886785947799
16 0.701346758899657
32 0.704760897766969
};
\addlegendentry{$\mind=10$}
\addplot [semithick, darkorange25512714]
table {%
4 0.667268449422098
8 0.675289357735403
16 0.653537834639425
32 0.67885956790751
};
\addlegendentry{$\mind=20$}
\addplot [semithick, forestgreen4416044]
table {%
4 0.588033569338089
8 0.643876505755977
16 0.643885994746714
32 0.645287473492956
};
\addlegendentry{$\mind=40$}
\end{axis}

\end{tikzpicture}}
        \end{subfigure}
        \begin{subfigure}{0.32\linewidth}
            \resizebox{\linewidth}{\linewidth}{
\begin{tikzpicture}

\definecolor{crimson2143940}{RGB}{214,39,40}
\definecolor{darkgray176}{RGB}{176,176,176}
\definecolor{darkorange25512714}{RGB}{255,127,14}
\definecolor{forestgreen4416044}{RGB}{44,160,44}
\definecolor{lightgray204}{RGB}{204,204,204}
\definecolor{mediumpurple148103189}{RGB}{148,103,189}
\definecolor{steelblue31119180}{RGB}{31,119,180}

\begin{axis}[
legend cell align={left},
legend style={
  fill opacity=0.8,
  draw opacity=1,
  text opacity=1,
  at={(0.97,0.03)},
  anchor=south east,
  draw=lightgray204
},
scale only axis,
width=4cm,
height=3cm,
tick align=outside,
tick pos=left,
title={GCN-C, Reddit},
tick label style={font=\tiny},
legend style={nodes={scale=0.5, transform shape}},
every tick/.style={
black,
semithick,
},
x label style={at={(axis description cs:0.5,-0.1)},anchor=north,font=\tiny},
y label style={at={(axis description cs:-0.1,.5)},rotate=90,anchor=south,font=\tiny},
x grid style={darkgray176},
xlabel={$\epsilon$},
xmin=2.6, xmax=33.4,
xtick style={color=black},
y grid style={darkgray176},
ylabel={ACC},
ymin=0.869623642600484, ymax=0.929801077875518,
ytick style={color=black}
]
\path [fill=steelblue31119180, fill opacity=0.2]
(axis cs:4,0.883923445872697)
--(axis cs:4,0.872358980567531)
--(axis cs:8,0.901390855237169)
--(axis cs:16,0.919163365854363)
--(axis cs:32,0.924299536647549)
--(axis cs:32,0.927065739908471)
--(axis cs:32,0.927065739908471)
--(axis cs:16,0.923085705111293)
--(axis cs:8,0.905097125661526)
--(axis cs:4,0.883923445872697)
--cycle;

\path [fill=darkorange25512714, fill opacity=0.2]
(axis cs:4,0.901205130589604)
--(axis cs:4,0.895829140455044)
--(axis cs:8,0.91111645676936)
--(axis cs:16,0.914355992521677)
--(axis cs:32,0.914660708629969)
--(axis cs:32,0.921520466531153)
--(axis cs:32,0.921520466531153)
--(axis cs:16,0.918514768478628)
--(axis cs:8,0.914465648323723)
--(axis cs:4,0.901205130589604)
--cycle;

\path [fill=forestgreen4416044, fill opacity=0.2]
(axis cs:4,0.89878166463381)
--(axis cs:4,0.891121931222786)
--(axis cs:8,0.897491871866255)
--(axis cs:16,0.901811399876718)
--(axis cs:32,0.903877016652382)
--(axis cs:32,0.905792121455081)
--(axis cs:32,0.905792121455081)
--(axis cs:16,0.905337227665784)
--(axis cs:8,0.900781111635531)
--(axis cs:4,0.89878166463381)
--cycle;

\addplot [semithick, steelblue31119180]
table {%
4 0.878141213220114
8 0.903243990449347
16 0.921124535482828
32 0.92568263827801
};
\addlegendentry{$\mind=10$}
\addplot [semithick, darkorange25512714]
table {%
4 0.898517135522324
8 0.912791052546541
16 0.916435380500152
32 0.918090587580561
};
\addlegendentry{$\mind=20$}
\addplot [semithick, forestgreen4416044]
table {%
4 0.894951797928298
8 0.899136491750893
16 0.903574313771251
32 0.904834569053731
};
\addlegendentry{$\mind=40$}
\end{axis}

\end{tikzpicture}}
        \end{subfigure}
        \begin{subfigure}{0.32\linewidth}
            \resizebox{\linewidth}{\linewidth}{
\begin{tikzpicture}

\definecolor{crimson2143940}{RGB}{214,39,40}
\definecolor{darkgray176}{RGB}{176,176,176}
\definecolor{darkorange25512714}{RGB}{255,127,14}
\definecolor{forestgreen4416044}{RGB}{44,160,44}
\definecolor{lightgray204}{RGB}{204,204,204}
\definecolor{mediumpurple148103189}{RGB}{148,103,189}
\definecolor{steelblue31119180}{RGB}{31,119,180}

\begin{axis}[
legend cell align={left},
legend style={
  fill opacity=0.8,
  draw opacity=1,
  text opacity=1,
  at={(0.97,0.03)},
  anchor=south east,
  draw=lightgray204
},
scale only axis,
width=4cm,
height=3cm,
tick align=outside,
tick pos=left,
title={GCN-C, Finance},
tick label style={font=\tiny},
legend style={nodes={scale=0.5, transform shape}},
every tick/.style={
black,
semithick,
},
x label style={at={(axis description cs:0.5,-0.1)},anchor=north,font=\tiny},
y label style={at={(axis description cs:-0.1,.5)},rotate=90,anchor=south,font=\tiny},
x grid style={darkgray176},
xlabel={$\epsilon$},
xmin=2.6, xmax=33.4,
xtick style={color=black},
y grid style={darkgray176},
ylabel={AUC},
ymin=0.643220052924958, ymax=0.742952047622332,
ytick style={color=black}
]
\path [fill=steelblue31119180, fill opacity=0.2]
(axis cs:4,0.713308700130452)
--(axis cs:4,0.695578977852054)
--(axis cs:8,0.704781784510618)
--(axis cs:16,0.719910308412774)
--(axis cs:32,0.71606570449207)
--(axis cs:32,0.738418775136088)
--(axis cs:32,0.738418775136088)
--(axis cs:16,0.731320865663228)
--(axis cs:8,0.723067670314412)
--(axis cs:4,0.713308700130452)
--cycle;

\path [fill=darkorange25512714, fill opacity=0.2]
(axis cs:4,0.687253964588904)
--(axis cs:4,0.647753325411202)
--(axis cs:8,0.669942860732738)
--(axis cs:16,0.67852723481068)
--(axis cs:32,0.681161788642319)
--(axis cs:32,0.702097030521673)
--(axis cs:32,0.702097030521673)
--(axis cs:16,0.689658895135507)
--(axis cs:8,0.694249024278113)
--(axis cs:4,0.687253964588904)
--cycle;

\addplot [semithick, steelblue31119180]
table {%
4 0.704443838991253
8 0.713924727412515
16 0.725615587038001
32 0.727242239814079
};
\addlegendentry{$\mind=3$}
\addplot [semithick, darkorange25512714]
table {%
4 0.667503645000053
8 0.682095942505425
16 0.684093064973094
32 0.691629409581996
};
\addlegendentry{$\mind=5$}
\end{axis}

\end{tikzpicture}}
        \end{subfigure}
        \caption{Performance ($\text{mean}\pm\text{std}$ over $10$ trials) of \vesper under varying minmimum degree $\mind$, using GCN aggregator and CONCAT decoder.}
        \label{fig: ablation_min_degree_gcn_c}
    \end{figure}

    \begin{figure}
        \centering
        \begin{subfigure}{0.32\linewidth}
            \resizebox{\linewidth}{\linewidth}{
\begin{tikzpicture}

\definecolor{crimson2143940}{RGB}{214,39,40}
\definecolor{darkgray176}{RGB}{176,176,176}
\definecolor{darkorange25512714}{RGB}{255,127,14}
\definecolor{forestgreen4416044}{RGB}{44,160,44}
\definecolor{lightgray204}{RGB}{204,204,204}
\definecolor{mediumpurple148103189}{RGB}{148,103,189}
\definecolor{steelblue31119180}{RGB}{31,119,180}

\begin{axis}[
legend cell align={left},
legend style={
  fill opacity=0.8,
  draw opacity=1,
  text opacity=1,
  at={(0.97,0.03)},
  anchor=south east,
  draw=lightgray204
},
scale only axis,
width=4cm,
height=3cm,
tick align=outside,
tick pos=left,
title={GCN-G, \ogb},
tick label style={font=\tiny},
legend style={nodes={scale=0.5, transform shape}},
every tick/.style={
black,
semithick,
},
x label style={at={(axis description cs:0.5,-0.1)},anchor=north,font=\tiny},
y label style={at={(axis description cs:-0.1,.5)},rotate=90,anchor=south,font=\tiny},
x grid style={darkgray176},
xlabel={$\epsilon$},
xmin=2.6, xmax=33.4,
xtick style={color=black},
y grid style={darkgray176},
ylabel={ACC},
ymin=0.630649628346624, ymax=0.713315825605368,
ytick style={color=black}
]
\path [fill=steelblue31119180, fill opacity=0.2]
(axis cs:4,0.680000881786496)
--(axis cs:4,0.672056489555216)
--(axis cs:8,0.680185788219953)
--(axis cs:16,0.695438475566557)
--(axis cs:32,0.70287122222538)
--(axis cs:32,0.709558271184516)
--(axis cs:32,0.709558271184516)
--(axis cs:16,0.704988438645285)
--(axis cs:8,0.693412676551717)
--(axis cs:4,0.680000881786496)
--cycle;

\path [fill=darkorange25512714, fill opacity=0.2]
(axis cs:4,0.670429237744585)
--(axis cs:4,0.659030689569746)
--(axis cs:8,0.666792358866377)
--(axis cs:16,0.673779727153682)
--(axis cs:32,0.67452788019413)
--(axis cs:32,0.685661194724163)
--(axis cs:32,0.685661194724163)
--(axis cs:16,0.681695849765659)
--(axis cs:8,0.673404135538959)
--(axis cs:4,0.670429237744585)
--cycle;

\path [fill=forestgreen4416044, fill opacity=0.2]
(axis cs:4,0.641637408259283)
--(axis cs:4,0.634407182767476)
--(axis cs:8,0.635888050881532)
--(axis cs:16,0.641684351335065)
--(axis cs:32,0.641261537384843)
--(axis cs:32,0.645196904676509)
--(axis cs:32,0.645196904676509)
--(axis cs:16,0.647736192149139)
--(axis cs:8,0.645999318413269)
--(axis cs:4,0.641637408259283)
--cycle;

\addplot [semithick, steelblue31119180]
table {%
4 0.676028685670856
8 0.686799232385835
16 0.700213457105921
32 0.706214746704948
};
\addlegendentry{$\mind=10$}
\addplot [semithick, darkorange25512714]
table {%
4 0.664729963657166
8 0.670098247202668
16 0.67773778845967
32 0.680094537459146
};
\addlegendentry{$\mind=20$}
\addplot [semithick, forestgreen4416044]
table {%
4 0.638022295513379
8 0.6409436846474
16 0.644710271742102
32 0.643229221030676
};
\addlegendentry{$\mind=40$}
\end{axis}

\end{tikzpicture}}
        \end{subfigure}
        \begin{subfigure}{0.32\linewidth}
            \resizebox{\linewidth}{\linewidth}{
\begin{tikzpicture}

\definecolor{crimson2143940}{RGB}{214,39,40}
\definecolor{darkgray176}{RGB}{176,176,176}
\definecolor{darkorange25512714}{RGB}{255,127,14}
\definecolor{forestgreen4416044}{RGB}{44,160,44}
\definecolor{lightgray204}{RGB}{204,204,204}
\definecolor{mediumpurple148103189}{RGB}{148,103,189}
\definecolor{steelblue31119180}{RGB}{31,119,180}

\begin{axis}[
legend cell align={left},
legend style={
  fill opacity=0.8,
  draw opacity=1,
  text opacity=1,
  at={(0.97,0.03)},
  anchor=south east,
  draw=lightgray204
},
scale only axis,
width=4cm,
height=3cm,
tick align=outside,
tick pos=left,
title={GCN-G, Reddit},
tick label style={font=\tiny},
legend style={nodes={scale=0.5, transform shape}},
every tick/.style={
black,
semithick,
},
x label style={at={(axis description cs:0.5,-0.1)},anchor=north,font=\tiny},
y label style={at={(axis description cs:-0.1,.5)},rotate=90,anchor=south,font=\tiny},
x grid style={darkgray176},
xlabel={$\epsilon$},
xmin=2.6, xmax=33.4,
xtick style={color=black},
y grid style={darkgray176},
ylabel={ACC},
ymin=0.805869217640365, ymax=0.9317203638149,
ytick style={color=black}
]
\path [fill=steelblue31119180, fill opacity=0.2]
(axis cs:4,0.8654833238456)
--(axis cs:4,0.811589724284662)
--(axis cs:8,0.886629132159241)
--(axis cs:16,0.90992763882684)
--(axis cs:32,0.922284795361577)
--(axis cs:32,0.925999857170603)
--(axis cs:32,0.925999857170603)
--(axis cs:16,0.9160386466515)
--(axis cs:8,0.898251394921886)
--(axis cs:4,0.8654833238456)
--cycle;

\path [fill=darkorange25512714, fill opacity=0.2]
(axis cs:4,0.899409682052782)
--(axis cs:4,0.89043287579868)
--(axis cs:8,0.907118899049947)
--(axis cs:16,0.91272935596653)
--(axis cs:32,0.918120224060804)
--(axis cs:32,0.920308585877619)
--(axis cs:32,0.920308585877619)
--(axis cs:16,0.916723276746251)
--(axis cs:8,0.913249842310482)
--(axis cs:4,0.899409682052782)
--cycle;

\path [fill=forestgreen4416044, fill opacity=0.2]
(axis cs:4,0.896485500605356)
--(axis cs:4,0.894348027211817)
--(axis cs:8,0.896944312260268)
--(axis cs:16,0.902902016531102)
--(axis cs:32,0.903421043008538)
--(axis cs:32,0.90673639914)
--(axis cs:32,0.90673639914)
--(axis cs:16,0.905043695549037)
--(axis cs:8,0.901475880548019)
--(axis cs:4,0.896485500605356)
--cycle;

\addplot [semithick, steelblue31119180]
table {%
4 0.838536524065131
8 0.892440263540563
16 0.91298314273917
32 0.92414232626609
};
\addlegendentry{$\mind=10$}
\addplot [semithick, darkorange25512714]
table {%
4 0.894921278925731
8 0.910184370680215
16 0.91472631635639
32 0.919214404969212
};
\addlegendentry{$\mind=20$}
\addplot [semithick, forestgreen4416044]
table {%
4 0.895416763908586
8 0.899210096404143
16 0.90397285604007
32 0.905078721074269
};
\addlegendentry{$\mind=40$}
\end{axis}

\end{tikzpicture}}
        \end{subfigure}
        \begin{subfigure}{0.32\linewidth}
            \resizebox{\linewidth}{\linewidth}{
\begin{tikzpicture}

\definecolor{crimson2143940}{RGB}{214,39,40}
\definecolor{darkgray176}{RGB}{176,176,176}
\definecolor{darkorange25512714}{RGB}{255,127,14}
\definecolor{forestgreen4416044}{RGB}{44,160,44}
\definecolor{lightgray204}{RGB}{204,204,204}
\definecolor{mediumpurple148103189}{RGB}{148,103,189}
\definecolor{steelblue31119180}{RGB}{31,119,180}

\begin{axis}[
legend cell align={left},
legend style={
  fill opacity=0.8,
  draw opacity=1,
  text opacity=1,
  at={(0.97,0.03)},
  anchor=south east,
  draw=lightgray204
},
scale only axis,
width=4cm,
height=3cm,
tick align=outside,
tick pos=left,
title={GCN-G, Finance},
tick label style={font=\tiny},
legend style={nodes={scale=0.5, transform shape}},
every tick/.style={
black,
semithick,
},
x label style={at={(axis description cs:0.5,-0.1)},anchor=north,font=\tiny},
y label style={at={(axis description cs:-0.1,.5)},rotate=90,anchor=south,font=\tiny},
x grid style={darkgray176},
xlabel={$\epsilon$},
xmin=2.6, xmax=33.4,
xtick style={color=black},
y grid style={darkgray176},
ylabel={AUC},
ymin=0.705531607616328, ymax=0.751863940460786,
ytick style={color=black}
]
\path [fill=steelblue31119180, fill opacity=0.2]
(axis cs:4,0.743018694014603)
--(axis cs:4,0.728438173641977)
--(axis cs:8,0.731523147425954)
--(axis cs:16,0.733753921838726)
--(axis cs:32,0.742662609660882)
--(axis cs:32,0.749757925331492)
--(axis cs:32,0.749757925331492)
--(axis cs:16,0.744160500084317)
--(axis cs:8,0.747717439163771)
--(axis cs:4,0.743018694014603)
--cycle;

\path [fill=darkorange25512714, fill opacity=0.2]
(axis cs:4,0.714664170912978)
--(axis cs:4,0.709237007937914)
--(axis cs:8,0.707637622745622)
--(axis cs:16,0.709678504102748)
--(axis cs:32,0.710596510677419)
--(axis cs:32,0.720257552073195)
--(axis cs:32,0.720257552073195)
--(axis cs:16,0.717758001667091)
--(axis cs:8,0.717439569968065)
--(axis cs:4,0.714664170912978)
--cycle;

\addplot [semithick, steelblue31119180]
table {%
4 0.73572843382829
8 0.739620293294862
16 0.738957210961521
32 0.746210267496187
};
\addlegendentry{$\mind=3$}
\addplot [semithick, darkorange25512714]
table {%
4 0.711950589425446
8 0.712538596356843
16 0.71371825288492
32 0.715427031375307
};
\addlegendentry{$\mind=5$}
\end{axis}

\end{tikzpicture}}
        \end{subfigure}
        \caption{Performance ($\text{mean}\pm\text{std}$ over $10$ trials) of \vesper under varying minmimum degree $\mind$, using GCN aggregator and GRU decoder.}
        \label{fig: ablation_min_degree_gcn_g}
    \end{figure}

    \textbf{On the effect of batch size} Finally, we assess the effect of varying batch sizes. According to composition results \cite{abadi2016deep, mironov2017renyi, wang2019subsampled}, consider running a fixed amount of epochs under a given sample and privacy constraint, choosing a smaller batch size in general leads to a smaller per-step noise scale. However, an overly small batch size may cause the stochastic gradients to be too noisy for good performance. We evaluate the effect of batch size under the range $\{32, 64, 128, 256, 512\}$ across all three datasets and plot the resulting curves in figure \ref{fig: ablation_batch_size_gin_c}, \ref{fig: ablation_batch_size_gcn_c}, \ref{fig: ablation_batch_size_gin_g}, \ref{fig: ablation_batch_size_gcn_g}. According to the results, we find that the reduction in noise scale caused by choosing smaller batch sizes may produces better performance on \ogb and Reddit dataset, while on the Finance dataset changing batch size does not produce a statistically significant difference in privacy-utility trade-offs, as indicated by the overlapping region in the figure.
    
    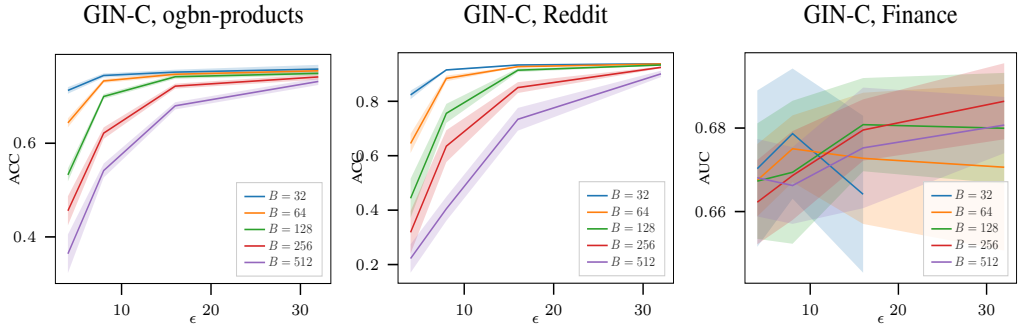
\begin{figure}
        \centering
        \begin{subfigure}{0.32\linewidth}
            \resizebox{\linewidth}{\linewidth}{
\begin{tikzpicture}

\definecolor{crimson2143940}{RGB}{214,39,40}
\definecolor{darkgray176}{RGB}{176,176,176}
\definecolor{darkorange25512714}{RGB}{255,127,14}
\definecolor{forestgreen4416044}{RGB}{44,160,44}
\definecolor{lightgray204}{RGB}{204,204,204}
\definecolor{mediumpurple148103189}{RGB}{148,103,189}
\definecolor{steelblue31119180}{RGB}{31,119,180}

\begin{axis}[
legend cell align={left},
legend style={
  fill opacity=0.8,
  draw opacity=1,
  text opacity=1,
  at={(0.97,0.03)},
  anchor=south east,
  draw=lightgray204
},
scale only axis,
width=4cm,
height=3cm,
tick align=outside,
tick pos=left,
title={GIN-C, \ogb},
tick label style={font=\tiny},
legend style={nodes={scale=0.5, transform shape}},
every tick/.style={
black,
semithick,
},
x label style={at={(axis description cs:0.5,-0.1)},anchor=north,font=\tiny},
y label style={at={(axis description cs:-0.1,.5)},rotate=90,anchor=south,font=\tiny},
x grid style={darkgray176},
xlabel={$\epsilon$},
xmin=2.6, xmax=33.4,
xtick style={color=black},
y grid style={darkgray176},
ylabel={ACC},
ymin=0.300488021369875, ymax=0.78962353582144,
ytick style={color=black}
]
\path [fill=steelblue31119180, fill opacity=0.2]
(axis cs:4,0.7197549408467)
--(axis cs:4,0.70571278732173)
--(axis cs:8,0.740175850764731)
--(axis cs:16,0.746634995034394)
--(axis cs:32,0.749100452170779)
--(axis cs:32,0.767390103346369)
--(axis cs:32,0.767390103346369)
--(axis cs:16,0.757031415429523)
--(axis cs:8,0.749100369688924)
--(axis cs:4,0.7197549408467)
--cycle;

\path [fill=darkorange25512714, fill opacity=0.2]
(axis cs:4,0.65318397845076)
--(axis cs:4,0.633752347258395)
--(axis cs:8,0.728742140008732)
--(axis cs:16,0.742783174848598)
--(axis cs:32,0.74933819907649)
--(axis cs:32,0.760001181906939)
--(axis cs:32,0.760001181906939)
--(axis cs:16,0.751746964219339)
--(axis cs:8,0.737226679167705)
--(axis cs:4,0.65318397845076)
--cycle;

\path [fill=forestgreen4416044, fill opacity=0.2]
(axis cs:4,0.546316963766723)
--(axis cs:4,0.51849338519769)
--(axis cs:8,0.694719848125915)
--(axis cs:16,0.737501786090669)
--(axis cs:32,0.744990067036807)
--(axis cs:32,0.753178874050568)
--(axis cs:32,0.753178874050568)
--(axis cs:16,0.746354955453172)
--(axis cs:8,0.704893904765403)
--(axis cs:4,0.546316963766723)
--cycle;

\path [fill=crimson2143940, fill opacity=0.2]
(axis cs:4,0.486818558042197)
--(axis cs:4,0.424687204712249)
--(axis cs:8,0.609262788620289)
--(axis cs:16,0.716550450280534)
--(axis cs:32,0.735737027612018)
--(axis cs:32,0.747350924939413)
--(axis cs:32,0.747350924939413)
--(axis cs:16,0.727216751339282)
--(axis cs:8,0.633857444573917)
--(axis cs:4,0.486818558042197)
--cycle;

\path [fill=mediumpurple148103189, fill opacity=0.2]
(axis cs:4,0.405244635213299)
--(axis cs:4,0.322721453844946)
--(axis cs:8,0.526670434535552)
--(axis cs:16,0.672135265223444)
--(axis cs:32,0.724315514522605)
--(axis cs:32,0.739449509147909)
--(axis cs:32,0.739449509147909)
--(axis cs:16,0.687641354897464)
--(axis cs:8,0.55663332477665)
--(axis cs:4,0.405244635213299)
--cycle;

\addplot [semithick, steelblue31119180]
table {%
4 0.712733864084215
8 0.744638110226828
16 0.751833205231958
32 0.758245277758574
};
\addlegendentry{$B=32$}
\addplot [semithick, darkorange25512714]
table {%
4 0.643468162854578
8 0.732984409588219
16 0.747265069533968
32 0.754669690491715
};
\addlegendentry{$B=64$}
\addplot [semithick, forestgreen4416044]
table {%
4 0.532405174482206
8 0.699806876445659
16 0.74192837077192
32 0.749084470543688
};
\addlegendentry{$B=128$}
\addplot [semithick, crimson2143940]
table {%
4 0.455752881377223
8 0.621560116597103
16 0.721883600809908
32 0.741543976275716
};
\addlegendentry{$B=256$}
\addplot [semithick, mediumpurple148103189]
table {%
4 0.363983044529122
8 0.541651879656101
16 0.679888310060454
32 0.731882511835257
};
\addlegendentry{$B=512$}
\end{axis}

\end{tikzpicture}}
        \end{subfigure}
        \begin{subfigure}{0.32\linewidth}
            \resizebox{\linewidth}{\linewidth}{
\begin{tikzpicture}

\definecolor{crimson2143940}{RGB}{214,39,40}
\definecolor{darkgray176}{RGB}{176,176,176}
\definecolor{darkorange25512714}{RGB}{255,127,14}
\definecolor{forestgreen4416044}{RGB}{44,160,44}
\definecolor{lightgray204}{RGB}{204,204,204}
\definecolor{mediumpurple148103189}{RGB}{148,103,189}
\definecolor{steelblue31119180}{RGB}{31,119,180}

\begin{axis}[
legend cell align={left},
legend style={
  fill opacity=0.8,
  draw opacity=1,
  text opacity=1,
  at={(0.97,0.03)},
  anchor=south east,
  draw=lightgray204
},
scale only axis,
width=4cm,
height=3cm,
tick align=outside,
tick pos=left,
title={GIN-C, Reddit},
tick label style={font=\tiny},
legend style={nodes={scale=0.5, transform shape}},
every tick/.style={
black,
semithick,
},
x label style={at={(axis description cs:0.5,-0.1)},anchor=north,font=\tiny},
y label style={at={(axis description cs:-0.1,.5)},rotate=90,anchor=south,font=\tiny},
x grid style={darkgray176},
xlabel={$\epsilon$},
xmin=2.6, xmax=33.4,
xtick style={color=black},
y grid style={darkgray176},
ylabel={ACC},
ymin=0.131430354213292, ymax=0.978470090547045,
ytick style={color=black}
]
\path [fill=steelblue31119180, fill opacity=0.2]
(axis cs:4,0.839157261175465)
--(axis cs:4,0.807734288651295)
--(axis cs:8,0.913018942656196)
--(axis cs:16,0.931412147073208)
--(axis cs:32,0.935417242461785)
--(axis cs:32,0.939968284350056)
--(axis cs:32,0.939968284350056)
--(axis cs:16,0.935367021014686)
--(axis cs:8,0.917478517085649)
--(axis cs:4,0.839157261175465)
--cycle;

\path [fill=darkorange25512714, fill opacity=0.2]
(axis cs:4,0.680243803520987)
--(axis cs:4,0.609611320978591)
--(axis cs:8,0.873689318074211)
--(axis cs:16,0.924750619406875)
--(axis cs:32,0.936232129214464)
--(axis cs:32,0.938130113393654)
--(axis cs:32,0.938130113393654)
--(axis cs:16,0.92961270034251)
--(axis cs:8,0.894348310060718)
--(axis cs:4,0.680243803520987)
--cycle;

\path [fill=forestgreen4416044, fill opacity=0.2]
(axis cs:4,0.515405029760775)
--(axis cs:4,0.371735698745769)
--(axis cs:8,0.720728644034999)
--(axis cs:16,0.907309293990895)
--(axis cs:32,0.931485624818726)
--(axis cs:32,0.935189419613351)
--(axis cs:32,0.935189419613351)
--(axis cs:16,0.92122058770309)
--(axis cs:8,0.790561591679414)
--(axis cs:4,0.515405029760775)
--cycle;

\path [fill=crimson2143940, fill opacity=0.2]
(axis cs:4,0.396483550733231)
--(axis cs:4,0.24023619506143)
--(axis cs:8,0.576578747197536)
--(axis cs:16,0.830433039602076)
--(axis cs:32,0.921156019546149)
--(axis cs:32,0.928144736247974)
--(axis cs:32,0.928144736247974)
--(axis cs:16,0.870344297345665)
--(axis cs:8,0.693622139648774)
--(axis cs:4,0.396483550733231)
--cycle;

\path [fill=mediumpurple148103189, fill opacity=0.2]
(axis cs:4,0.274359888491933)
--(axis cs:4,0.169932160410281)
--(axis cs:8,0.360733698506469)
--(axis cs:16,0.692045352268548)
--(axis cs:32,0.889016793756991)
--(axis cs:32,0.912563013434722)
--(axis cs:32,0.912563013434722)
--(axis cs:16,0.775706833430606)
--(axis cs:8,0.449068287023934)
--(axis cs:4,0.274359888491933)
--cycle;

\addplot [semithick, steelblue31119180]
table {%
4 0.82344577491338
8 0.915248729870923
16 0.933389584043947
32 0.937692763405921
};
\addlegendentry{$B=32$}
\addplot [semithick, darkorange25512714]
table {%
4 0.644927562249789
8 0.884018814067465
16 0.927181659874693
32 0.937181121304059
};
\addlegendentry{$B=64$}
\addplot [semithick, forestgreen4416044]
table {%
4 0.443570364253272
8 0.755645117857207
16 0.914264940846992
32 0.933337522216039
};
\addlegendentry{$B=128$}
\addplot [semithick, crimson2143940]
table {%
4 0.31835987289733
8 0.635100443423155
16 0.85038866847387
32 0.924650377897061
};
\addlegendentry{$B=256$}
\addplot [semithick, mediumpurple148103189]
table {%
4 0.222146024451107
8 0.404900992765201
16 0.733876092849577
32 0.900789903595856
};
\addlegendentry{$B=512$}
\end{axis}

\end{tikzpicture}}
        \end{subfigure}
        \begin{subfigure}{0.32\linewidth}
            \resizebox{\linewidth}{\linewidth}{
\begin{tikzpicture}

\definecolor{crimson2143940}{RGB}{214,39,40}
\definecolor{darkgray176}{RGB}{176,176,176}
\definecolor{darkorange25512714}{RGB}{255,127,14}
\definecolor{forestgreen4416044}{RGB}{44,160,44}
\definecolor{lightgray204}{RGB}{204,204,204}
\definecolor{mediumpurple148103189}{RGB}{148,103,189}
\definecolor{steelblue31119180}{RGB}{31,119,180}

\begin{axis}[
legend cell align={left},
legend style={
  fill opacity=0.8,
  draw opacity=1,
  text opacity=1,
  at={(0.97,0.03)},
  anchor=south east,
  draw=lightgray204
},
scale only axis,
width=4cm,
height=3cm,
tick align=outside,
tick pos=left,
title={GIN-C, Finance},
tick label style={font=\tiny},
legend style={nodes={scale=0.5, transform shape}},
every tick/.style={
black,
semithick,
},
x label style={at={(axis description cs:0.5,-0.1)},anchor=north,font=\tiny},
y label style={at={(axis description cs:-0.1,.5)},rotate=90,anchor=south,font=\tiny},
x grid style={darkgray176},
xlabel={$\epsilon$},
xmin=2.6, xmax=33.4,
xtick style={color=black},
y grid style={darkgray176},
ylabel={AUC},
ymin=0.642839578389711, ymax=0.698034139222402,
ytick style={color=black}
]
\path [fill=steelblue31119180, fill opacity=0.2]
(axis cs:4,0.68900594354819)
--(axis cs:4,0.651553625854771)
--(axis cs:8,0.66307876137867)
--(axis cs:16,0.645348422063924)
--(axis cs:16,0.682937497177455)
--(axis cs:16,0.682937497177455)
--(axis cs:8,0.69425621541033)
--(axis cs:4,0.68900594354819)
--cycle;

\path [fill=darkorange25512714, fill opacity=0.2]
(axis cs:4,0.676172918449245)
--(axis cs:4,0.658832335816067)
--(axis cs:8,0.667051133124192)
--(axis cs:16,0.657032690697198)
--(axis cs:32,0.650665819692631)
--(axis cs:32,0.690569340213505)
--(axis cs:32,0.690569340213505)
--(axis cs:16,0.688462593174523)
--(axis cs:8,0.683044252696769)
--(axis cs:4,0.676172918449245)
--cycle;

\path [fill=forestgreen4416044, fill opacity=0.2]
(axis cs:4,0.681123852510348)
--(axis cs:4,0.653452022625872)
--(axis cs:8,0.652288482767219)
--(axis cs:16,0.669652752762486)
--(axis cs:32,0.666688887947501)
--(axis cs:32,0.693213077385499)
--(axis cs:32,0.693213077385499)
--(axis cs:16,0.691976141862098)
--(axis cs:8,0.686539219300425)
--(axis cs:4,0.681123852510348)
--cycle;

\path [fill=crimson2143940, fill opacity=0.2]
(axis cs:4,0.672337063210232)
--(axis cs:4,0.652157086671849)
--(axis cs:8,0.657970476353382)
--(axis cs:16,0.672186716154177)
--(axis cs:32,0.677301034943372)
--(axis cs:32,0.695525295548189)
--(axis cs:32,0.695525295548189)
--(axis cs:16,0.686880890039866)
--(axis cs:8,0.679283256424776)
--(axis cs:4,0.672337063210232)
--cycle;

\path [fill=mediumpurple148103189, fill opacity=0.2]
(axis cs:4,0.67738492883017)
--(axis cs:4,0.658832151905589)
--(axis cs:8,0.657050665990762)
--(axis cs:16,0.660804667546509)
--(axis cs:32,0.67397841095339)
--(axis cs:32,0.687450372005565)
--(axis cs:32,0.687450372005565)
--(axis cs:16,0.689691555183909)
--(axis cs:8,0.675420921696343)
--(axis cs:4,0.67738492883017)
--cycle;

\addplot [semithick, steelblue31119180]
table {%
4 0.67027978470148
8 0.6786674883945
16 0.66414295962069
};
\addlegendentry{$B=32$}
\addplot [semithick, darkorange25512714]
table {%
4 0.667502627132656
8 0.67504769291048
16 0.67274764193586
32 0.670617579953068
};
\addlegendentry{$B=64$}
\addplot [semithick, forestgreen4416044]
table {%
4 0.66728793756811
8 0.669413851033822
16 0.680814447312292
32 0.6799509826665
};
\addlegendentry{$B=128$}
\addplot [semithick, crimson2143940]
table {%
4 0.66224707494104
8 0.668626866389079
16 0.679533803097021
32 0.68641316524578
};
\addlegendentry{$B=256$}
\addplot [semithick, mediumpurple148103189]
table {%
4 0.668108540367879
8 0.666235793843552
16 0.675248111365209
32 0.680714391479478
};
\addlegendentry{$B=512$}
\end{axis}

\end{tikzpicture}}
        \end{subfigure}
        \caption{Performance ($\text{mean}\pm\text{std}$ over $10$ trials) of \vesper under varying batch size $B$, using GIN aggregator and CONCAT decoder.}
        \label{fig: ablation_batch_size_gin_c}
    \end{figure}


    \begin{figure}
        \centering
        \begin{subfigure}{0.32\linewidth}
            \resizebox{\linewidth}{\linewidth}{
\begin{tikzpicture}

\definecolor{crimson2143940}{RGB}{214,39,40}
\definecolor{darkgray176}{RGB}{176,176,176}
\definecolor{darkorange25512714}{RGB}{255,127,14}
\definecolor{forestgreen4416044}{RGB}{44,160,44}
\definecolor{lightgray204}{RGB}{204,204,204}
\definecolor{mediumpurple148103189}{RGB}{148,103,189}
\definecolor{steelblue31119180}{RGB}{31,119,180}

\begin{axis}[
legend cell align={left},
legend style={
  fill opacity=0.8,
  draw opacity=1,
  text opacity=1,
  at={(0.97,0.03)},
  anchor=south east,
  draw=lightgray204
},
scale only axis,
width=4cm,
height=3cm,
tick align=outside,
tick pos=left,
title={GCN-C, \ogb},
tick label style={font=\tiny},
legend style={nodes={scale=0.5, transform shape}},
every tick/.style={
black,
semithick,
},
x label style={at={(axis description cs:0.5,-0.1)},anchor=north,font=\tiny},
y label style={at={(axis description cs:-0.1,.5)},rotate=90,anchor=south,font=\tiny},
x grid style={darkgray176},
xlabel={$\epsilon$},
xmin=2.6, xmax=33.4,
xtick style={color=black},
y grid style={darkgray176},
ylabel={ACC},
ymin=0.580021201868666, ymax=0.715001968479474,
ytick style={color=black}
]
\path [fill=steelblue31119180, fill opacity=0.2]
(axis cs:4,0.671236894707091)
--(axis cs:4,0.663300004137105)
--(axis cs:8,0.667405408960443)
--(axis cs:16,0.69588391704559)
--(axis cs:32,0.700655316445865)
--(axis cs:32,0.708866479088074)
--(axis cs:32,0.708866479088074)
--(axis cs:16,0.706809600753723)
--(axis cs:8,0.683173306510362)
--(axis cs:4,0.671236894707091)
--cycle;

\path [fill=darkorange25512714, fill opacity=0.2]
(axis cs:4,0.66886731322877)
--(axis cs:4,0.660745703136125)
--(axis cs:8,0.667408222492391)
--(axis cs:16,0.693269748060835)
--(axis cs:32,0.696276638146028)
--(axis cs:32,0.708335915065746)
--(axis cs:32,0.708335915065746)
--(axis cs:16,0.701005317898947)
--(axis cs:8,0.674953930713239)
--(axis cs:4,0.66886731322877)
--cycle;

\path [fill=forestgreen4416044, fill opacity=0.2]
(axis cs:4,0.661566064851102)
--(axis cs:4,0.654723233691073)
--(axis cs:8,0.659735830694155)
--(axis cs:16,0.683382755017522)
--(axis cs:32,0.692330753403105)
--(axis cs:32,0.698832465822855)
--(axis cs:32,0.698832465822855)
--(axis cs:16,0.693241974828652)
--(axis cs:8,0.671167869153705)
--(axis cs:4,0.661566064851102)
--cycle;

\path [fill=crimson2143940, fill opacity=0.2]
(axis cs:4,0.640790701080829)
--(axis cs:4,0.632549617957114)
--(axis cs:8,0.655358106646228)
--(axis cs:16,0.67075906416104)
--(axis cs:32,0.684257186253877)
--(axis cs:32,0.692208218919249)
--(axis cs:32,0.692208218919249)
--(axis cs:16,0.684075418469814)
--(axis cs:8,0.664433442820106)
--(axis cs:4,0.640790701080829)
--cycle;

\path [fill=mediumpurple148103189, fill opacity=0.2]
(axis cs:4,0.589910447416111)
--(axis cs:4,0.586156691260066)
--(axis cs:8,0.631834954046615)
--(axis cs:16,0.648690737769499)
--(axis cs:32,0.664850410998202)
--(axis cs:32,0.675176727515307)
--(axis cs:32,0.675176727515307)
--(axis cs:16,0.660078128942263)
--(axis cs:8,0.64793953331066)
--(axis cs:4,0.589910447416111)
--cycle;

\addplot [semithick, steelblue31119180]
table {%
4 0.667268449422098
8 0.675289357735403
16 0.701346758899657
32 0.704760897766969
};
\addlegendentry{$B=32$}
\addplot [semithick, darkorange25512714]
table {%
4 0.664806508182447
8 0.671181076602815
16 0.697137532979891
32 0.702306276605887
};
\addlegendentry{$B=64$}
\addplot [semithick, forestgreen4416044]
table {%
4 0.658144649271087
8 0.66545184992393
16 0.688312364923087
32 0.69558160961298
};
\addlegendentry{$B=128$}
\addplot [semithick, crimson2143940]
table {%
4 0.636670159518972
8 0.659895774733167
16 0.677417241315427
32 0.688232702586563
};
\addlegendentry{$B=256$}
\addplot [semithick, mediumpurple148103189]
table {%
4 0.588033569338089
8 0.639887243678638
16 0.654384433355881
32 0.670013569256754
};
\addlegendentry{$B=512$}
\end{axis}

\end{tikzpicture}}
        \end{subfigure}
        \begin{subfigure}{0.32\linewidth}
            \resizebox{\linewidth}{\linewidth}{
\begin{tikzpicture}

\definecolor{crimson2143940}{RGB}{214,39,40}
\definecolor{darkgray176}{RGB}{176,176,176}
\definecolor{darkorange25512714}{RGB}{255,127,14}
\definecolor{forestgreen4416044}{RGB}{44,160,44}
\definecolor{lightgray204}{RGB}{204,204,204}
\definecolor{mediumpurple148103189}{RGB}{148,103,189}
\definecolor{steelblue31119180}{RGB}{31,119,180}

\begin{axis}[
legend cell align={left},
legend style={
  fill opacity=0.8,
  draw opacity=1,
  text opacity=1,
  at={(0.97,0.03)},
  anchor=south east,
  draw=lightgray204
},
scale only axis,
width=4cm,
height=3cm,
tick align=outside,
tick pos=left,
title={GCN-C, Reddit},
tick label style={font=\tiny},
legend style={nodes={scale=0.5, transform shape}},
every tick/.style={
black,
semithick,
},
x label style={at={(axis description cs:0.5,-0.1)},anchor=north,font=\tiny},
y label style={at={(axis description cs:-0.1,.5)},rotate=90,anchor=south,font=\tiny},
x grid style={darkgray176},
xlabel={$\epsilon$},
xmin=2.6, xmax=33.4,
xtick style={color=black},
y grid style={darkgray176},
ylabel={ACC},
ymin=0.784170537257303, ymax=0.933870273368051,
ytick style={color=black}
]
\path [fill=steelblue31119180, fill opacity=0.2]
(axis cs:4,0.901205130589604)
--(axis cs:4,0.895829140455044)
--(axis cs:8,0.91111645676936)
--(axis cs:16,0.919163365854363)
--(axis cs:32,0.924299536647549)
--(axis cs:32,0.927065739908471)
--(axis cs:32,0.927065739908471)
--(axis cs:16,0.923085705111293)
--(axis cs:8,0.914465648323723)
--(axis cs:4,0.901205130589604)
--cycle;

\path [fill=darkorange25512714, fill opacity=0.2]
(axis cs:4,0.892838702927744)
--(axis cs:4,0.88907968566892)
--(axis cs:8,0.905409162413522)
--(axis cs:16,0.913719884370103)
--(axis cs:32,0.9225875670483)
--(axis cs:32,0.92542061922533)
--(axis cs:32,0.92542061922533)
--(axis cs:16,0.917201250936792)
--(axis cs:8,0.909013759152642)
--(axis cs:4,0.892838702927744)
--cycle;

\path [fill=forestgreen4416044, fill opacity=0.2]
(axis cs:4,0.883220509824135)
--(axis cs:4,0.877664900297402)
--(axis cs:8,0.893884424443282)
--(axis cs:16,0.909764709167118)
--(axis cs:32,0.915235970241586)
--(axis cs:32,0.918442646709027)
--(axis cs:32,0.918442646709027)
--(axis cs:16,0.912503391294257)
--(axis cs:8,0.897433816944076)
--(axis cs:4,0.883220509824135)
--cycle;

\path [fill=crimson2143940, fill opacity=0.2]
(axis cs:4,0.866485105222174)
--(axis cs:4,0.860976611381959)
--(axis cs:8,0.884583529905576)
--(axis cs:16,0.897946393650631)
--(axis cs:32,0.911471687134752)
--(axis cs:32,0.913877037350464)
--(axis cs:32,0.913877037350464)
--(axis cs:16,0.904369460073585)
--(axis cs:8,0.887579549282259)
--(axis cs:4,0.866485105222174)
--cycle;

\path [fill=mediumpurple148103189, fill opacity=0.2]
(axis cs:4,0.821573625044567)
--(axis cs:4,0.790975070716882)
--(axis cs:8,0.86750988794471)
--(axis cs:16,0.886009058274813)
--(axis cs:32,0.89948394198981)
--(axis cs:32,0.904451214105913)
--(axis cs:32,0.904451214105913)
--(axis cs:16,0.888810969371813)
--(axis cs:8,0.87651826134707)
--(axis cs:4,0.821573625044567)
--cycle;

\addplot [semithick, steelblue31119180]
table {%
4 0.898517135522324
8 0.912791052546541
16 0.921124535482828
32 0.92568263827801
};
\addlegendentry{$B=32$}
\addplot [semithick, darkorange25512714]
table {%
4 0.890959194298332
8 0.907211460783082
16 0.915460567653448
32 0.924004093136815
};
\addlegendentry{$B=64$}
\addplot [semithick, forestgreen4416044]
table {%
4 0.880442705060769
8 0.895659120693679
16 0.911134050230688
32 0.916839308475306
};
\addlegendentry{$B=128$}
\addplot [semithick, crimson2143940]
table {%
4 0.863730858302066
8 0.886081539593918
16 0.901157926862108
32 0.912674362242608
};
\addlegendentry{$B=256$}
\addplot [semithick, mediumpurple148103189]
table {%
4 0.806274347880725
8 0.87201407464589
16 0.887410013823313
32 0.901967578047861
};
\addlegendentry{$B=512$}
\end{axis}

\end{tikzpicture}}
        \end{subfigure}
        \begin{subfigure}{0.32\linewidth}
            \resizebox{\linewidth}{\linewidth}{
\begin{tikzpicture}

\definecolor{crimson2143940}{RGB}{214,39,40}
\definecolor{darkgray176}{RGB}{176,176,176}
\definecolor{darkorange25512714}{RGB}{255,127,14}
\definecolor{forestgreen4416044}{RGB}{44,160,44}
\definecolor{lightgray204}{RGB}{204,204,204}
\definecolor{mediumpurple148103189}{RGB}{148,103,189}
\definecolor{steelblue31119180}{RGB}{31,119,180}

\begin{axis}[
legend cell align={left},
legend style={
  fill opacity=0.8,
  draw opacity=1,
  text opacity=1,
  at={(0.97,0.03)},
  anchor=south east,
  draw=lightgray204
},
scale only axis,
width=4cm,
height=3cm,
tick align=outside,
tick pos=left,
title={GCN-C, Finance},
tick label style={font=\tiny},
legend style={nodes={scale=0.5, transform shape}},
every tick/.style={
black,
semithick,
},
x label style={at={(axis description cs:0.5,-0.1)},anchor=north,font=\tiny},
y label style={at={(axis description cs:-0.1,.5)},rotate=90,anchor=south,font=\tiny},
x grid style={darkgray176},
xlabel={$\epsilon$},
xmin=2.6, xmax=33.4,
xtick style={color=black},
y grid style={darkgray176},
ylabel={AUC},
ymin=0.671243575817462, ymax=0.74161759415126,
ytick style={color=black}
]
\path [fill=steelblue31119180, fill opacity=0.2]
(axis cs:4,0.713308700130452)
--(axis cs:4,0.695578977852054)
--(axis cs:8,0.704781784510618)
--(axis cs:16,0.719910308412774)
--(axis cs:32,0.714425147422215)
--(axis cs:32,0.734565121769204)
--(axis cs:32,0.734565121769204)
--(axis cs:16,0.731320865663228)
--(axis cs:8,0.723067670314412)
--(axis cs:4,0.713308700130452)
--cycle;

\path [fill=darkorange25512714, fill opacity=0.2]
(axis cs:4,0.70839626709129)
--(axis cs:4,0.692277979516865)
--(axis cs:8,0.699571530682937)
--(axis cs:16,0.715609827336004)
--(axis cs:32,0.71606570449207)
--(axis cs:32,0.738418775136088)
--(axis cs:32,0.738418775136088)
--(axis cs:16,0.729969284859488)
--(axis cs:8,0.72079289255505)
--(axis cs:4,0.70839626709129)
--cycle;

\path [fill=forestgreen4416044, fill opacity=0.2]
(axis cs:4,0.703227460751436)
--(axis cs:4,0.691804671974662)
--(axis cs:8,0.706913427000345)
--(axis cs:16,0.712626840839856)
--(axis cs:32,0.715895241392506)
--(axis cs:32,0.732725447795712)
--(axis cs:32,0.732725447795712)
--(axis cs:16,0.719248321473524)
--(axis cs:8,0.716053740701018)
--(axis cs:4,0.703227460751436)
--cycle;

\path [fill=crimson2143940, fill opacity=0.2]
(axis cs:4,0.704972425085373)
--(axis cs:4,0.688589781134879)
--(axis cs:8,0.697322619439467)
--(axis cs:16,0.703224600469374)
--(axis cs:32,0.705890440995683)
--(axis cs:32,0.7233400179115)
--(axis cs:32,0.7233400179115)
--(axis cs:16,0.71891231794628)
--(axis cs:8,0.706531200887055)
--(axis cs:4,0.704972425085373)
--cycle;

\path [fill=mediumpurple148103189, fill opacity=0.2]
(axis cs:4,0.690803028415663)
--(axis cs:4,0.674442394832635)
--(axis cs:8,0.693636163542217)
--(axis cs:16,0.691108045632734)
--(axis cs:32,0.70313692115139)
--(axis cs:32,0.718512128816626)
--(axis cs:32,0.718512128816626)
--(axis cs:16,0.710183066067861)
--(axis cs:8,0.700547265202993)
--(axis cs:4,0.690803028415663)
--cycle;

\addplot [semithick, steelblue31119180]
table {%
4 0.704443838991253
8 0.713924727412515
16 0.725615587038001
32 0.724495134595709
};
\addlegendentry{$B=32$}
\addplot [semithick, darkorange25512714]
table {%
4 0.700337123304077
8 0.710182211618993
16 0.722789556097746
32 0.727242239814079
};
\addlegendentry{$B=64$}
\addplot [semithick, forestgreen4416044]
table {%
4 0.697516066363049
8 0.711483583850682
16 0.71593758115669
32 0.724310344594109
};
\addlegendentry{$B=128$}
\addplot [semithick, crimson2143940]
table {%
4 0.696781103110126
8 0.701926910163261
16 0.711068459207827
32 0.714615229453592
};
\addlegendentry{$B=256$}
\addplot [semithick, mediumpurple148103189]
table {%
4 0.682622711624149
8 0.697091714372605
16 0.700645555850297
32 0.710824524984008
};
\addlegendentry{$B=512$}
\end{axis}

\end{tikzpicture}}
        \end{subfigure}
        \caption{Performance ($\text{mean}\pm\text{std}$ over $10$ trials) of \vesper under varying batch size $B$, using GCN aggregator and CONCAT decoder.}
        \label{fig: ablation_batch_size_gcn_c}
    \end{figure}

    \begin{figure}
        \centering
        \begin{subfigure}{0.32\linewidth}
            \resizebox{\linewidth}{\linewidth}{
\begin{tikzpicture}

\definecolor{crimson2143940}{RGB}{214,39,40}
\definecolor{darkgray176}{RGB}{176,176,176}
\definecolor{darkorange25512714}{RGB}{255,127,14}
\definecolor{forestgreen4416044}{RGB}{44,160,44}
\definecolor{lightgray204}{RGB}{204,204,204}
\definecolor{mediumpurple148103189}{RGB}{148,103,189}
\definecolor{steelblue31119180}{RGB}{31,119,180}

\begin{axis}[
legend cell align={left},
legend style={
  fill opacity=0.8,
  draw opacity=1,
  text opacity=1,
  at={(0.97,0.03)},
  anchor=south east,
  draw=lightgray204
},
scale only axis,
width=4cm,
height=3cm,
tick align=outside,
tick pos=left,
title={GIN-G, \ogb},
tick label style={font=\tiny},
legend style={nodes={scale=0.5, transform shape}},
every tick/.style={
black,
semithick,
},
x label style={at={(axis description cs:0.5,-0.1)},anchor=north,font=\tiny},
y label style={at={(axis description cs:-0.1,.5)},rotate=90,anchor=south,font=\tiny},
x grid style={darkgray176},
xlabel={$\epsilon$},
xmin=2.6, xmax=33.4,
xtick style={color=black},
y grid style={darkgray176},
ylabel={ACC},
ymin=0.282744112478169, ymax=0.783934289721838,
ytick style={color=black}
]
\path [fill=steelblue31119180, fill opacity=0.2]
(axis cs:4,0.701582728714592)
--(axis cs:4,0.689377606852268)
--(axis cs:8,0.73147129051379)
--(axis cs:16,0.748734374516823)
--(axis cs:32,0.751006771744319)
--(axis cs:32,0.761152918028944)
--(axis cs:32,0.761152918028944)
--(axis cs:16,0.758467182039143)
--(axis cs:8,0.737453123348993)
--(axis cs:4,0.701582728714592)
--cycle;

\path [fill=darkorange25512714, fill opacity=0.2]
(axis cs:4,0.635492043452123)
--(axis cs:4,0.612487727827052)
--(axis cs:8,0.712606911053162)
--(axis cs:16,0.738170483072885)
--(axis cs:32,0.746892409981435)
--(axis cs:32,0.7589155301349)
--(axis cs:32,0.7589155301349)
--(axis cs:16,0.745830852615526)
--(axis cs:8,0.722523501568822)
--(axis cs:4,0.635492043452123)
--cycle;

\path [fill=forestgreen4416044, fill opacity=0.2]
(axis cs:4,0.536689041863883)
--(axis cs:4,0.497170840084035)
--(axis cs:8,0.661895984000062)
--(axis cs:16,0.724373258189479)
--(axis cs:32,0.744353222238335)
--(axis cs:32,0.755634803559068)
--(axis cs:32,0.755634803559068)
--(axis cs:16,0.730264260105069)
--(axis cs:8,0.676980456236693)
--(axis cs:4,0.536689041863883)
--cycle;

\path [fill=crimson2143940, fill opacity=0.2]
(axis cs:4,0.447346104436077)
--(axis cs:4,0.39623122699765)
--(axis cs:8,0.580693501673978)
--(axis cs:16,0.690922128696545)
--(axis cs:32,0.728091665833389)
--(axis cs:32,0.739016826316279)
--(axis cs:32,0.739016826316279)
--(axis cs:16,0.704889611534652)
--(axis cs:8,0.60190631008252)
--(axis cs:4,0.447346104436077)
--cycle;

\path [fill=mediumpurple148103189, fill opacity=0.2]
(axis cs:4,0.344841536434958)
--(axis cs:4,0.305525484171063)
--(axis cs:8,0.422123768972078)
--(axis cs:16,0.632916909937599)
--(axis cs:32,0.708004656329101)
--(axis cs:32,0.717170810924618)
--(axis cs:32,0.717170810924618)
--(axis cs:16,0.646117797627522)
--(axis cs:8,0.48813405594339)
--(axis cs:4,0.344841536434958)
--cycle;

\addplot [semithick, steelblue31119180]
table {%
4 0.69548016778343
8 0.734462206931392
16 0.753600778277983
32 0.756079844886631
};
\addlegendentry{$B=32$}
\addplot [semithick, darkorange25512714]
table {%
4 0.623989885639587
8 0.717565206310992
16 0.742000667844205
32 0.752903970058168
};
\addlegendentry{$B=64$}
\addplot [semithick, forestgreen4416044]
table {%
4 0.516929940973959
8 0.669438220118378
16 0.727318759147274
32 0.749994012898701
};
\addlegendentry{$B=128$}
\addplot [semithick, crimson2143940]
table {%
4 0.421788665716864
8 0.591299905878249
16 0.697905870115598
32 0.733554246074834
};
\addlegendentry{$B=256$}
\addplot [semithick, mediumpurple148103189]
table {%
4 0.325183510303011
8 0.455128912457734
16 0.63951735378256
32 0.712587733626859
};
\addlegendentry{$B=512$}
\end{axis}

\end{tikzpicture}}
        \end{subfigure}
        \begin{subfigure}{0.32\linewidth}
            \resizebox{\linewidth}{\linewidth}{
\begin{tikzpicture}

\definecolor{crimson2143940}{RGB}{214,39,40}
\definecolor{darkgray176}{RGB}{176,176,176}
\definecolor{darkorange25512714}{RGB}{255,127,14}
\definecolor{forestgreen4416044}{RGB}{44,160,44}
\definecolor{lightgray204}{RGB}{204,204,204}
\definecolor{mediumpurple148103189}{RGB}{148,103,189}
\definecolor{steelblue31119180}{RGB}{31,119,180}

\begin{axis}[
legend cell align={left},
legend style={
  fill opacity=0.8,
  draw opacity=1,
  text opacity=1,
  at={(0.97,0.03)},
  anchor=south east,
  draw=lightgray204
},
scale only axis,
width=4cm,
height=3cm,
tick align=outside,
tick pos=left,
title={GIN-G, Reddit},
tick label style={font=\tiny},
legend style={nodes={scale=0.5, transform shape}},
every tick/.style={
black,
semithick,
},
x label style={at={(axis description cs:0.5,-0.1)},anchor=north,font=\tiny},
y label style={at={(axis description cs:-0.1,.5)},rotate=90,anchor=south,font=\tiny},
x grid style={darkgray176},
xlabel={$\epsilon$},
xmin=2.6, xmax=33.4,
xtick style={color=black},
y grid style={darkgray176},
ylabel={ACC},
ymin=0.116411393461281, ymax=0.972430074288497,
ytick style={color=black}
]
\path [fill=steelblue31119180, fill opacity=0.2]
(axis cs:4,0.780824927068668)
--(axis cs:4,0.728691616025959)
--(axis cs:8,0.89155721174369)
--(axis cs:16,0.915339149918999)
--(axis cs:32,0.925424626354457)
--(axis cs:32,0.933520134250897)
--(axis cs:32,0.933520134250897)
--(axis cs:16,0.924910028760785)
--(axis cs:8,0.900339095457)
--(axis cs:4,0.780824927068668)
--cycle;

\path [fill=darkorange25512714, fill opacity=0.2]
(axis cs:4,0.612871749093459)
--(axis cs:4,0.54830448917019)
--(axis cs:8,0.828246053310321)
--(axis cs:16,0.900077212279308)
--(axis cs:32,0.922553270364535)
--(axis cs:32,0.929081291508255)
--(axis cs:32,0.929081291508255)
--(axis cs:16,0.912852073396508)
--(axis cs:8,0.848809042465489)
--(axis cs:4,0.612871749093459)
--cycle;

\path [fill=forestgreen4416044, fill opacity=0.2]
(axis cs:4,0.420333822430692)
--(axis cs:4,0.361523528160838)
--(axis cs:8,0.685163991228097)
--(axis cs:16,0.88189881661134)
--(axis cs:32,0.916836969861053)
--(axis cs:32,0.920396213270915)
--(axis cs:32,0.920396213270915)
--(axis cs:16,0.892910439622615)
--(axis cs:8,0.755641710439677)
--(axis cs:4,0.420333822430692)
--cycle;

\path [fill=crimson2143940, fill opacity=0.2]
(axis cs:4,0.389848214025561)
--(axis cs:4,0.254447425347543)
--(axis cs:8,0.452544037617359)
--(axis cs:16,0.826126916752426)
--(axis cs:32,0.896426696918679)
--(axis cs:32,0.912879803628868)
--(axis cs:32,0.912879803628868)
--(axis cs:16,0.854985411129376)
--(axis cs:8,0.643967819912756)
--(axis cs:4,0.389848214025561)
--cycle;

\path [fill=mediumpurple148103189, fill opacity=0.2]
(axis cs:4,0.289135877064283)
--(axis cs:4,0.155321333498882)
--(axis cs:8,0.39942956818402)
--(axis cs:16,0.68680012671411)
--(axis cs:32,0.883533799272405)
--(axis cs:32,0.90104871872483)
--(axis cs:32,0.90104871872483)
--(axis cs:16,0.768812676905084)
--(axis cs:8,0.514072397598792)
--(axis cs:4,0.289135877064283)
--cycle;

\addplot [semithick, steelblue31119180]
table {%
4 0.754758271547313
8 0.895948153600345
16 0.920124589339892
32 0.929472380302677
};
\addlegendentry{$B=32$}
\addplot [semithick, darkorange25512714]
table {%
4 0.580588119131824
8 0.838527547887905
16 0.906464642837908
32 0.925817280936395
};
\addlegendentry{$B=64$}
\addplot [semithick, forestgreen4416044]
table {%
4 0.390928675295765
8 0.720402850833887
16 0.887404628116978
32 0.918616591565984
};
\addlegendentry{$B=128$}
\addplot [semithick, crimson2143940]
table {%
4 0.322147819686552
8 0.548255928765058
16 0.840556163940901
32 0.904653250273773
};
\addlegendentry{$B=256$}
\addplot [semithick, mediumpurple148103189]
table {%
4 0.222228605281583
8 0.456750982891406
16 0.727806401809597
32 0.892291258998618
};
\addlegendentry{$B=512$}
\end{axis}

\end{tikzpicture}}
        \end{subfigure}
        \begin{subfigure}{0.32\linewidth}
            \resizebox{\linewidth}{\linewidth}{
\begin{tikzpicture}

\definecolor{crimson2143940}{RGB}{214,39,40}
\definecolor{darkgray176}{RGB}{176,176,176}
\definecolor{darkorange25512714}{RGB}{255,127,14}
\definecolor{forestgreen4416044}{RGB}{44,160,44}
\definecolor{lightgray204}{RGB}{204,204,204}
\definecolor{mediumpurple148103189}{RGB}{148,103,189}
\definecolor{steelblue31119180}{RGB}{31,119,180}

\begin{axis}[
legend cell align={left},
legend style={
  fill opacity=0.8,
  draw opacity=1,
  text opacity=1,
  at={(0.97,0.03)},
  anchor=south east,
  draw=lightgray204
},
scale only axis,
width=4cm,
height=3cm,
tick align=outside,
tick pos=left,
title={GIN-G, Finance},
tick label style={font=\tiny},
legend style={nodes={scale=0.5, transform shape}},
every tick/.style={
black,
semithick,
},
x label style={at={(axis description cs:0.5,-0.1)},anchor=north,font=\tiny},
y label style={at={(axis description cs:-0.1,.5)},rotate=90,anchor=south,font=\tiny},
x grid style={darkgray176},
xlabel={$\epsilon$},
xmin=2.6, xmax=33.4,
xtick style={color=black},
y grid style={darkgray176},
ylabel={AUC},
ymin=0.42306427228965, ymax=0.705327395387387,
ytick style={color=black}
]
\path [fill=steelblue31119180, fill opacity=0.2]
(axis cs:4,0.637676014403577)
--(axis cs:4,0.520234803938276)
--(axis cs:8,0.616278376182362)
--(axis cs:16,0.58861944752177)
--(axis cs:32,0.577236026051426)
--(axis cs:32,0.660531682039558)
--(axis cs:32,0.660531682039558)
--(axis cs:16,0.646530652477468)
--(axis cs:8,0.662968186957879)
--(axis cs:4,0.637676014403577)
--cycle;

\path [fill=darkorange25512714, fill opacity=0.2]
(axis cs:4,0.656195335189903)
--(axis cs:4,0.521713709845578)
--(axis cs:8,0.628470364000997)
--(axis cs:16,0.609652747935221)
--(axis cs:32,0.601848282264333)
--(axis cs:32,0.668305033183523)
--(axis cs:32,0.668305033183523)
--(axis cs:16,0.660783602637624)
--(axis cs:8,0.653706038965636)
--(axis cs:4,0.656195335189903)
--cycle;

\path [fill=forestgreen4416044, fill opacity=0.2]
(axis cs:4,0.660560912406391)
--(axis cs:4,0.456958101046531)
--(axis cs:8,0.520008449420258)
--(axis cs:16,0.624487767317517)
--(axis cs:32,0.613030726690395)
--(axis cs:32,0.663933164634128)
--(axis cs:32,0.663933164634128)
--(axis cs:16,0.667987829173884)
--(axis cs:8,0.670881228042695)
--(axis cs:4,0.660560912406391)
--cycle;

\path [fill=crimson2143940, fill opacity=0.2]
(axis cs:4,0.656576886354846)
--(axis cs:4,0.435894414248638)
--(axis cs:8,0.518206650474516)
--(axis cs:16,0.624899847372887)
--(axis cs:32,0.63819847820802)
--(axis cs:32,0.665605630703848)
--(axis cs:32,0.665605630703848)
--(axis cs:16,0.67100702793054)
--(axis cs:8,0.656412398993672)
--(axis cs:4,0.656576886354846)
--cycle;

\path [fill=mediumpurple148103189, fill opacity=0.2]
(axis cs:4,0.66209156895582)
--(axis cs:4,0.505803897416658)
--(axis cs:8,0.621752060479667)
--(axis cs:16,0.643608619994545)
--(axis cs:32,0.64952683682845)
--(axis cs:32,0.692497253428399)
--(axis cs:32,0.692497253428399)
--(axis cs:16,0.677473112021095)
--(axis cs:8,0.666962099922711)
--(axis cs:4,0.66209156895582)
--cycle;

\addplot [semithick, steelblue31119180]
table {%
4 0.578955409170926
8 0.639623281570121
16 0.617575049999619
32 0.618883854045492
};
\addlegendentry{$B=32$}
\addplot [semithick, darkorange25512714]
table {%
4 0.588954522517741
8 0.641088201483316
16 0.635218175286422
32 0.635076657723928
};
\addlegendentry{$B=64$}
\addplot [semithick, forestgreen4416044]
table {%
4 0.558759506726461
8 0.595444838731477
16 0.646237798245701
32 0.638481945662262
};
\addlegendentry{$B=128$}
\addplot [semithick, crimson2143940]
table {%
4 0.546235650301742
8 0.587309524734094
16 0.647953437651714
32 0.651902054455934
};
\addlegendentry{$B=256$}
\addplot [semithick, mediumpurple148103189]
table {%
4 0.583947733186239
8 0.644357080201189
16 0.66054086600782
32 0.671012045128424
};
\addlegendentry{$B=512$}
\end{axis}

\end{tikzpicture}}
        \end{subfigure}
        \caption{Performance ($\text{mean}\pm\text{std}$ over $10$ trials) of \vesper under varying batch size $B$, using GIN aggregator and GRU decoder.}
        \label{fig: ablation_batch_size_gin_g}
    \end{figure}
    \begin{figure}
        \centering
        \begin{subfigure}{0.32\linewidth}
            \resizebox{\linewidth}{\linewidth}{
\begin{tikzpicture}

\definecolor{crimson2143940}{RGB}{214,39,40}
\definecolor{darkgray176}{RGB}{176,176,176}
\definecolor{darkorange25512714}{RGB}{255,127,14}
\definecolor{forestgreen4416044}{RGB}{44,160,44}
\definecolor{lightgray204}{RGB}{204,204,204}
\definecolor{mediumpurple148103189}{RGB}{148,103,189}
\definecolor{steelblue31119180}{RGB}{31,119,180}

\begin{axis}[
legend cell align={left},
legend style={
  fill opacity=0.8,
  draw opacity=1,
  text opacity=1,
  at={(0.97,0.03)},
  anchor=south east,
  draw=lightgray204
},
scale only axis,
width=4cm,
height=3cm,
tick align=outside,
tick pos=left,
title={GCN-G, \ogb},
tick label style={font=\tiny},
legend style={nodes={scale=0.5, transform shape}},
every tick/.style={
black,
semithick,
},
x label style={at={(axis description cs:0.5,-0.1)},anchor=north,font=\tiny},
y label style={at={(axis description cs:-0.1,.5)},rotate=90,anchor=south,font=\tiny},
x grid style={darkgray176},
xlabel={$\epsilon$},
xmin=2.6, xmax=33.4,
xtick style={color=black},
y grid style={darkgray176},
ylabel={ACC},
ymin=0.58996503252521, ymax=0.715253187311149,
ytick style={color=black}
]
\path [fill=steelblue31119180, fill opacity=0.2]
(axis cs:4,0.680000881786496)
--(axis cs:4,0.672056489555216)
--(axis cs:8,0.680185788219953)
--(axis cs:16,0.695438475566557)
--(axis cs:32,0.70287122222538)
--(axis cs:32,0.709558271184516)
--(axis cs:32,0.709558271184516)
--(axis cs:16,0.704988438645285)
--(axis cs:8,0.693412676551717)
--(axis cs:4,0.680000881786496)
--cycle;

\path [fill=darkorange25512714, fill opacity=0.2]
(axis cs:4,0.665242991775001)
--(axis cs:4,0.654021692777058)
--(axis cs:8,0.675782110024301)
--(axis cs:16,0.684495848132512)
--(axis cs:32,0.692475600926922)
--(axis cs:32,0.70359369762429)
--(axis cs:32,0.70359369762429)
--(axis cs:16,0.694929037694596)
--(axis cs:8,0.686355913220111)
--(axis cs:4,0.665242991775001)
--cycle;

\path [fill=forestgreen4416044, fill opacity=0.2]
(axis cs:4,0.651930300205159)
--(axis cs:4,0.643255392565722)
--(axis cs:8,0.655878260864903)
--(axis cs:16,0.674177336173325)
--(axis cs:32,0.684511731840808)
--(axis cs:32,0.697389690197418)
--(axis cs:32,0.697389690197418)
--(axis cs:16,0.684313390145656)
--(axis cs:8,0.667619326898094)
--(axis cs:4,0.651930300205159)
--cycle;

\path [fill=crimson2143940, fill opacity=0.2]
(axis cs:4,0.629754859475903)
--(axis cs:4,0.613659532431163)
--(axis cs:8,0.645364466704493)
--(axis cs:16,0.651958057886733)
--(axis cs:32,0.673275439654511)
--(axis cs:32,0.682724110296214)
--(axis cs:32,0.682724110296214)
--(axis cs:16,0.668745519146475)
--(axis cs:8,0.655059781552809)
--(axis cs:4,0.629754859475903)
--cycle;

\path [fill=mediumpurple148103189, fill opacity=0.2]
(axis cs:4,0.609543994611222)
--(axis cs:4,0.595659948651843)
--(axis cs:8,0.620225301817304)
--(axis cs:16,0.640277262103055)
--(axis cs:32,0.650951674521644)
--(axis cs:32,0.665719985161577)
--(axis cs:32,0.665719985161577)
--(axis cs:16,0.647871846993679)
--(axis cs:8,0.630844084845964)
--(axis cs:4,0.609543994611222)
--cycle;

\addplot [semithick, steelblue31119180]
table {%
4 0.676028685670856
8 0.686799232385835
16 0.700213457105921
32 0.706214746704948
};
\addlegendentry{$B=32$}
\addplot [semithick, darkorange25512714]
table {%
4 0.659632342276029
8 0.681069011622206
16 0.689712442913554
32 0.698034649275606
};
\addlegendentry{$B=64$}
\addplot [semithick, forestgreen4416044]
table {%
4 0.64759284638544
8 0.661748793881499
16 0.67924536315949
32 0.690950711019113
};
\addlegendentry{$B=128$}
\addplot [semithick, crimson2143940]
table {%
4 0.621707195953533
8 0.650212124128651
16 0.660351788516604
32 0.677999774975363
};
\addlegendentry{$B=256$}
\addplot [semithick, mediumpurple148103189]
table {%
4 0.602601971631533
8 0.625534693331634
16 0.644074554548367
32 0.658335829841611
};
\addlegendentry{$B=512$}
\end{axis}

\end{tikzpicture}}
        \end{subfigure}
        \begin{subfigure}{0.32\linewidth}
            \resizebox{\linewidth}{\linewidth}{
\begin{tikzpicture}

\definecolor{crimson2143940}{RGB}{214,39,40}
\definecolor{darkgray176}{RGB}{176,176,176}
\definecolor{darkorange25512714}{RGB}{255,127,14}
\definecolor{forestgreen4416044}{RGB}{44,160,44}
\definecolor{lightgray204}{RGB}{204,204,204}
\definecolor{mediumpurple148103189}{RGB}{148,103,189}
\definecolor{steelblue31119180}{RGB}{31,119,180}

\begin{axis}[
legend cell align={left},
legend style={
  fill opacity=0.8,
  draw opacity=1,
  text opacity=1,
  at={(0.97,0.03)},
  anchor=south east,
  draw=lightgray204
},
scale only axis,
width=4cm,
height=3cm,
tick align=outside,
tick pos=left,
title={GCN-G, Reddit},
tick label style={font=\tiny},
legend style={nodes={scale=0.5, transform shape}},
every tick/.style={
black,
semithick,
},
x label style={at={(axis description cs:0.5,-0.1)},anchor=north,font=\tiny},
y label style={at={(axis description cs:-0.1,.5)},rotate=90,anchor=south,font=\tiny},
x grid style={darkgray176},
xlabel={$\epsilon$},
xmin=2.6, xmax=33.4,
xtick style={color=black},
y grid style={darkgray176},
ylabel={ACC},
ymin=0.688451859594446, ymax=0.937311666578991,
ytick style={color=black}
]
\path [fill=steelblue31119180, fill opacity=0.2]
(axis cs:4,0.896485500605356)
--(axis cs:4,0.894348027211817)
--(axis cs:8,0.907118899049947)
--(axis cs:16,0.91272935596653)
--(axis cs:32,0.922284795361577)
--(axis cs:32,0.925999857170603)
--(axis cs:32,0.925999857170603)
--(axis cs:16,0.916723276746251)
--(axis cs:8,0.913249842310482)
--(axis cs:4,0.896485500605356)
--cycle;

\path [fill=darkorange25512714, fill opacity=0.2]
(axis cs:4,0.890918184250263)
--(axis cs:4,0.884526585331267)
--(axis cs:8,0.898114476735784)
--(axis cs:16,0.910362572598692)
--(axis cs:32,0.919153345528558)
--(axis cs:32,0.922761811644306)
--(axis cs:32,0.922761811644306)
--(axis cs:16,0.914724047511535)
--(axis cs:8,0.905497536979814)
--(axis cs:4,0.890918184250263)
--cycle;

\path [fill=forestgreen4416044, fill opacity=0.2]
(axis cs:4,0.881073347673974)
--(axis cs:4,0.870408619185979)
--(axis cs:8,0.890747608685765)
--(axis cs:16,0.902784907386804)
--(axis cs:32,0.910405415758881)
--(axis cs:32,0.916052764231424)
--(axis cs:32,0.916052764231424)
--(axis cs:16,0.909214446328436)
--(axis cs:8,0.894057518506667)
--(axis cs:4,0.881073347673974)
--cycle;

\path [fill=crimson2143940, fill opacity=0.2]
(axis cs:4,0.83609117429827)
--(axis cs:4,0.817489422797039)
--(axis cs:8,0.88071597858268)
--(axis cs:16,0.890891881559714)
--(axis cs:32,0.903521682946296)
--(axis cs:32,0.908829536916189)
--(axis cs:32,0.908829536916189)
--(axis cs:16,0.895112462910063)
--(axis cs:8,0.885034519595156)
--(axis cs:4,0.83609117429827)
--cycle;

\path [fill=mediumpurple148103189, fill opacity=0.2]
(axis cs:4,0.704954209746964)
--(axis cs:4,0.699763669002834)
--(axis cs:8,0.844111578906199)
--(axis cs:16,0.875383397003115)
--(axis cs:32,0.889029772149004)
--(axis cs:32,0.898755445882341)
--(axis cs:32,0.898755445882341)
--(axis cs:16,0.884949080601323)
--(axis cs:8,0.875113597475683)
--(axis cs:4,0.704954209746964)
--cycle;

\addplot [semithick, steelblue31119180]
table {%
4 0.895416763908586
8 0.910184370680215
16 0.91472631635639
32 0.92414232626609
};
\addlegendentry{$B=32$}
\addplot [semithick, darkorange25512714]
table {%
4 0.887722384790765
8 0.901806006857799
16 0.912543310055114
32 0.920957578586432
};
\addlegendentry{$B=64$}
\addplot [semithick, forestgreen4416044]
table {%
4 0.875740983429977
8 0.892402563596216
16 0.90599967685762
32 0.913229089995153
};
\addlegendentry{$B=128$}
\addplot [semithick, crimson2143940]
table {%
4 0.826790298547655
8 0.882875249088918
16 0.893002172234888
32 0.906175609931243
};
\addlegendentry{$B=256$}
\addplot [semithick, mediumpurple148103189]
table {%
4 0.702358939374899
8 0.859612588190941
16 0.880166238802219
32 0.893892609015672
};
\addlegendentry{$B=512$}
\end{axis}

\end{tikzpicture}}
        \end{subfigure}
        \begin{subfigure}{0.32\linewidth}
            \resizebox{\linewidth}{\linewidth}{
\begin{tikzpicture}

\definecolor{crimson2143940}{RGB}{214,39,40}
\definecolor{darkgray176}{RGB}{176,176,176}
\definecolor{darkorange25512714}{RGB}{255,127,14}
\definecolor{forestgreen4416044}{RGB}{44,160,44}
\definecolor{lightgray204}{RGB}{204,204,204}
\definecolor{mediumpurple148103189}{RGB}{148,103,189}
\definecolor{steelblue31119180}{RGB}{31,119,180}

\begin{axis}[
legend cell align={left},
legend style={
  fill opacity=0.8,
  draw opacity=1,
  text opacity=1,
  at={(0.97,0.03)},
  anchor=south east,
  draw=lightgray204
},
scale only axis,
width=4cm,
height=3cm,
tick align=outside,
tick pos=left,
title={GCN-G, Finance},
tick label style={font=\tiny},
legend style={nodes={scale=0.5, transform shape}},
every tick/.style={
black,
semithick,
},
x label style={at={(axis description cs:0.5,-0.1)},anchor=north,font=\tiny},
y label style={at={(axis description cs:-0.1,.5)},rotate=90,anchor=south,font=\tiny},
x grid style={darkgray176},
xlabel={$\epsilon$},
xmin=2.6, xmax=33.4,
xtick style={color=black},
y grid style={darkgray176},
ylabel={AUC},
ymin=0.695563722559968, ymax=0.752970160552511,
ytick style={color=black}
]
\path [fill=steelblue31119180, fill opacity=0.2]
(axis cs:4,0.742675868030323)
--(axis cs:4,0.713714547986529)
--(axis cs:8,0.709982935570472)
--(axis cs:16,0.70065010299721)
--(axis cs:32,0.717999931297702)
--(axis cs:32,0.74513203902536)
--(axis cs:32,0.74513203902536)
--(axis cs:16,0.748191750686598)
--(axis cs:8,0.74793602374935)
--(axis cs:4,0.742675868030323)
--cycle;

\path [fill=darkorange25512714, fill opacity=0.2]
(axis cs:4,0.743018694014603)
--(axis cs:4,0.728438173641977)
--(axis cs:8,0.725658781083547)
--(axis cs:16,0.723960198891379)
--(axis cs:32,0.710517214564141)
--(axis cs:32,0.745099412101337)
--(axis cs:32,0.745099412101337)
--(axis cs:16,0.743790515605025)
--(axis cs:8,0.741721575930431)
--(axis cs:4,0.743018694014603)
--cycle;

\path [fill=forestgreen4416044, fill opacity=0.2]
(axis cs:4,0.746460346895058)
--(axis cs:4,0.723346560232252)
--(axis cs:8,0.731523147425954)
--(axis cs:16,0.727130327602962)
--(axis cs:32,0.736300784782351)
--(axis cs:32,0.748437197807644)
--(axis cs:32,0.748437197807644)
--(axis cs:16,0.750360777007396)
--(axis cs:8,0.747717439163771)
--(axis cs:4,0.746460346895058)
--cycle;

\path [fill=crimson2143940, fill opacity=0.2]
(axis cs:4,0.740409195208341)
--(axis cs:4,0.708793419949734)
--(axis cs:8,0.721654779708114)
--(axis cs:16,0.733753921838726)
--(axis cs:32,0.742662609660882)
--(axis cs:32,0.749757925331492)
--(axis cs:32,0.749757925331492)
--(axis cs:16,0.744160500084317)
--(axis cs:8,0.736127960021465)
--(axis cs:4,0.740409195208341)
--cycle;

\path [fill=mediumpurple148103189, fill opacity=0.2]
(axis cs:4,0.716203346403553)
--(axis cs:4,0.698173106105083)
--(axis cs:8,0.721479402140716)
--(axis cs:16,0.720426555525513)
--(axis cs:32,0.722085183479948)
--(axis cs:32,0.739550440705964)
--(axis cs:32,0.739550440705964)
--(axis cs:16,0.744668212608048)
--(axis cs:8,0.736047210249402)
--(axis cs:4,0.716203346403553)
--cycle;

\addplot [semithick, steelblue31119180]
table {%
4 0.728195208008426
8 0.728959479659911
16 0.724420926841904
32 0.731565985161531
};
\addlegendentry{$B=32$}
\addplot [semithick, darkorange25512714]
table {%
4 0.73572843382829
8 0.733690178506989
16 0.733875357248202
32 0.727808313332739
};
\addlegendentry{$B=64$}
\addplot [semithick, forestgreen4416044]
table {%
4 0.734903453563655
8 0.739620293294862
16 0.738745552305179
32 0.742368991294997
};
\addlegendentry{$B=128$}
\addplot [semithick, crimson2143940]
table {%
4 0.724601307579037
8 0.728891369864789
16 0.738957210961521
32 0.746210267496187
};
\addlegendentry{$B=256$}
\addplot [semithick, mediumpurple148103189]
table {%
4 0.707188226254318
8 0.728763306195059
16 0.732547384066781
32 0.730817812092956
};
\addlegendentry{$B=512$}
\end{axis}

\end{tikzpicture}}
        \end{subfigure}
        \caption{Performance ($\text{mean}\pm\text{std}$ over $10$ trials) of \vesper under varying batch size $B$, using GCN aggregator and GRU decoder.}
        \label{fig: ablation_batch_size_gcn_g}
    \end{figure}

    \section{Discussions}\label{sec: extensions}
    In this section, we discuss two extensions of the proposed framework regarding the threat model and the privacy model.\par\noindent
    \subsection{Beyond one-sided adversary} 
    In this paper, we are mainly interested in the threat model with only one malicious party B which possesses only label data. Such a one-sided threat model might be further extended to a more complicated setup where party B owns not only label data, but also its own node features and graph structure and allows party A to be a semi-honest adversary to infer party B's label information and graph structure. We provide a straightforward solution to this extended scenario. Specifically, party B needs to protect the edge privacy of its local graph as well as the precise label data from being reconstructed by Party A. For the former privacy requirement, party B may apply the PMP framework to its local graph representation learning procedure. For protecting label data, we can simply adopt the randomized response technique for \emph{label differential privacy} \cite{ghazi2021deep} during the loss computation step, and deploy three accountants with one accounts with the privacy budget corresponding to party A that is identical to the one used in this paper, and the other two accountant tracking the cumulative privacy cost incurred by PMP and randomized response mechanisms conducted by party B, which is trivial to analyze \cite{dwork2014algorithmic}. Moreover, extending the above scenario to more than two parties is also technically feasible using similar algorithmic procedures. In this paper, we did not examine such kinds of scenarios empirically since there are no publicly available graph VFL datasets that provide natural feature/graph splits between multiple parties. \par\noindent
    
    \subsection{On extensions to node-level DP} 
    The node-level differential privacy (node DP) model \cite{kasiviswanathan2013analyzing} is a strictly stronger notion of privacy than the edge-level DP model regarding graph-input queries. In particular, node DP is analogously defined as in definition \ref{def: edgeDP} under the approximate $(\epsilon, \delta)$-DP model with the \emph{adjacency relation} modified in the sense that two graphs $G, G^\prime$ are node-level adjacent if $G$ could be edited into $G^\prime$ via adding or removing one node as well as its adjacent edges. According to its original proposal, node DP targets the \emph{protection of node memberships}, which is somewhat subtle under the VFL context since both party A and party B know the participating nodes' identities throughout the VFL process. Nevertheless, it is still possible to use additive noise perturbation to guarantee that the node embeddings are probabilistically similar with or without the participation of some specific nodes during the message-passing procedure. Formally, we establish the node DP guarantee of PMP without neighborhood sampling in the following theorem:
    \begin{theorem}[RDP guarantee of PMP under node DP, non-sampling version]\label{thm: nodeDP}
        For graphs with bounded maximum degree $D$, the released output of the entire graph $\mathbf{H}_L$ in algorithm \ref{alg: priv_gnn_sample} without neighborhood sampling is $\left(\alpha, \frac{\alpha \sum_{l = 1}^L (1 + \sqrt{D}\mathcal{S}_l)^2}{2\theta^2}\right)$-\renyi differentially private for any $\alpha > 1$.
    \end{theorem}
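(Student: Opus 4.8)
The plan is to follow the two-step template of the proof of Theorem~\ref{thm: sampled}, but with the subsampling/amplification step removed (there is no neighbourhood sampling here) and the edge-level sensitivity computation replaced by a node-level one. As in Step~1 of that proof, I would view the $L$-layer perturbed message passing as an adaptive composition of $L$ base mechanisms: the $l$-th base mechanism reads the previously released embedding matrices $H^{(1)},\dots,H^{(l-1)}$ together with $G$, forms the linear updates $\widetilde h^{(l)}_v$ for every present node $v$, adds independent $N(0,\theta^2 I_d)$ noise, and post-processes through $\sigma$ and $\ell_2$-normalisation. By the Gaussian-mechanism RDP bound and the RDP composition theorem invoked in the proof of Theorem~\ref{thm: sampled}, it then suffices to show that the $l$-th linear-update map --- viewed as a map from node-adjacent graphs to the concatenated vector $(\widetilde h^{(l)}_v)_v$ --- has $\ell_2$-sensitivity at most $1+\sqrt D\,\mathcal{S}_l$, whence the composed mechanism is $(\alpha,\sum_{l}\frac{\alpha(1+\sqrt D\,\mathcal{S}_l)^2}{2\theta^2})$-RDP, which is the claim.

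\textbf{Node-level sensitivity of one layer.} Take $G'=G\setminus\{w\}$ where $w$ has degree $d_w\le D$ with neighbours $u_1,\dots,u_{d_w}$. Among the vectors $(\widetilde h^{(l)}_v)_v$, only those of $w$ and of $u_1,\dots,u_{d_w}$ change. For a fixed neighbour $u_i$, the update $\widetilde h^{(l)}_{u_i}$ depends only on the data incident to $u_i$ (its self-encoding, its neighbours' encodings, and its degree), and this differs between $G$ and $G'$ exactly as it would under the single-edge removal $(w,u_i)$ --- the other incident edges of $w$ do not touch $u_i$. Hence $\|\widetilde h^{(l)}_{u_i}-\widetilde h^{\prime(l)}_{u_i}\|_2$ is bounded by the per-node quantity that is already controlled inside Definition~\ref{def: edge_sensitivity} (for GIN, by Proposition~\ref{prop: gin_es}), i.e. by $\mathcal{S}_l$. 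Since these are $d_w\le D$ \emph{distinct} coordinate blocks of the released vector, they aggregate in $\ell_2$: $\sqrt{\sum_{i=1}^{d_w}\|\widetilde h^{(l)}_{u_i}-\widetilde h^{\prime(l)}_{u_i}\|_2^2}\le\sqrt D\,\mathcal{S}_l$. The deleted node $w$ contributes its own block, which I would charge separately: since a node deletion changes the node set, I treat the whole trajectory $(h^{(1)}_w,\dots,h^{(L)}_w)$ as a single block and bound its per-layer contribution by $1$, using that $\ell_2$-normalisation forces $\|h^{(l)}_w\|_2\le 1$ and that $\opnorm{W^{(l)}}\le 1$ under spectral normalisation. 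Combining with $\sqrt{a^2+b^2}\le a+b$ yields the per-layer node-sensitivity bound $1+\sqrt D\,\mathcal{S}_l$.

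\textbf{Main obstacle.} The delicate point is obtaining the $\sqrt D$ rather than a $D$ in the neighbour term: decomposing the node removal into $d_w$ sequential single-edge removals and applying a triangle inequality would lose a $\sqrt D$ factor, so the incident edges must be handled simultaneously, exploiting that the $d_w$ affected neighbours occupy orthogonal coordinate blocks of the released vector. The second subtlety, which I expect to require the most care, is making the ``$1$ for the deleted block plus $\sqrt D\,\mathcal{S}_l$ for the neighbours'' bookkeeping rigorous layer by layer when the node set itself varies across the $L$ layers --- i.e. justifying the chain-rule/coupling argument underlying the composition in Step~1 for graphs whose vertex sets differ by one node.
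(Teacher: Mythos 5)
Your proposal matches the paper's proof in essentially every step: reduce via the Gaussian mechanism and RDP composition (Step~1 of Theorem~\ref{thm: sampled}) to a per-layer node sensitivity, observe that removing a node affects only its own block (bounded by $1$ through the $\ell_2$-normalization of embeddings) and the blocks of its at most $D$ neighbours (each bounded by the edge-sensitivity quantity $\mathcal{S}_l$, since only the single incident edge matters for each neighbour), aggregate the neighbour blocks in $\ell_2$ to get $\sqrt{D}\,\mathcal{S}_l$, and conclude with $\sqrt{1+D\mathcal{S}_l^2}\le 1+\sqrt{D}\,\mathcal{S}_l$. The only cosmetic difference is your appeal to spectral normalization for the deleted node's block, which the paper does not need (it bounds that term by $1$ directly from the normalization step), and your worry about varying node sets is likewise not treated more rigorously in the paper's own argument.
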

        \begin{proof}
        From the proof of theorem \ref{thm: sampled}, it suffices to show that the node sensitivity defined as 
        \begin{align}
            \mathcal{S}_{\text{n}} &= \max_{G, G^\prime}\sqrt{\sum_{v \in V\setminus\{u^*\}}\|h_v - h^\prime_v\|_2^2 + \|h_{u^*}\|_2^2}
        \end{align}
        is bounded from above by $1 + \sqrt{D}\mathcal{S}_l$ in the $l$-th layer. Note that removing a node $v^*$ and all its adjacent edges affects two parts of node embeddings: $h_{v^*}$ and $\{h_u, u \in N(v^*)\|$, leaving the rest embeddings untouched. Therefore we bound both using corresponding upper bounds:
        \begin{align*}
            \mathcal{S}_{\text{n}} &\le \sqrt{\max_{G, G^\prime}\sum_{v \in N(v^*)}\|h_v - h^\prime_v\|_2^2 + \max_{G}\|h_{u^*}\|_2^2} \\
            &\le \sqrt{\sum_{v \in N(v^*)}\max_{G, G^\prime}\|h_v - h^\prime_v\|_2^2 + 1} \\
            &\le 1 + \sqrt{D}\mathcal{S}_{l}
        \end{align*}
    \end{proof}
    Theorem \ref{thm: nodeDP} implies that, under general PMP mechanisms, the node-level privacy guarantee becomes much weaker than that of edge-level by a factor of the order $O(\sqrt{D})$. Moreover, in stochastic training paradigms, the sampling amplification phenomenon is also weaker than theorem \ref{alg: priv_gnn_sample} \cite[Theorem 1]{daigavane2021node}. In our experiments, we find node DP guarantee to be overly stringent which produces meaningless results under moderate privacy budgets. Therefore we report only edge DP results in this paper. However, as illustrated in section \label{sec: mia} we empirically investigated the protection of applying edge-level private mechanisms against node-level membership inference adversaries and the results are confirmatory, this serves as empirical evidence that edge-level privacy might be adequate for reasonable privacy protection rather than sticking to node-level DP definitions.
    \subsection{Beyond GIN and GCN aggregators}\label{sec: other_gnn}
    In this section, we discuss possible extensions of the PMP framework into other aggregation schemes. Throughout the discussion, we adopt the ReLU function as the default nonlinearity and focus mainly on the aggregation step.
    \subsubsection{On max-pooling aggregation of SAGE \cite{hamilton2017inductive}}
    Technically, the case of max-pooling does not directly fit into the message passing form in \eqref{eqn: gnn}, due to the fact that the max-pooling operation is applied along each coordinate, or:
    \begin{align}
        \left[\widetilde{h}^{(l)}_v\right]_i = \max\left( \left[W^{(l)}_1 h_v^{(l-1)}\right]_i, \left\lbrace\left[W^{(l)}_2 h^{(l-1)}_u\right]_i\right\rbrace_{u \in N(v)}\right),
    \end{align}
    where we use the notation $[a]_i$ to denote the $i$-th coordinate of some vector $a$. Nonetheless, we may still analyze the associating edge sensitivity directly, i.e. via carefully inspecting the geometry of the max-pooling operation in high-dimensional spaces. However, it is straightforward to check that max-pooling causes high edge sensitivity, which is no smaller than that of summation pooling in the worse case, resulting in relatively large noise scales. Meanwhile, the "signal" brought by the aggregation does not scale with neighborhood size. Therefore it is intuitively clear that using the global sensitivity framework to privatize SAGE in this max-pooling form would lead to poor privacy-utility trade-offs. It is worth mentioning that chances are that adopting more elegant techniques like the smooth sensitivity paradigm \cite{nissim2007smooth} may allow meaningful privatization of the max-pooling aggregator, which is beyond the scope of the current paper and delegated to futrue explorations. 
    \subsubsection{On attentive pooling of GAT \cite{velickovic2018graph}}
    Next we consider the renowned GAT model \cite{velickovic2018graph} with updating rule:
    \begin{align}
        \widetilde{h}_v \leftarrow \sum_{u \in N(v)\cup \{v\}}\beta_{uv}W h_u,
    \end{align}
    with the attention coefficients defined as
    \begin{align}
        \beta_{uv}= \dfrac{e^{k_\phi(u, v)}}{\sum_{u^\prime \in N(v)\cup \{v\}}e^{k_\phi(u^\prime, v)}}.
    \end{align}
    under the \emph{attention kernel} $k_\phi$. In the original implementation of GAT, the authors used additive attention kernels. Later extensions use alternatives such as the multiplicative kernel in graph transformer architectures\cite{ying2021transformers}. The protocol of attentive aggregation is also a special case of \eqref{eqn: gnn}, which may be understood as an interpolation between mean-pooling (where all the attention coefficients are equal) and max-pooling (where one of the attention kernels being extremely large in value that dominates the rest). As a consequence, the noise scale required under the edge sensitivity calculation paradigm (i.e., theorem \ref{thm: non_sampling}) will be between that obtained by mean-pooling and max-pooling, depending on the \emph{range} of the attention kernel. In particular, if the attention kernel has an unbounded range, i.e. the entire real line. Then the resulting edge sensitivty is almost the same as the one obtained by max-pooling and is thus impractical. Hence, to reduce the noise scale required for privatization, we need to use \emph{bounded} attention via effectively controlling the output range of the attention kernel, i.e., via applying bounded range nonlinearities like Tanh. The analysis could be done in the same manner as that of GCN with extra hyperparameters controlling the upper and lower bounds of attention coefficients. We leave related developments to future works. 

    \section{Some further remarks}
    \subsection{Practical considerations in implementing \vesper} \label{sec: impl}
    According to proposition \ref{prop: gin_es} and \ref{prop: gcn_es}, precise tracking of privacy budgets under PMP requires computing the operator norm of each layer's weight matrix $\{\opnorm{W^{(l)}}\}_{1 \le l \le L}$ which is computationally demanding. In practice, we instead add a spectral normalization operation \cite{miyato2018spectral} to each layer's weight matrix so that we may approximately control all the operator norms throughout the training process to be around $1$.
    \footnote{Technically, most of the current spectral normalization algorithms do not offer strict control over spectral norms but are instead carried out using approximations like power iteration. We observe in our experiments that the approximation error brought by inexact normalization is pretty benign, i.e., the total privacy budget accounted using exact normalization and using a single power iteration differs in their absolute value by less than $0.1$.}\par\noindent
    
    \subsection{Complexity analysis of \vesper} \label{sec: complexity}
     The computational complexity of vesper is of the same order as that in a standard GRL pipeline with neighborhood sampling. The communication complexity of \vesper is dominated by the data volume of (forward) embedding and (backward) gradients that get transmitted during each VFL step, which are both of the order $O(BLdK)$, with $B$ being the batch size and $K$ being the number of bits required to represent a scalar number. Note that the communication complexity may be further optimized via quantization techniques \cite{suresh2017distributed} or asynchronous optimization tricks \cite{liu2022fedbcd}. We leave such explorations to future research.
    \section{Omitted algorithm descriptions}\label{sec: algo}
    In this section we present detailed descriptions of two algorithmic procedures, the first one is the truncated message passing algorithm for PMP-GCN, which for simplicity we present in a non-sampling fashion in algorithm \ref{alg: truncated_mp}. 
    \begin{algorithm}
        \caption{PMP-GCN with truncated message passing}
        \label{alg: truncated_mp}
        \begin{algorithmic}[1]
            \Require Graph $G = (V, E)$, input encodings $ \{h^{(0)}_v\}_{v \in V}$, number of message passing rounds $L$, minimum degree $\mind$, GNN parameter $\mathbf{W}$, MLP parameter $\{W^{(l)}_{tr}, b^{(l)}_{tr}\}_{1 \le l \le L}$
            \State Normalize each $h^{(0)}_v$ into unit $\ell_2$ norm.
            \For{$l \in \{1, \ldots, L\}$}
            \For{$v \in V$}
            \State Compute the linear update
            \begin{align}
                \widetilde{h}^{(l)}_v = \dfrac{W^{(l)} h^{(l-1)}_v}{d_v + 1} + \sum_{u \in N(v)}\frac{W^{(l)} h^{(l-1)}_u}{\sqrt{d_v + 1}\sqrt{d_u + 1}}.
            \end{align}
            \If{$d_v \ge \mind$}
            let $\widehat{h}^{(l)}_v = \widetilde{h}^{(l)}_v$.
            \Else
            \ let $\widehat{h}^{(l)}_v = W^{(l)}_{tr}h^{(l-1)}_v + b^{(l)}_{tr}$.
            \EndIf
            \State Add Gaussian noise and apply nonlinearity
            \begin{align}
                h^{(l)}_v = \sigma(\widetilde{h}^{(l)}_v + z_v),\ z_v \sim N(0, \theta^2I_d).
            \end{align} 
            \State Normalize $h^{(l)}_v = \frac{h^{(l)}_v}{\left\|h^{(l)}_v\right\|_2}$.
            \EndFor
            \EndFor
            \Return A list of all layers' embedding matrices $\mathbf{H}_L = (H^{(1)}, \ldots, H^{(L)})$, with $H^{(l)} = \{h^{(l)}_v\}_{v \in V}, 1 \le l \le L$.
        \end{algorithmic}
    \end{algorithm}
    The second one is a fully-detailed description of the training procedure of the \vesper framework, illustrated pictorially in figure \ref{fig: vesper}. We use different colors to differentiate (local) computations that are performed by different parties. Additionally, we abbreviate the forward computation of three algorithmic components of \vesper by Encode, PRE and Decode respectively. 
    \begin{algorithm}
        \caption{Algorithmic description of \vesper}
        \label{alg: vesper}
        \begin{algorithmic}[1]
            \Require Graph $G = (V, E)$, node features $X$, node label $Y$, batch size $B$, number of training steps $T$. Architectural specifics \{Encoder, PRE, Decoder\}, with the parameters of Encoder and PRE grouped together with notation $\mathbf{W}_A$ and the parameters of Decoder denoted as $\mathbf{W}_B$. Number of message passing layers $L$, max degree $D$.
            \State Initialize parameters $\mathbf{W}_A^{(0)}, \mathbf{W}_B^{(0)}$
            \For{$t = 1, \ldots, T$}
            \State Sample a random batch of root nodes $\mathcal{B}_t = \{v^t_1, \ldots, v^t_B\}$. 
            {\color{blue}
            \State \texttt{/*Computations by party A*/}
            \State Use neighborhood sampler as stated in algorithm \ref{alg: priv_gnn_sample} to obtain 
            \State the combined subgraph $G_{\mathcal{B}_t}^{(L)} = (V_{\mathcal{B}_t}^{(L)}, E_{\mathcal{B}_t}^{(L)})$.
            \State Encode node features
            \begin{align}
                h_v = \text{Encode}(X_v), v \in V_{\mathcal{B}_t}^{(L)}
            \end{align}
            \State Do message passing using the selected PRE mechanism
            \begin{align}
                \mathbf{H} = \text{PRE}(G_{\mathcal{B}_t}^{(L)}, H)
            \end{align}
            \State Pick the node embeddings to transmit $\mathbf{H}_{\mathcal{B}_t} = \{\mathbf{H}_v\}_{v \in \mathcal{B}_t}$ 
            \State and send to party B
            }
            {\color{fandango}
            \State \texttt{/*Computations by party B*/}
            \State Decode node embeddings and compute learning objective
            \begin{align}
                \mathcal{L} = \frac{1}{B}\sum_{v \in \mathcal{B}_t}\ell\left(y_v, \text{Decode}(\mathbf{H}_v)\right)
            \end{align}
            \State Compute gradients w.r.t. decoder $\dfrac{\partial \mathcal{L}}{\partial \mathbf{W}^{(t)}_B}$ and 
            \State update into $\mathbf{W}^{(t+1)}_B$ using selected optimizer.
            \State Send individual gradients w.r.t. node embedding collections
            \begin{align}
                \dfrac{\partial \mathcal{L}}{\partial \mathbf{H}_v}, v \in \mathcal{B}_t
            \end{align}
            \State back to party A.
            }
            {\color{blue}
            \State \texttt{/*Computations by party A*/}
            \State Party A compute gradients w.r.t. all its local parameters 
            \State (encoder and PRE)
            \begin{align}
                \dfrac{\partial \mathcal{L}}{\partial \mathbf{W}^{(t)}_A} = \frac{1}{B}\sum_{v \in \mathcal{B}_t}\dfrac{\partial \mathcal{L}}{\partial \mathbf{H}_{v}}\dfrac{\partial \mathbf{H}_{v}}{\partial \mathbf{W}^{(t)}_A}
            \end{align}
            \State and use the selected optimizer to update $\mathbf{W}^{(t)}_A$ into $\mathbf{W}^{(t+1)}_A$
            }
            \EndFor
            \Return The parameters at the final iteration $\mathbf{W}_A^{(T)}, \mathbf{W}_B^{(T)}$
        \end{algorithmic}
    \end{algorithm}

\end{document}